%% file: colm2025_conference.tex
\documentclass{article} 
\PassOptionsToPackage{table}{xcolor}
\usepackage{colm2025_conference}

\usepackage{microtype}
\usepackage{hyperref}
\usepackage{url}
\usepackage{booktabs}
\usepackage{multirow}
\usepackage{tabularx}

\usepackage{amssymb} %
\usepackage{amsmath} %
\usepackage{amsfonts} %
\usepackage{amsthm} %

\input{math_commands}

\usepackage{lineno}
\usepackage{titlesec}
\usepackage[most]{tcolorbox}
\usepackage{caption}

\definecolor{darkblue}{rgb}{0, 0, 0.5}
\hypersetup{colorlinks=true, citecolor=darkblue, linkcolor=darkblue, urlcolor=darkblue}
\usepackage{graphicx}
\usepackage{subcaption}
\usepackage{colortbl}
\usepackage{amsmath}
\usepackage{siunitx}
\usepackage{pifont}    

\definecolor{richpurple}{RGB}{75,46,131}
\definecolor{beige}{RGB}{245,245,220}

\newcommand{\purplecomic}[1]{%
  {\color{richpurple}\selectfont #1}%
}
\newcommand\eat[1]{}

\newcommand{\blackcomic}[1]{%
  {\color{black}\selectfont #1}%
}
\renewcommand{\thefootnote}{\fnsymbol{footnote}}

\tcbuselibrary{theorems}
\newcounter{theoremcounter}
\newcounter{lemmacounter}
\newcounter{remarkcounter}
\newcounter{definitioncounter}

\newtcbtheorem[use counter=theoremcounter]{theorem}{Theorem}%
{colback=purple!5!white,colframe=richpurple!80!black,fonttitle=\bfseries}{thm}
\newtcbtheorem[use counter=lemmacounter]{lemma}{Lemma}%
{colback=purple!5!white,colframe=richpurple!80!black,fonttitle=\bfseries}{lem}
\newtcbtheorem[use counter=remarkcounter]{remark}{Remark}%
{colback=beige,colframe=brown!50!black,fonttitle=\bfseries}{rem}
\newtcbtheorem[use counter=definitioncounter]{definition}{Definition}%
{colback=purple!5!white,colframe=richpurple!80!black,fonttitle=\bfseries}{def}

\title{
  \purplecomic{\textbf{SeeUPO}}: 
  \blackcomic{Sequence-Level Agentic-RL with Convergence Guarantees}
}

\author{
Tianyi Hu,
~Qingxu Fu, 
~Yanxi Chen, 
~Zhaoyang Liu\footnotemark[2], 
~Bolin Ding\footnotemark[2] \\
[1em]
Tongyi Lab\includegraphics[height=12pt]{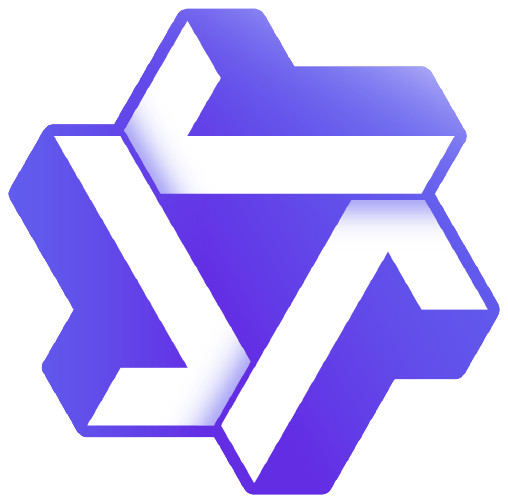}, Alibaba Group \\
[1em]
\texttt{\{wenju.hty, fuqingxu.fqx, chenyanxi.cyx, jingmu.lzy, bolin.ding\}@alibaba-inc.com}
}

\begin{document}


\maketitle

\begingroup
  \renewcommand\thefootnote{}%
  \footnotetext{%
    \begin{tabular}{@{}l@{\hspace{0.4em}}l@{}}
      $^{\dagger}$ & Corresponding authors.
    \end{tabular}%
  }%
\endgroup
\phantomsection

\begin{tcolorbox}[
  colback=blue!5!white,
  colframe=richpurple!80!black,
  boxrule=1.5pt,
  arc=3mm,
  left=4mm,
  right=4mm,
  top=3mm,
  bottom=3mm,
  fonttitle=\bfseries,
  title=\textbf{Abstract}
]

Reinforcement learning (RL) has emerged as the predominant paradigm for training large language model (LLM)-based AI agents. However, existing backbone RL algorithms lack verified convergence guarantees in agentic scenarios, especially in multi-turn settings, which can lead to training instability and failure to converge to optimal policies. 

\vspace{0.8em}
In this paper, we systematically analyze how different combinations of policy update mechanisms and advantage estimation methods affect convergence properties. We find that REINFORCE with Group Relative Advantage Estimation (GRAE) can converge to the globally optimal under undiscounted conditions, but the combination of PPO \& GRAE breaks PPO's original monotonic improvement property. Furthermore, we demonstrate that mainstream backbone RL algorithms cannot simultaneously achieve both critic-free and convergence guarantees in multi-turn scenarios. 

\vspace{0.8em}
To address this, we propose \textcolor{richpurple}{\textbf{SeeUPO}} (\textbf{\underline{Se}}quence-level S\textbf{\underline{e}}quential \textbf{\underline{U}}pdate \textbf{\underline{P}}olicy \textbf{\underline{O}}ptimization), a critic-free approach with convergence guarantees for multi-turn interactions. SeeUPO models multi-turn interaction as sequentially executed multi-agent bandit problems. Through turn-by-turn sequential policy updates in reverse execution order, it ensures monotonic improvement and convergence to global optimal solution via backward induction.

\vspace{0.8em}
Experiments on AppWorld and BFCL v4 demonstrate SeeUPO's substantial improvements over existing backbone algorithms: relative gains of 43.3\%--54.6\% on Qwen3-14B and 24.1\%--41.9\% on Qwen2.5-14B (averaged across benchmarks), along with superior training stability.

\end{tcolorbox}

\begin{figure}[htbp]
  \centering
  \vspace{-0.5em}
  \includegraphics[width=1.0\textwidth]{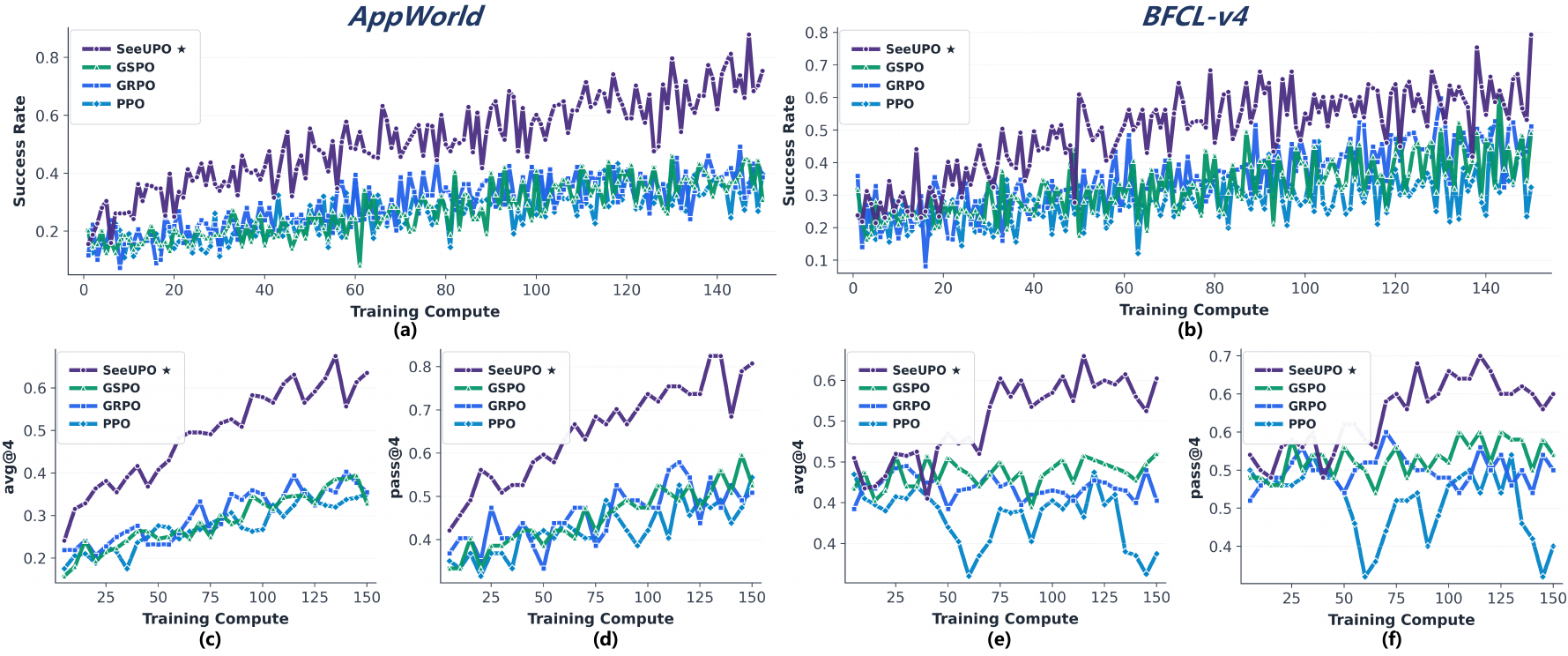}
  \captionsetup{width=0.95\textwidth}
  \caption{Performance comparison of training Qwen3-14B model on the AppWorld and BFCL-v4 benchmarks. (a)-(b) show training curves, (c)-(f) show test curves. SeeUPO algorithm demonstrates significantly stronger training stability and optimal performance compared to other backbone RL algorithms.}
  \label{fig:begin}
\end{figure}

\newpage

\input{sections/SeeUPO_Main/intro}

\input{sections/SeeUPO_Main/preliminaries}

\input{sections/SeeUPO_Main/analysis}

\input{sections/SeeUPO_Main/method}

\input{sections/SeeUPO_Main/experiments}

\input{sections/SeeUPO_Main/conclusion}




\bibliography{colm2025_conference}
\bibliographystyle{colm2025_conference}

\appendix

\input{sections/SeeUPO_Main/appendix}

\end{document}

%% file: math_commands.tex
\usepackage{amsmath,amsfonts,bm}

\def\eqref#1{equation~\ref{#1}}

\def\1{\bm{1}}

\DeclareMathAlphabet{\mathsfit}{\encodingdefault}{\sfdefault}{m}{sl}
\SetMathAlphabet{\mathsfit}{bold}{\encodingdefault}{\sfdefault}{bx}{n}

\DeclareMathOperator*{\argmax}{arg\,max}

%% file: sections/SeeUPO_Main/intro.tex
\section{Introduction}
\label{sec:intro}

The rapid advancement of large language models (LLMs)~\citep{liu2024deepseek, yang2025qwen3} has catalyzed the emergence of autonomous AI agents capable of executing complex tasks through tool use and multi-turn interactions with diverse environments. These AI systems have demonstrated remarkable capabilities across a wide spectrum of real-world applications, including web navigation~\citep{agentbench2023}, software development~\citep{trivedi2024appworld}, and interactive tool-augmented environments~\citep{patil2025bfcl}. The significant potential of agentic AI has motivated extensive research efforts, making the development of robust, scalable training methods a key research focus.

Within this landscape, reinforcement learning (RL) has emerged as the predominant paradigm for training LLM-based agents through interaction and feedback. A growing body of work has explored RL-driven agent training across various dimensions: \textbf{(interaction scope)} from single-turn task reasoning to multi-turn interactive planning~\citep{chai2025rlfactory, wang2025ragen, xi2025agentgym, jiang2025aligning}; \textbf{(capability integration)} from static capability modules to unified policy optimization, where RL transforms planning, tool use, and memory into interdependent, trainable policies~\citep{qian2504toolrl, yan2025memory}; \textbf{(multimodal integration)} from single-modal text generation to embodied and multimodal perception~\citep{qi2025vln, feng2025video, li2025videochat}; and \textbf{(evolutionary mechanism)} from heuristic-based self-correction to internalized self-improvement, allowing agents to iteratively refine knowledge and strategies through RL-driven self-evolution~\citep{wang2025ragen, guan2025rstar, zhai2025agentevolver}.

Notably, while these methods span diverse research directions, they converge on a common foundation: a small set of backbone RL algorithms and their variants~\citep{srivastava2025technical}. For instance, \textbf{Proximal Policy Optimization (PPO)}~\citep{PPO} serves as the foundational algorithm for actor-critic RL~\citep{ouyang2022training}, where the canonical PPO implementation estimates advantage functions through Generalized Advantage Estimation (GAE)~\citep{GAE} and critic networks, and updates policy networks via proximal policy optimization. \textbf{RLOO}~\citep{ahmadian2024back} achieves a critic-free advantage estimation approach by sampling multiple within-group responses for the same query to compute group relative advantages, and performs sequence-level policy updates via REINFORCE. \textbf{Group Relative Policy Optimization (GRPO)}~\citep{grpo2024} combines the Group Relative Advantage Estimation (GRAE) insight of RLOO with PPO-style policy updates, becoming a representative algorithm that combines PPO with critic-free approaches. \textbf{Group Sequence Policy Optimization (GSPO)}~\citep{gspo2024} further refines GRPO with sequence-level mechanisms, becoming a representative of sequence-level RL algorithms.

However, \textit{the theoretical soundness of these backbone algorithms in agentic RL scenarios, particularly in multi-turn settings, remains an open question}~\citep{zhanglandscape}. To address this question, we categorize mainstream backbone RL algorithms from the perspectives of advantage estimation and policy update mechanisms, and analyze the convergence properties of different combinations in both single-turn and multi-turn scenarios. Specifically, we examine two types of advantage estimation methods: \textbf{Generalized Advantage Estimation (GAE)} (critic-dependent), and \textbf{Group Relative Advantage Estimation (GRAE)} (critic-free). We also examine two types of policy update mechanisms: \textbf{REINFORCE} (fully on-policy, based on Vanilla Policy Gradient), and \textbf{Proximal Policy Update (PPU)\footnote{To avoid confusion, in this paper we use \textbf{PPO} to refer to the \textit{complete algorithm} in the LLM-RL-Training domain, and \textbf{PPU} to specifically refer to the proximal \textit{policy update method} characterized by importance sampling and clipping mechanisms.}} (partially on-policy, based on Importance Sampling correction). We prove that the combination of GRAE and REINFORCE converges to the globally optimal solution only under undiscounted settings. We discover that the combination of GRAE and PPU breaks PPO's original monotonic improvement property in most cases, with convergence guarantees only in contextual bandit scenarios. Through comprehensive analysis, we find that \textit{mainstream backbone RL algorithms cannot simultaneously enjoy both critic-free operation and good monotonic improvement / convergence guarantees in multi-turn scenarios}.

To address these limitations, we propose \textbf{SeeUPO} (\textbf{S}equence-level S\textbf{e}quential \textbf{U}pdate \textbf{P}olicy \textbf{O}ptimization), a novel algorithm that \textbf{provides convergence guarantees for multi-turn agentic RL} while \textbf{maintaining the critic-free approach}. SeeUPO models multi-turn interaction problems as sequentially executed multi-agent bandit problems, where each turn is abstracted as a virtual agent. The core mechanism of SeeUPO is \textbf{turn-by-turn sequential policy updates in reverse execution order} ($T \to T{-}1 \to \cdots \to 1$). This reverse-order sequential update mechanism enables \textbf{backward induction}, where each turn optimizes against the already-updated optimal policies of subsequent turns, thereby achieving global optimality. We prove that SeeUPO inherits \textbf{monotonic improvement guarantees} from the HAML framework~\citep{HARL}, and further establish that the \textbf{reverse update order guarantees convergence to the globally optimal policy} in multi-turn contextual bandit settings. Through implicit turn-level credit assignment via advantage function decomposition, SeeUPO correctly attributes each turn's contribution to the global return, achieving stable and theoretically sound training in multi-turn scenarios.

We evaluate SeeUPO on two multi-turn agentic benchmarks, AppWorld~\citep{trivedi2024appworld} and BFCL v4~\citep{patil2025bfcl}, demonstrating substantial improvements over existing backbone RL algorithms. On Qwen3-14B, SeeUPO achieves 60.80\% avg@4 and 72.85\% pass@4, with relative improvements ranging from 43.3\% to 54.6\% compared to baseline methods. On Qwen2.5-14B, SeeUPO achieves 53.07\% avg@4 and 63.59\% pass@4, with relative improvements ranging from 24.1\% to 41.9\% over baselines. SeeUPO maintains stable training without the catastrophic failures observed in baseline methods. Additional comparative experiments validate our theoretical insights: the reverse update order achieves the best performance, confirming our proof that backward induction enables convergence to global optimality; batch-level normalization preserves the drift functional properties required for monotonic improvement guarantees while providing numerical stability.

\paragraph{Contributions} Our main contributions are summarized as follows:
\begin{itemize}
    \item \textbf{Theoretical Analysis of Backbone RL Algorithms.} We provide a comprehensive convergence analysis of mainstream backbone RL algorithms by categorizing them along two dimensions: advantage estimation (GAE vs. GRAE) and policy update mechanisms (REINFORCE vs. PPU). We prove the convergence conditions and limitations of each combination in both single-turn and multi-turn scenarios, revealing a fundamental trade-off: existing algorithms cannot simultaneously achieve critic-free operation and convergence guarantees in multi-turn settings.
    \item \textbf{Novel Algorithm with Theoretical Guarantees.} We propose SeeUPO, a sequence-level sequential update policy optimization algorithm that resolves the above trade-off. By modeling multi-turn interactions as sequentially-executed multi-agent bandit problems, SeeUPO inherits monotonic improvement guarantees and achieves convergence to global optimality through reverse-order sequential updates via backward induction.
    \item \textbf{Comprehensive Experimental Validation.} We evaluate SeeUPO on two challenging multi-turn agentic benchmarks (AppWorld and BFCL v4), demonstrating substantial performance improvements (43.3\%--54.6\% on Qwen3-14B, 24.1\%--41.9\% on Qwen2.5-14B) and superior training stability compared to existing backbone RL algorithms.
\end{itemize}

\paragraph{Organization}
The remainder of this paper is organized as follows. Section~\ref{sec:preliminaries} introduces necessary preliminaries, including token / sequence-level RL modeling, advantage estimation methods, policy update mechanisms, and the convergence theory frameworks. Section~\ref{sec:analysis} provides a systematic convergence analysis of existing backbone RL algorithms, analyzing the properties of different advantage estimation and policy update combinations (Subsection~\ref{subsec:advantage_estimation}) and summarizing the fundamental trade-off between critic-free operation and convergence guarantees in multi-turn scenarios (Subsection~\ref{subsec:summary}). Section~\ref{sec:method} presents SeeUPO, including its theoretical framework(Subsection~\ref{subsec:seeupo_update}) and practical implementation details (Subsection~\ref{subsec:practical_methods}). Section~\ref{sec:experiments} reports experimental results. Section~\ref{sec:conclusion} concludes this paper.

%% file: sections/SeeUPO_Main/preliminaries.tex
\section{Preliminaries}
\label{sec:preliminaries}

\subsection{Token-Level RL and Sequence-Level RL}

From a modeling-level perspective, mainstream RL for LLMs can be roughly categorized into two types: \textbf{token-level} and \textbf{sequence-level}. In \textbf{token-level RL}, each token generation step corresponds to a timestep in the MDP, where the state $s_t$ represents the concatenation of the query and previous tokens, and the action $a_t$ is the next token to be generated. In single-turn scenarios, the state transition is deterministic: $s_{t+1} = \text{concat}(s_t, a_t)$. In multi-turn interactive scenarios, the state transition becomes non-deterministic, as token-level RL models interactions by incorporating turn transitions and environmental feedback into the state through concatenation.

In \textbf{sequence-level RL}, each timestep corresponds to generating a complete sequence (sentence or turn), where the state $\boldsymbol{s}$ represents the environmental state and the action $\boldsymbol{a}$ is the complete response sequence. Under this formulation, a single-turn task degenerates into a \textit{contextual bandit model}, while multi-turn tasks correspond to \textit{standard MDPs} with true environmental state transitions~\citep{zheng2025stabilizing, design2025reinforcing}. Sequence-level RL provides a more natural abstraction for multi-turn interactions, as it directly models the environmental dynamics and enables more straightforward credit assignment across turns. 

\textit{Throughout this paper, we use plain symbols (e.g., $s$, $a$) to denote token-level elements and bold symbols (e.g., $\boldsymbol{s}$, $\boldsymbol{a}$) to denote sequence-level elements. In discussions of conventional RL methods, we use plain symbols.}

\subsection{GAE and GRAE}

Advantage estimation is a key component of reinforcement learning. In traditional RL, \textbf{GAE}~\citep{GAE} estimates advantages using temporal difference (TD) errors computed via a value function network (critic). Specifically, for a trajectory segment, GAE computes the advantage estimate as $\hat{A}^{\text{GAE}}(s_t, a_t) = \sum_{l=0}^{\infty} (\gamma\lambda)^l \delta_{t+l}$, where $\delta_t = r_t + \gamma V_\phi(s_{t+1}) - V_\phi(s_t)$ is the TD error, $V_\phi$ is the value network, $\gamma$ is the discount factor, and $\lambda \in [0,1]$ controls the bias-variance trade-off. Typically, GAE trades off some unbiasedness for lower estimation variance.

The computational cost of training a separate critic network in LLM-targeted RL has motivated the development of critic-free advantage estimation methods, with \textbf{GRAE} being a representative approach. 
Given a query (initial state) $s_0$, GRAE samples $N$ independent responses (trajectories) $\{\tau^{(i)}\}_{i=1}^{N}$ from the policy $\pi_\theta$ and computes the group mean reward $\bar{R} = \frac{1}{N} \sum_{i=1}^{N} R^{(i)}$, where $R^{(i)}$ denotes the cumulative reward of trajectory $\tau^{(i)}$. For $(s_t, a_t) \in \tau^{(i)}$, GRAE assigns the advantage estimate as:
 $\hat{A}^{\text{GRAE}}(s_t, a_t) = R^{(i)} - \bar{R}$. For token-level RL, GRAE further broadcasts the advantage to each token.
We emphasize that GRAE here represents a general (ideal) formulation, and different practical RL algorithms may implement it differently. For instance, algorithms such as GRPO and GSPO further divide by the group variance, which has been shown to introduce bias~\citep{hureinforce++}, and we will show that this practice breaks the monotonic improvement property. For ease of discussion, we omit the leave-one-out operation in GRAE formulations~\citep{ahmadian2024back}.

\subsection{REINFORCE and PPU}

Policy update mechanisms are essential for translating advantage estimates into policy improvements~\citep{RL}. We discuss two mainstream update mechanisms used in backbone RL algorithms, namely \textbf{REINFORCE} and \textbf{PPU}.
\textbf{REINFORCE}~\citep{RL} is a fully on-policy algorithm that directly optimizes the expected return. The policy gradient is estimated by sampling state-action pairs $(s_t, a_t)$ from the current policy $\pi_\theta$:
\begin{equation}
    \nabla_\theta J_{\text{REINFORCE}}(\theta) = \mathbb{E}_{(s_t, a_t) \sim \pi_\theta} \left[ \nabla_\theta \log \pi_\theta(a_t | s_t) \hat{A}(s_t, a_t) \right],
\end{equation}
where $\hat{A}(s_t, a_t)$ is the estimated advantage.
\textbf{PPU} typically refers to the mechanism used in PPO~\citep{PPO}, which allows for multiple updates per data batch (partially on-policy) by constraining the policy shift. PPU maximizes a clipped surrogate objective estimated over samples collected by an old policy $\pi_{\theta_{\text{old}}}$:
\begin{equation}
    \nabla_\theta J_{\text{PPU}}(\theta) = \nabla_\theta \mathbb{E}_{(s_t, a_t) \sim \pi_{\theta_{\text{old}}}} \left[ \min \left( r_t(\theta) \hat{A}_t, \operatorname{clip}(r_t(\theta), 1 - \epsilon, 1 + \epsilon) \hat{A}_t \right) \right],
\end{equation}
where $\hat{A}_t$ denotes $\hat{A}(s_t, a_t)$, $r_t(\theta) = \frac{\pi_\theta(a_t | s_t)}{\pi_{\theta_{\text{old}}}(a_t | s_t)}$ is the importance sampling probability ratio and $\epsilon$ is the clipping parameter.

\subsection{Mirror Learning and Heterogeneous-Agent Mirror Learning}

Convergence guarantees are crucial for ensuring \textit{training stability} and \textit{optimal policy discovery} in reinforcement learning. \textbf{Mirror Learning}~\citep{Mirror} provides a unified theoretical framework for analyzing the convergence properties of policy optimization algorithms. The framework introduces key concepts: (1) the \textbf{drift function} $\mathfrak{D}_\pi(\bar{\pi} | s)$, which quantifies the update cost of a new policy relative to the old policy, (2) the \textbf{neighborhood operator} $\mathcal{N}$, which defines the search space for policy updates, and (3) the \textbf{mirror operator}, which combines the advantage-related objective with the drift function to guide policy updates. 
When the drift function and neighborhood operator satisfy specific conditions, Mirror Learning guarantees \textit{monotonic policy improvement} and \textit{convergence to optimal policies}. This framework has been successfully applied to prove convergence guarantees for algorithms such as GPI~\citep{RL} and PPO. We leverage this work to help analyze the convergence properties of backbone RL algorithms used for training LLMs.

\textbf{Heterogeneous-Agent Mirror Learning (HAML)}~\citep{HARL} extends Mirror Learning to multi-agent settings. The core insight is that by decomposing the joint advantage function into conditional advantages and performing sequential policy updates, policy improvements can be coordinated across agents, avoiding conflicts that could arise from simultaneous updates. Under HAML, algorithms guarantee monotonic improvement of the joint return and convergence to Nash equilibrium. We leverage HAML in our proposed SeeUPO algorithm to achieve convergence guarantees in multi-turn scenarios.

\textit{For a comprehensive introduction to the Mirror Learning and HAML framework, including detailed definitions and formal properties, we refer readers to Appendix~\ref{sec:appendix_mirror_learning}.}

%% file: sections/SeeUPO_Main/analysis.tex
\definecolor{GAEColor}{RGB}{65,105,225}
\definecolor{GRAEColor}{RGB}{75,46,131}
\definecolor{PPOColor}{RGB}{255,105,180}
\definecolor{REINFORCEColor}{RGB}{255,165,0}
\definecolor{HAPPOColor}{RGB}{128,0,128}

\definecolor{myBlue}{RGB}{25, 25, 112}      
\definecolor{myPurple}{RGB}{128, 0, 128}   
\definecolor{myGreen}{RGB}{34, 139, 34}    
\definecolor{myGray}{gray}{0.6}             
\definecolor{noteGray}{gray}{0.3}           
\definecolor{rowGray}{gray}{0.96}           

\newcommand{\cmark}{\textcolor{myGreen}{\ding{51}}} 
\newcommand{\xmark}{\textcolor{myGray}{\ding{55}}}  

\newcommand{\algoFont}[1]{\textsc{#1}}                                    
\newcommand{\advFont}[1]{\textcolor{myBlue}{\textsf{\textbf{#1}}}}        
\newcommand{\polFont}[1]{\textcolor{myPurple}{\textsf{\textbf{#1}}}}      
\newcommand{\noteFont}[1]{\itshape\color{noteGray}\raggedright #1} 

\section{Analysis}
\label{sec:analysis}

In this section, we provide a systematic convergence analysis of mainstream backbone RL algorithms. We first present the algorithm comparison table and key findings, then analyze two fundamental components: advantage estimation methods (\textbf{GAE} and \textbf{GRAE}) and policy update mechanisms (\textbf{REINFORCE} and \textbf{PPU}), followed by analyzing their combined effects, and finally summarize the performance of existing algorithms in single-turn and multi-turn scenarios.

Table~\ref{tab:algorithm_comparison} provides a systematic comparison of mainstream backbone reinforcement learning algorithms used for language model training, analyzing their core components, modeling levels, and convergence properties.
Our systematic analysis reveals a fundamental trade-off: \textbf{mainstream backbone RL algorithms struggle to simultaneously achieve both critic-free operation and convergence guarantees in multi-turn scenarios}.

\begin{table}[htbp]
    \centering
\caption{Systematic analysis of mainstream backbone RL algorithms for language model training. The table compares algorithms based on their advantage estimation methods, policy update mechanisms, modeling level, and convergence guarantees in single-turn and multi-turn scenarios. Detailed theoretical analysis is provided in the corresponding appendix sections.}
\label{tab:algorithm_comparison}
\tiny
\setlength{\tabcolsep}{5pt}
\renewcommand{\arraystretch}{1.4}
\begin{tabularx}{\textwidth}{@{}>{\centering\arraybackslash}p{1.4cm}>{\centering\arraybackslash}p{1.6cm}>{\centering\arraybackslash}p{1.6cm}>{\centering\arraybackslash}c>{\centering\arraybackslash}c>{\centering\arraybackslash}p{2.6cm}>{\centering\arraybackslash}p{3.6cm}@{}}
\toprule
\begin{tabular}[c]{@{}c@{}}\textbf{Advantage} \\ \textbf{Estimation}\end{tabular} & \begin{tabular}[c]{@{}c@{}}\textbf{Policy Update} \\ \textbf{Mechanism}\end{tabular} & \begin{tabular}[c]{@{}c@{}}\textbf{Modeling} \\ \textbf{Level}\end{tabular} & \begin{tabular}[c]{@{}c@{}}\textbf{Single-Turn} \\ \textbf{Convergence}\end{tabular} & \begin{tabular}[c]{@{}c@{}}\textbf{Multi-Turn} \\ \textbf{Convergence}\end{tabular} & \begin{tabular}[c]{@{}c@{}}\textbf{Algorithm} \\ \textbf{Instance}\end{tabular} & \textbf{Notes} \\
\midrule
\advFont{GAE} & \polFont{PPU} & Token-level & \cmark & \cmark & \algoFont{PPO}~\citep{PPO} & \begin{minipage}[c]{3.6cm}\raggedright\noteFont{Critic-dependent; Assumes perfect value function approximation (Appendix~\ref{app:gae_bias},~\ref{app:gae_ppo_global})}\end{minipage} \\
\addlinespace[0.25em]
\rowcolor{rowGray}
\advFont{GRAE} & \polFont{PPU} & Token-level & \xmark & \xmark & \algoFont{REINFORCE++ (w/ Baseline)}~\citep{hureinforce++}, \algoFont{GRPO}~\citep{grpo2024} & \begin{minipage}[c]{3.6cm}\raggedright\noteFont{Structural bias breaks monotonic improvement (Appendix~\ref{app:grae_bias_gradient},~\ref{app:grae_ppo_convergence})}\end{minipage} \\
\addlinespace[0.25em]
\advFont{GRAE} & \polFont{REINFORCE} & Sequence-Level & \cmark & \xmark & \algoFont{RLOO}~\citep{ahmadian2024back} & \begin{minipage}[c]{3.6cm}\raggedright\noteFont{Requires undiscounted settings (Appendix~\ref{app:grae_reinforce_global})}\end{minipage} \\
\addlinespace[0.25em]
\rowcolor{rowGray}
\advFont{GRAE} & \polFont{PPU} & Sequence-Level & \cmark & \xmark & \algoFont{GSPO}~\citep{gspo2024} & \begin{minipage}[c]{3.6cm}\raggedright\noteFont{Requires removal of group-variance normalization (Appendix~\ref{app:grae_bandit_gspo})}\end{minipage} \\
\addlinespace[0.25em]
\advFont{GRAE} & \polFont{HAML} & Sequence-Level & --- & \cmark & \algoFont{SeeUPO} & \begin{minipage}[c]{3.6cm}\raggedright\noteFont{Converts multi-turn to sequential multi-agent single-turn (Appendix~\ref{sec:seeupo_global_convergence})}\end{minipage} \\
\bottomrule
\end{tabularx}
\end{table}

\subsection{Advantage Estimation and Policy Update Analysis}
\label{subsec:advantage_estimation}

We analyze two mainstream advantage estimation methods and their combinations with policy update mechanisms. Detailed theoretical analysis and proofs are provided in Appendix~\ref{app:gae_bias}--\ref{app:grae_bandit_gspo}.

\textbf{GAE} provides unbiased estimates under perfect value function approximation, with bias bounded by $|\text{Bias}| \leq \frac{1 + \gamma - 2\gamma\lambda}{1 - \gamma\lambda} \cdot \epsilon_{\max}$ where $\epsilon_{\max}$ is the maximum value estimation error (Appendix~\ref{app:gae_bias}). \textbf{GRAE} is biased but provides unbiased gradient estimates under undiscounted ($\gamma = 1$) settings; when the conditions are violated, both the estimator and gradient become biased (Appendix~\ref{app:grae_bias_gradient}).

We analyze how different combinations affect convergence, summarized in Table~\ref{tab:algorithm_comparison}:

\begin{itemize}
    \item \textbf{GRAE-REINFORCE} (e.g., RLOO): Guarantees gradient unbiasedness, monotonic improvement, and convergence under undiscounted objective ($\gamma=1$) with bounded rewards in finite-horizon MDPs. REINFORCE can be viewed as an instance of Mirror Learning~\citep{Mirror} with trivial drift ($\mathfrak{D} \equiv 0$) and trivial neighbourhood ($\mathcal{N} = \Pi$), which ensures the convergence guarantee (Appendix~\ref{app:grae_reinforce_global}).
    
    \item \textbf{GAE-PPU} (e.g., PPO): Guarantees monotonic improvement and convergence to globally optimal policy under perfect value function approximation (Appendix~\ref{app:gae_ppo_global}).
    
    \item \textbf{GRAE-PPU} (e.g., GRPO, REINFORCE++ w/ Baseline~\citep{hureinforce++}, GSPO): Does \textit{not} guarantee monotonic improvement or convergence in general, because GRAE's structural bias $\Delta(s_t) = V(s_t) - V(s_0)$ cannot be eliminated in PPU's clipped objective (Appendix~\ref{app:grae_ppo_convergence}). \textbf{Exception}: In Contextual Bandit settings (single-turn \& sequence-level), GRAE becomes unbiased and GRAE-PPU converges; however, group-variance normalization (as in GSPO) breaks this guarantee (Appendix~\ref{app:grae_bandit_gspo}).
    
    \item \textbf{GAE-REINFORCE} (no existing algorithm instance): This combination is \textit{Pareto-dominated} by existing alternatives. It retains GAE's critic dependency while lacking PPU's trust region constraints that bound policy updates. When value function approximation is imperfect, biased GAE estimates can be amplified through REINFORCE's unbounded updates. In contrast, GAE-PPU achieves better sample efficiency with bounded updates, while GRAE-REINFORCE eliminates critic dependency entirely.
\end{itemize}

\subsection{Summary: The Fundamental Trade-off}
\label{subsec:summary}

Based on the analysis above, we summarize the convergence properties. In \textbf{single-turn scenarios}: RLOO converges under undiscounted settings; PPO converges under perfect value function approximation; GSPO (without group-variance normalization) converges in Contextual Bandit settings.

In \textbf{multi-turn scenarios}, our analysis reveals a fundamental trade-off: \textbf{mainstream backbone RL algorithms struggle to simultaneously achieve both critic-free operation and convergence guarantees}. 

\begin{itemize}
    \item \textbf{Critic-dependent methods (GAE-PPU)}: Require accurate token-level value function estimation, which becomes challenging in multi-turn scenarios due to non-stationary state transitions, with additional computational overhead from maintaining the critic network~\citep{zhanglandscape}.
    \item \textbf{Critic-free methods (GRAE-based)}: GRAE-PPU breaks monotonic improvement due to structural bias in PPU's clipped objective. GRAE-REINFORCE requires undiscounted settings that are difficult to satisfy in multi-turn scenarios. Moreover, in sequence-level settings, GRAE becomes biased in multi-turn scenarios due to credit assignment problems and amplified structural bias $\Delta(\boldsymbol{s}_{\boldsymbol{t}}) = V(\boldsymbol{s}_{\boldsymbol{t}}) - V(\boldsymbol{s}_0)$.
\end{itemize}

This fundamental limitation motivates a new algorithmic framework that achieves both critic-free operation and convergence guarantees in multi-turn scenarios, which we address in Section~\ref{sec:method}.

%% file: sections/SeeUPO_Main/method.tex
\definecolor{YellowOrange}{RGB}{255,165,0}      
\definecolor{RoyalBlue}{RGB}{65,105,225}        
\definecolor{Magenta}{RGB}{255,105,180}          
\definecolor{Purple}{RGB}{128,0,128}            

\section{Method}
\label{sec:method}

To address this limitation, we propose a new algorithmic framework: \textbf{\textcolor{richpurple}{\textit{Se}}quence-level S\textcolor{richpurple}{\textit{e}}quential \textcolor{richpurple}{\textit{U}}pdate \textcolor{richpurple}{\textit{P}}olicy \textcolor{richpurple}{\textit{O}}ptimization} (\textcolor{richpurple}{\textbf{SeeUPO}}).
The core mechanism of SeeUPO is \textbf{to model multi-turn interaction problems as sequentially-executed multi-agent contextual bandit problems}, where each turn is abstracted as a \textit{virtual agent}, and the order of turns corresponds to the execution order among agents.

\begin{tcolorbox}[
  colback=purple!5!white,
  colframe=richpurple!80!black,
  boxrule=1pt,
  arc=2mm,
  left=3mm,
  right=3mm,
  top=2mm,
  bottom=2mm,
  fonttitle=\bfseries,
  title=\textbf{Fundamental Principles of SeeUPO}
]
This modeling approach is built upon two fundamental principles:
\begin{enumerate}
    \item \textbf{Transforming convergence analysis at the turn-level into convergence analysis at the agent-level}, which allows leveraging existing multi-agent reinforcement learning (MARL) theories to solve the problem.
    \item \textbf{Transforming multi-timestep MDP problems into bandit problems}, which enables unbiased advantage estimation without requiring value function estimation.
\end{enumerate}
\end{tcolorbox}

\subsection{Theoretical Framework: Sequence-level Sequential Update Policy Optimization}
\label{subsec:seeupo_update}

\paragraph{Multi-Agent Modeling}
\label{par:multi_agent_modeling}

SeeUPO abstracts multi-turn interaction tasks into sequentially-decision multi-agent single-turn bandit problems, where each turn is mapped to a virtual agent $t \in \{1, 2, \ldots, T\}$ (as depicted in Figure~\ref{fig:seeupo-idea}). All agents share a common global state $\boldsymbol{s}_0 \in \mathcal{S}_S$ (the initial task state). The sequence-level action $\boldsymbol{a}^t \in \mathcal{A}_S$ of agent $t$ corresponds to the complete response of the $t$-th turn. The joint action $\boldsymbol{a}^{1:T} = (\boldsymbol{a}^1, \boldsymbol{a}^2, \ldots, \boldsymbol{a}^T)$ denotes the concatenation of all agents' actions. Each agent $t$'s sequence-level policy $\boldsymbol{\pi}^t(\boldsymbol{a}^t | \boldsymbol{s}_0, \boldsymbol{a}^{1:t-1})$ takes as input the global state $\boldsymbol{s}_0$ and the action history from preceding agents $\boldsymbol{a}^{1:t-1}$. The state transition function is implicitly modeled through sequentially executed policies: the evolution of the interaction is determined by agent $t$'s action selected according to policy $\boldsymbol{\pi}^{t}(\cdot|\boldsymbol{s}_0, \boldsymbol{a}^{1:t-1})$. We adopt a shared reward (\textit{Team-Reward}) mechanism $r(\boldsymbol{s}_0, \boldsymbol{a}^{1:T})$, where the team reward equals the final task reward or cumulative return across all turns, ensuring all agents jointly optimize the global objective.

\begin{figure}[htbp]
  \centering
  \includegraphics[width=1.0\textwidth]{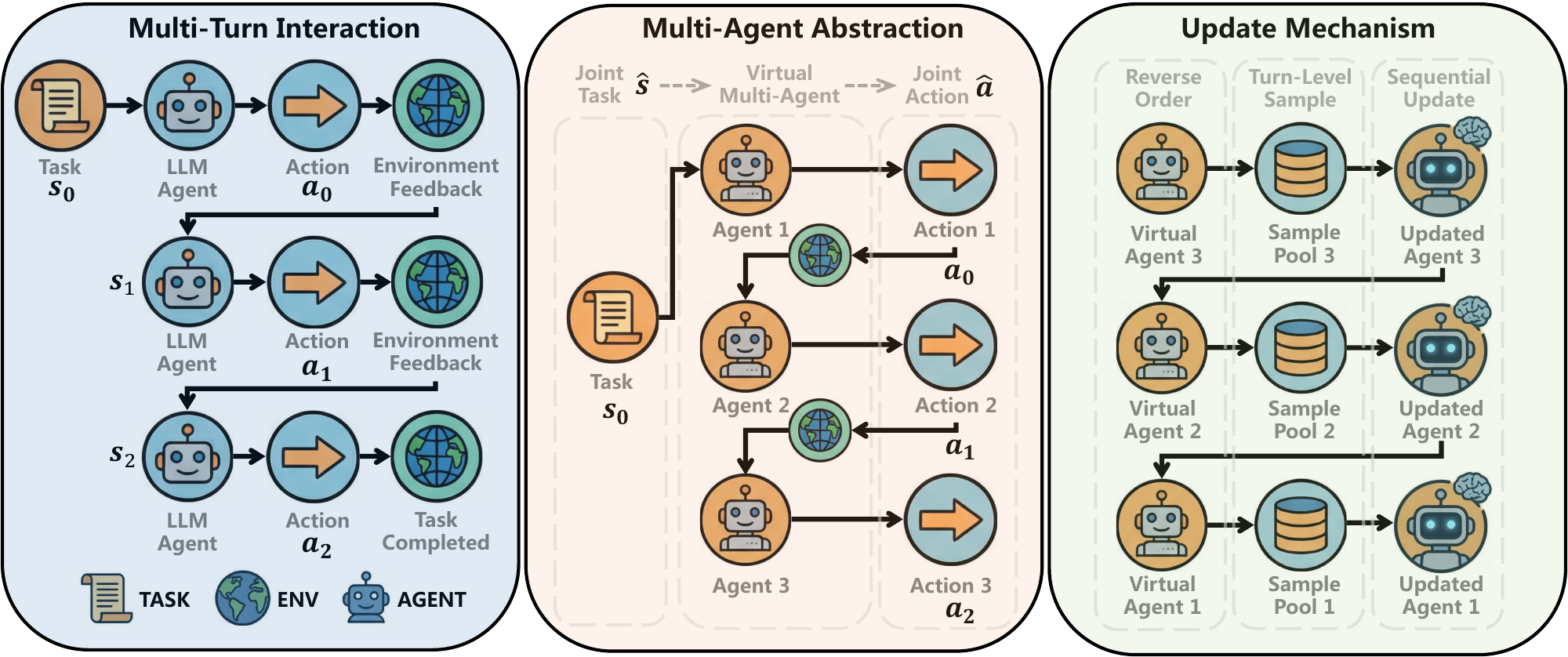}
  \caption{The core idea of SeeUPO is to abstract multi-turn interaction tasks into sequentially-decision multi-agent single-turn tasks, and adopt reverse-order sequential updates to achieve global optimality via backward induction. The figure shows an example scenario with three turns, from left to right showing the \textit{original task scenario}, the \textit{multi-agent modeling} of the scenario, and the \textit{reverse update mechanism} based on MARL theory.}
  \label{fig:seeupo-idea}
\end{figure}

We emphasize that \textbf{this multi-agent modeling only exhibits a \textit{training-phase specificity} and offers no inherent advantages for execution optimization}. This abstraction constitutes a methodological shift in data treatment during the training phase, but does not correspond to a functional mechanism for multi-agent coordination during actual execution.

\paragraph{Policy Update}
\label{par:policy_update} 

After modeling is completed, \textbf{the optimization of Sequence-level policies is transformed into optimizing the joint policy of a multi-agent system.}\footnote{Throughout this section, we use $\bar{\pi}$ to denote a policy to be optimized, and $\hat{\boldsymbol{\pi}}$ to denote the joint policy.} SeeUPO adopts the sequential update mechanism from the HAML framework (see Appendix~\ref{sec:appendix_mirror_learning}), with a crucial design choice: \textbf{the update order is set to the reverse of the execution order} ($T \to T{-}1 \to \cdots \to 1$). This reverse update order not only resolves agent-level update conflicts by updating policies turn by turn, but also enables \textbf{backward induction to achieve global optimality} (see Theorem 2 in Appendix~\ref{sec:seeupo_global_convergence}).

At each iteration $k$, the algorithm updates each agent's policy sequentially following the \textbf{reverse order} of execution ($T \to T{-}1 \to \cdots \to 1$). The policy update process consists of three key components:

\textbf{(1) Policy Update Rule.} For turn $t$ in the reverse update order, by substituting the HAML into the update rule, the policy update can be expressed as:

\begin{equation}
    \hat{\pi}^{t}_{k+1} = \underset{\bar{\pi}^{t} \in \textcolor{YellowOrange}{\mathcal{U}^{t}_{\hat{\boldsymbol{\pi}}_k}(\hat{\pi}^{t}_k)}}{\arg\max} \; \mathbb{E}_{\boldsymbol{s}_0 \sim \beta_{\hat{\boldsymbol{\pi}}_k}} \Bigg[ \textcolor{RoyalBlue}{\mathbb{E}_{\boldsymbol{a}^{t+1:T} \sim \hat{\boldsymbol{\pi}}^{t+1:T}_{k+1}, \boldsymbol{a}^{t} \sim \bar{\pi}^{t}} \left[A^{t}_{\hat{\boldsymbol{\pi}}_k}(\boldsymbol{s}_0, \boldsymbol{a}^{t}, \boldsymbol{a}^{t+1:T})\right]} - \textcolor{Magenta}{\mathfrak{D}^{t}_{\hat{\boldsymbol{\pi}}_k}(\bar{\pi}^{t} \mid \boldsymbol{s}_0, \hat{\boldsymbol{\pi}}^{t+1:T}_{k+1})} \Bigg],
    \label{eq:seeupo-update}
\end{equation}

where $\boldsymbol{s}_0$ is the sequence-level joint state (i.e., the global initial state), $\hat{\boldsymbol{\pi}}_k$ denotes the joint policy at iteration $k$, $\textcolor{YellowOrange}{\mathcal{U}^{t}_{\hat{\boldsymbol{\pi}}_k}(\hat{\pi}^{t}_k)}$ is the \textcolor{YellowOrange}{neighborhood operator} for turn $t$, $\beta_{\hat{\boldsymbol{\pi}}_k}$ is the sampling state distribution, $\textcolor{RoyalBlue}{\mathbb{E}_{\boldsymbol{a}^{t+1:T} \sim \hat{\boldsymbol{\pi}}^{t+1:T}_{k+1}, \boldsymbol{a}^{t} \sim \bar{\pi}^{t}} \left[A^{t}_{\hat{\boldsymbol{\pi}}_k}\right]}$ is the \textcolor{RoyalBlue}{expectation of the local advantage function} for turn $t$, and $\textcolor{Magenta}{\mathfrak{D}^{t}_{\hat{\boldsymbol{\pi}}_k}}$ is the \textcolor{Magenta}{drift functional}. Crucially, under the reverse update order, \textbf{each turn $t$ considers the already-updated policies $\hat{\boldsymbol{\pi}}^{t+1:T}_{k+1}$ of subsequent turns}, ensuring coordination in the update process and enabling backward induction.

\textbf{(2) Local Advantage Function Computation.} To compute the \textcolor{RoyalBlue}{expectation of the local advantage function} $\textcolor{RoyalBlue}{\mathbb{E}_{\boldsymbol{a}^{t+1:T} \sim \hat{\boldsymbol{\pi}}^{t+1:T}_{k+1}, \boldsymbol{a}^{t} \sim \bar{\pi}^{t}} \left[A^{t}_{\hat{\boldsymbol{\pi}}_k}\right]}$, SeeUPO leverages the global advantage function. Given the global advantage function $\textcolor{Purple}{\hat{A}_{\hat{\boldsymbol{\pi}}_k}(\boldsymbol{s}_0, \boldsymbol{a}^{1:T})}$, this expectation can be estimated:

\begin{align}
    &\textcolor{RoyalBlue}{\mathbb{E}_{\boldsymbol{a}^{t+1:T} \sim \hat{\boldsymbol{\pi}}^{t+1:T}_{k+1}, \boldsymbol{a}^{t} \sim \bar{\boldsymbol{\pi}}^{t}} \left[A^{t}_{\hat{\boldsymbol{\pi}}_k}(\boldsymbol{s}_0, \boldsymbol{a}^{t}, \boldsymbol{a}^{t+1:T})\right]} \nonumber \\
    &= \mathbb{E}_{\boldsymbol{a}^{1:T} \sim \hat{\boldsymbol{\pi}}_k} \left[ \left(\frac{\bar{\boldsymbol{\pi}}^{t}(\boldsymbol{a}^{t}|\boldsymbol{s}_0, \boldsymbol{a}^{1:t-1})}{\hat{\boldsymbol{\pi}}^{t}_k(\boldsymbol{a}^{t}|\boldsymbol{s}_0, \boldsymbol{a}^{1:t-1})}-1\right) \cdot \frac{\hat{\boldsymbol{\pi}}^{t+1:T}_{k+1}(\boldsymbol{a}^{t+1:T}|\boldsymbol{s}_0, \boldsymbol{a}^{1:t})}{\hat{\boldsymbol{\pi}}^{t+1:T}_k(\boldsymbol{a}^{t+1:T}|\boldsymbol{s}_0, \boldsymbol{a}^{1:t})} \cdot \textcolor{Purple}{\hat{A}_{\hat{\boldsymbol{\pi}}_k}(\boldsymbol{s}_0, \boldsymbol{a}^{1:T})} \right],
    \label{eq:local-advantage-estimation}
\end{align}

where the first term $\left(\frac{\bar{\boldsymbol{\pi}}^{t}(\boldsymbol{a}^{t}|\boldsymbol{s}_0, \boldsymbol{a}^{1:t-1})}{\hat{\boldsymbol{\pi}}^{t}_k(\boldsymbol{a}^{t}|\boldsymbol{s}_0, \boldsymbol{a}^{1:t-1})}-1\right)$ involves the candidate policy $\bar{\boldsymbol{\pi}}^{t}$ (to be optimized) and the current policy $\hat{\boldsymbol{\pi}}^{t}_k$, while the second ratio $\frac{\hat{\boldsymbol{\pi}}^{t+1:T}_{k+1}(\boldsymbol{a}^{t+1:T}|\boldsymbol{s}_0, \boldsymbol{a}^{1:t})}{\hat{\boldsymbol{\pi}}^{t+1:T}_k(\boldsymbol{a}^{t+1:T}|\boldsymbol{s}_0, \boldsymbol{a}^{1:t})}$ involves the already-updated joint policy of subsequent turns $\hat{\boldsymbol{\pi}}^{t+1:T}_{k+1}$ and the previous policy $\hat{\boldsymbol{\pi}}^{t+1:T}_k$. Note that the $-1$ term in the first factor has zero gradient with respect to $\bar{\boldsymbol{\pi}}^{t}$ and can be omitted in practical gradient computation. \textbf{The computation of the local advantage function effectively performs implicit credit assignment across turns}~\citep{HARL}, as it decomposes the global advantage into turn-specific contributions by incorporating the importance sampling ratios from subsequently updated turns.

\textbf{(3) Global Advantage Function Computation.} In the bandit setting, the global advantage function $\textcolor{Purple}{\hat{A}_{\hat{\boldsymbol{\pi}}_k}(\boldsymbol{s}_0, \boldsymbol{a}^{1:T})}$ can be estimated directly from sampled rewards. Specifically, for a given initial state $\boldsymbol{s}_0$ and sequence-level joint action $\boldsymbol{a}^{1:T}$, the global advantage function degenerates to:

\begin{equation}
    \textcolor{Purple}{\hat{A}_{\hat{\boldsymbol{\pi}}_k}(\boldsymbol{s}_0, \boldsymbol{a}^{1:T})} = r(\boldsymbol{s}_0, \boldsymbol{a}^{1:T}) - \mathbb{E}_{\boldsymbol{a}'^{1:T} \sim \hat{\boldsymbol{\pi}}_k(\cdot|\boldsymbol{s}_0)}[r(\boldsymbol{s}_0, \boldsymbol{a}'^{1:T})],
    \label{eq:global-advantage-bandit}
\end{equation}

where $r(\boldsymbol{s}_0, \boldsymbol{a}^{1:T})$ is the immediate reward and $\mathbb{E}_{\boldsymbol{a}'^{1:T} \sim \hat{\boldsymbol{\pi}}_k(\cdot|\boldsymbol{s}_0)}[r(\boldsymbol{s}_0, \boldsymbol{a}'^{1:T})]$ is the expected reward under the current policy. This formulation provides an unbiased estimate of the advantage function in the bandit setting.

\paragraph{Theoretical Guarantees}
\label{par:theoretical_guarantees}

SeeUPO inherits the monotonic improvement guarantee from the HAML framework (Theorem 1 in Appendix~\ref{sec:appendix_mirror_learning}). Beyond this, we establish a stronger result for the multi-turn contextual bandit setting: \textbf{the reverse update order guarantees convergence to the globally optimal policy} (Theorem 2 in Appendix~\ref{sec:seeupo_global_convergence}). The key insight is that, unlike general cooperative games where random update orders are required for Nash equilibrium convergence, the fixed sequential execution order in our setting enable \textbf{backward induction}. Specifically, the reverse update order ensures that when updating turn $t$, all subsequent turns $t+1, \ldots, T$ have already been updated to their optimal policies given their continuation values. This allows each turn to optimize against the true optimal continuation value $V^*$, yielding global optimality. The complete proof is provided in Appendix~\ref{sec:seeupo_global_convergence}.

\subsection{Practical Methods}
\label{subsec:practical_methods}

\begin{tcolorbox}[
  colback=blue!3!white,
  colframe=richpurple!80!black,
  boxrule=1.5pt,
  arc=3mm,
  left=4mm,
  right=4mm,
  top=3mm,
  bottom=3mm,
  fonttitle=\bfseries,
  title=\textbf{Algorithm 1: SeeUPPO-GRAE}
]
\textbf{Input:} Initial sequence-level joint policy $\boldsymbol{\pi}_{0}$ with parameters $\boldsymbol{\theta}_{0}$, maximum turns $T$, batch size $B$, group size $G$, clipping parameter $\epsilon$, learning rate $\alpha$

\textbf{Output:} Optimized policy $\boldsymbol{\pi}_{K}$ after $K$ iterations

\vspace{0.3em}
\begin{enumerate}
    \item Initialize $\boldsymbol{\pi}_{0}$ with parameters $\boldsymbol{\theta}_{0}$
    \item \textbf{For} $k = 0, 1, \ldots, K-1$:
    \begin{enumerate}
        \item \textbf{Data Collection:}
        \begin{itemize}
            \item Sample dataset $\mathcal{D}_k = \{(\boldsymbol{s}_0, \boldsymbol{a}^{1:T}, r)\}$ by:
            \begin{itemize}
                \item For each of $B$ initial states $\boldsymbol{s}_0$, sample $G$ trajectories $\boldsymbol{a}^{1:T}$ and collect rewards $r(\boldsymbol{s}_0, \boldsymbol{a}^{1:T})$
            \end{itemize}
            \item Organize data into $T$ sample pools: for each turn $t \in \{1, \ldots, T\}$, construct pool $\mathcal{D}_{t} = \{(\boldsymbol{s}_0, \boldsymbol{a}^{1:t-1}, \boldsymbol{a}^{t})\}$
        \end{itemize}
        \item \textbf{Joint Advantage Estimation:}
        \begin{itemize}
            \item \textbf{For} each $(\boldsymbol{s}_0, \boldsymbol{a}^{1:T}) \in \mathcal{D}_T$:
            \begin{itemize}
                \item Compute joint advantage: $\hat{A}_{\hat{\boldsymbol{\pi}}_{k}}(\boldsymbol{s}_0, \boldsymbol{a}^{1:T}) = r(\boldsymbol{s}_0, \boldsymbol{a}^{1:T}) - \bar{r}(\boldsymbol{s}_0)$
                \item Initialize $M^{T+1}(\boldsymbol{s}_0, \boldsymbol{a}^{1:T}) = \hat{A}_{\hat{\boldsymbol{\pi}}_{k}}(\boldsymbol{s}_0, \boldsymbol{a}^{1:T})$
            \end{itemize}
        \end{itemize}
        \item \textbf{Sequential Policy Update (Reverse Order):}
        \begin{itemize}
            \item \textbf{For} $t = T, T{-}1, \ldots, 1$:
            \begin{itemize}
                \item Update policy parameters to obtain $\boldsymbol{\pi}^{t}_{\boldsymbol{\theta}_{k+1}}$ via Equation.~\ref{eq:seeupo-ppo-gradient}:
                \item \textbf{If} $t > 1$:
                \begin{itemize}
                    \item Update $M^{t}$ for next batch via Equation.~\ref{eq:M-update}
                \end{itemize}
                \item \textbf{Else}:
                \begin{itemize}
                    \item Set $\boldsymbol{\theta}_{k+1} = \boldsymbol{\theta}^{1}_{k+1}$
                \end{itemize}
            \end{itemize}
        \end{itemize}
    \end{enumerate}
    \item \textbf{Return} $\boldsymbol{\pi}_{K}$
\end{enumerate}
\end{tcolorbox}

In this part, we present a concrete example of SeeUPO (Algorithm~1 presents the pseudocode). The practical implementation instantiates the theoretical framework in two key aspects: (1) adopting a PPO-style clipping mechanism to implement the mirror operator through gradient-based updates, and (2) using GRAE for joint advantage estimation. This approach essentially combines GRAE with HAPPO \citep{HARL}. We refer to this practical algorithm as \textbf{SeeUPPO-GRAE}, though for convenience, we still refer to it as SeeUPO in the remainder of this paper. We emphasize that SeeUPPO-GRAE is not the only instantiation of SeeUPO---the theoretical framework admits various variants by substituting different components, such as replacing the PPO-style clipping with TRPO-style trust region constraints, or replacing GRAE with other advantage estimators.

\textbf{The algorithm operates iteratively, with each iteration comprising data collection, advantage estimation, and sequential policy updates.} SeeUPO adopts a \textit{turn-oriented} batch construction approach that separately organizes samples from identical turns, as illustrated in Figure~\ref{fig:seeupo-batch}. This approach enables sequential policy updates by maintaining turn-level sample pools, in contrast to methods that construct batches using entire trajectories or concatenated sliced turns. For tasks with fewer than the maximum $T$ turns, placeholder samples (e.g., Sample 6 in the figure) are introduced as \textit{no-op (null action) samples}. Note that the figure demonstrates batch construction patterns using the \textit{React} + \textit{Reasoning-Augmented Template} paradigm \citep{zhai2025agentevolver}.

\begin{figure}[htbp]
  \centering
  \includegraphics[width=1.0\textwidth]{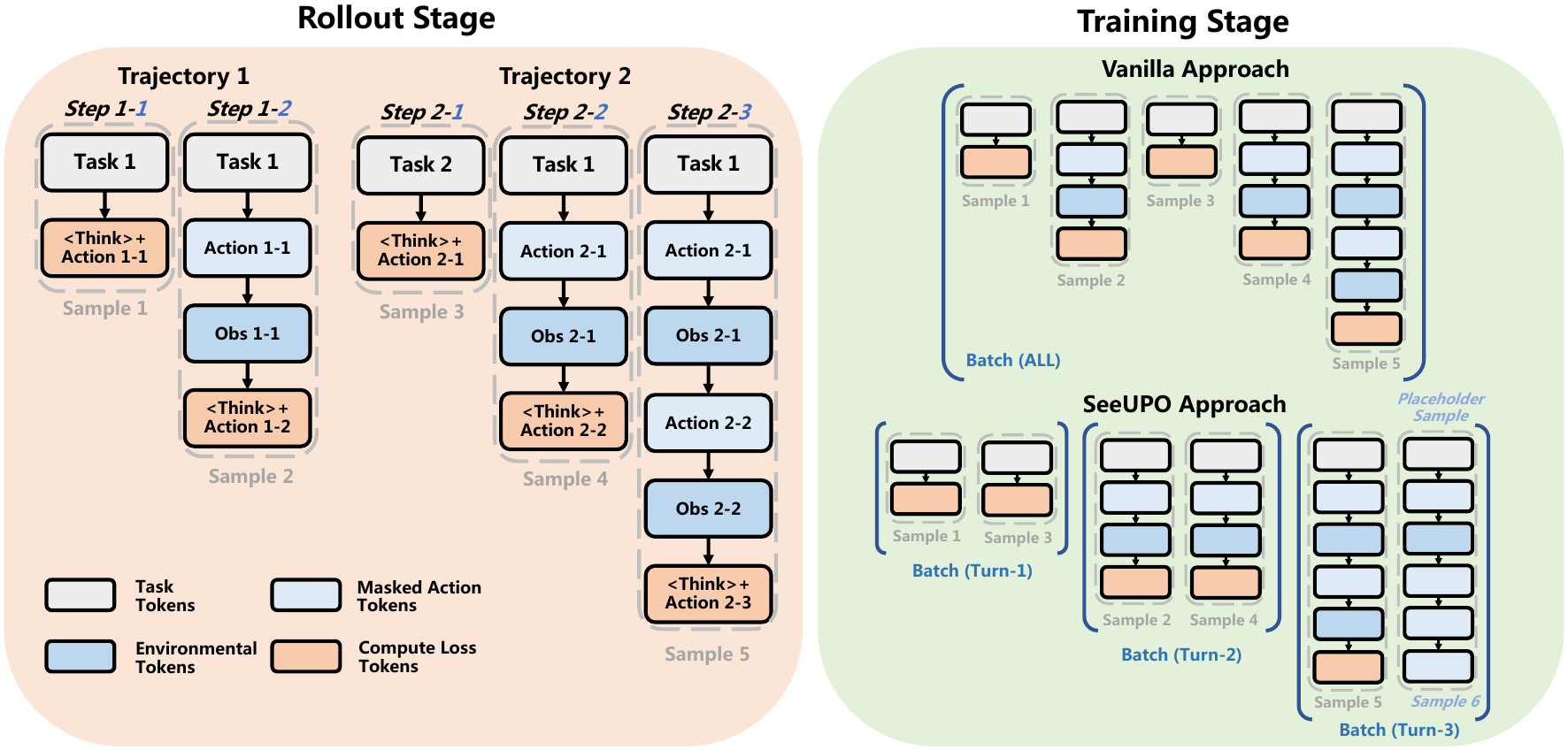}
  \caption{The batch construction approach of SeeUPO. Unlike methods that construct batches using entire trajectories or by concatenating sliced turns, SeeUPO implements a \textit{turn-oriented} approach that separately organizes samples from identical turns. This figure demonstrates the divergent batch construction patterns between SeeUPO and the Vanilla approach under two tasks with maximum three-turn interactions, via \textit{React} + \textit{Reasoning-Augmented Template} paradigm \citep{zhai2025agentevolver}.}
  \label{fig:seeupo-batch}
\end{figure}

\paragraph{PPO-Style Policy Update}
\label{par:ppo_update}

In our multi-turn RL setting, all turns share the same policy parameters $\boldsymbol{\theta}$. We use $\boldsymbol{\pi}_{\boldsymbol{\theta}^{t}_{k+1}}$ to denote the policy after updating turn $t$'s data in iteration $k$. For notational clarity, we denote $(\boldsymbol{s}_0, \boldsymbol{a}^{1:T})$ as a joint trajectory sample, where $\boldsymbol{s}_0$ is the initial state (query) and $\boldsymbol{a}^{1:T}$ is the sequence-level joint action as defined in Section~\ref{par:multi_agent_modeling}.

Specifically, for turn $t$ in the reverse update order ($T \to T{-}1 \to \cdots \to 1$) at iteration $k$, the policy update is performed to obtain $\boldsymbol{\pi}_{\boldsymbol{\theta}^{t}_{k+1}}$ by computing the gradient of the policy parameters $\boldsymbol{\theta}$ with respect to the following expectation:

\begin{equation}
    \nabla_{\boldsymbol{\theta}} \mathbb{E}_{(\boldsymbol{s}_0, \boldsymbol{a}^{1:t-1}, \boldsymbol{a}^{t}) \sim \mathcal{D}_t} \left[ \min \left( r^{t}(\boldsymbol{\theta}) M^{t+1}(\boldsymbol{s}_0, \boldsymbol{a}^{1:T}), \operatorname{clip}(r^{t}(\boldsymbol{\theta}), 1 \pm \epsilon) M^{t+1}(\boldsymbol{s}_0, \boldsymbol{a}^{1:T}) \right) \right],
    \label{eq:seeupo-ppo-gradient}
\end{equation}

where $\mathcal{D}_t = \{(\boldsymbol{s}_0, \boldsymbol{a}^{1:t-1}, \boldsymbol{a}^{t})\}$ is the turn-specific sample pool constructed during data collection (see Algorithm 1), containing samples organized by turn $t$. The sequence-level importance sampling ratio $r^{t}(\boldsymbol{\theta}) = \frac{\boldsymbol{\pi}_{\boldsymbol{\theta}}(\boldsymbol{a}^{t}|\boldsymbol{s}_0, \boldsymbol{a}^{1:t-1})}{\boldsymbol{\pi}_{\boldsymbol{\theta}_k}(\boldsymbol{a}^{t}|\boldsymbol{s}_0, \boldsymbol{a}^{1:t-1})}$ for turn $t$ is computed in a manner similar to GSPO, where $\boldsymbol{a}^{t}$ denotes the action at turn $t$ and $(\boldsymbol{s}_0, \boldsymbol{a}^{1:t-1})$ is the conditioning context (initial state and previous actions). $\epsilon$ is the clipping parameter, and $M^{t+1}(\boldsymbol{s}_0, \boldsymbol{a}^{1:T})$ is a maintained quantity that captures the sequential advantage information from subsequently updated turns.

The quantity $M^{t}(\boldsymbol{s}_0, \boldsymbol{a}^{1:T})$ is initialized and updated sequentially to incorporate the importance sampling ratios from previously updated turns. Specifically, we initialize:

\begin{equation}
    M^{T+1}(\boldsymbol{s}_0, \boldsymbol{a}^{1:T}) = \hat{A}_{\hat{\boldsymbol{\pi}}_{k}}(\boldsymbol{s}_0, \boldsymbol{a}^{1:T}),
    \label{eq:M-initialization}
\end{equation}

where $\hat{A}_{\hat{\boldsymbol{\pi}}_{k}}(\boldsymbol{s}_0, \boldsymbol{a}^{1:T})$ is the global advantage estimate (computed via GRAE as described below). After updating turn $t$, $M^{t}(\boldsymbol{s}_0, \boldsymbol{a}^{1:T})$ is computed recursively:

\begin{equation}
    M^{t}(\boldsymbol{s}_0, \boldsymbol{a}^{1:T}) = \frac{\boldsymbol{\pi}_{\boldsymbol{\theta}^{t}_{k+1}}(\boldsymbol{a}^{t}|\boldsymbol{s}_0, \boldsymbol{a}^{1:t-1})}{\boldsymbol{\pi}_{\boldsymbol{\theta}_k}(\boldsymbol{a}^{t}|\boldsymbol{s}_0, \boldsymbol{a}^{1:t-1})} \cdot M^{t+1}(\boldsymbol{s}_0, \boldsymbol{a}^{1:T}),
    \label{eq:M-update}
\end{equation}

where $\boldsymbol{\theta}^{t}_{k+1}$ denotes the parameters after optimizing turn $t$, and $\boldsymbol{\theta}_k$ denotes the parameters at the beginning of iteration $k$ (i.e., the reference policy used for sampling). Due to parameter sharing, this sequential update mechanism ensures that $M^{t}(\boldsymbol{s}_0, \boldsymbol{a}^{1:T})$ incorporates the importance sampling ratios from all previously updated turns, matching the expectation structure in Equation~\ref{eq:local-advantage-estimation}.

\paragraph{GRAE-based Advantage Estimation}
\label{par:advantage_estimation}

In the bandit setting, the global advantage function $\textcolor{Purple}{\hat{A}_{\hat{\boldsymbol{\pi}}_{k}}(\boldsymbol{s}_0, \boldsymbol{a}^{1:T})}$ can be estimated directly from sampled rewards, as established in Equation~\ref{eq:global-advantage-bandit}. For each initial state in the batch, SeeUPO samples $G$ different joint actions and collects the corresponding Team-Rewards. The global advantage estimate is computed as:

\begin{equation}
    \hat{A}_{\hat{\boldsymbol{\pi}}_{k}}(\boldsymbol{s}_0, \boldsymbol{a}^{1:T}) = r(\boldsymbol{s}_0, \boldsymbol{a}^{1:T}) - \bar{r}(\boldsymbol{s}_0),
    \label{eq:seeupo-advantage-estimation}
\end{equation}

where $\bar{r}(\boldsymbol{s}_0)$ is the mean reward over $G$ trajectories sampled from the same initial state $\boldsymbol{s}_0$, serving as a Monte Carlo estimator of $V_{\hat{\boldsymbol{\pi}}_{k}}(\boldsymbol{s}_0) = \mathbb{E}_{\boldsymbol{a}^{1:T} \sim \hat{\boldsymbol{\pi}}_{k}(\cdot|\boldsymbol{s}_0)}[r(\boldsymbol{s}_0, \boldsymbol{a}^{1:T})]$. This approach provides an unbiased estimate of the advantage function in the bandit setting, without requiring a separate critic (see Appendix~\ref{app:grae_bandit_gspo} for detailed analysis).

In practice, we apply \textbf{batch-level normalization} to the advantage estimates for numerical stability. Specifically, we normalize all advantage estimates in a batch as: $\tilde{A} = (\hat{A} - \mu_{\mathcal{B}}) / \sigma_{\mathcal{B}}$, where $\mu_{\mathcal{B}}$ is the batch mean and $\sigma_{\mathcal{B}}$ is the batch standard deviation. This normalization approach maintains theoretical convergence guarantees while improving training stability: since $\mu_{\mathcal{B}}$ and $\sigma_{\mathcal{B}}$ are constants independent of the candidate policy, the argmax of the optimization problem remains unchanged, leaving the drift functional completely unaffected (see Appendix~\ref{app:batch_normalization} for detailed analysis). This is in contrast to group normalization which applies state-dependent scaling factors that can violate these properties (see Appendix~\ref{app:variance_normalization}). Moreover, experimental results in Section~\ref{subsec:ablation_norm} demonstrate that batch-level normalization performs comparably to group normalization and no normalization, while preserving the theoretical convergence properties.

%% file: sections/SeeUPO_Main/experiments.tex
\section{Experiments}
\label{sec:experiments}

In this section, we conduct a series of comprehensive experiments to systematically evaluate the performance of our proposed SeeUPO framework. We begin by detailing the experimental settings (Section~\ref{sec:exp_settings}), followed by a presentation of the main results that demonstrate the superior performance of SeeUPO compared to existing selected backbone RL algorithms (Section~\ref{sec:main_results}). Finally, we perform additional comparative experiments to investigate the impact of update order and advantage normalization strategies on overall training performance (Section~\ref{sec:ablation_studies}).

\subsection{Experimental Settings}
\label{sec:exp_settings}

\subsubsection{Benchmarks and Evaluations}
\label{subsubsec:benchmarks}

We evaluate SeeUPO on two tool-augmented, agentic benchmarks: \textbf{AppWorld}~\citep{trivedi2024appworld} and \textbf{BFCL~v4}~\citep{patil2025bfcl}. Both benchmarks expose multi-step API/tool interactions under sparse terminal rewards, making them ideal testbeds for evaluating multi-turn agent training algorithms.

\begin{itemize}
    \item \textbf{AppWorld}: A controllable world of apps and people for benchmarking interactive coding agents. AppWorld exposes multi-step API interactions where agents must navigate complex application workflows to accomplish user-specified goals. The environment provides programmatic evaluation through task goal completion tests.
    \item \textbf{BFCL v4}: The Berkeley Function Calling Leaderboard multi-turn benchmark. We use the \emph{multi-turn} split and follow the official evaluation protocol: at the end of each turn, an example is marked correct only if it simultaneously passes \emph{state-based} checks (final backend state matches ground truth on non-private attributes) and \emph{response-based} checks (subset-matched execution path); force-terminated runs are counted as incorrect.
\end{itemize}

For evaluation metrics, we report \textbf{avg@4} (averaging task completion scores over 4 independent rollouts per instance) and \textbf{pass@4} (the success rate when sampling 4 independent rollouts per instance). Trajectories are truncated at a maximum of 10 steps to ensure computational efficiency while maintaining sufficient horizon for multi-turn interactions.

\subsubsection{Baselines and Backbone Models}
\label{subsubsec:baselines}

Our experiments leverage two backbone models: \textbf{Qwen2.5-14B-Instruct}~\citep{yang2025qwen25} and \textbf{Qwen3-14B}~\citep{yang2025qwen3}. These models serve as the agent's policy network.

To rigorously evaluate the efficacy of our method, we compare against three mainstream backbone RL algorithms for agent training:
\begin{itemize}
    \item \textbf{PPO}: A representative critic-dependent algorithm with theoretical convergence guarantees in multi-turn scenarios.
    \item \textbf{GRPO}: A representative critic-free algorithm at the token level, which employs group-relative advantage estimation and PPO-style update mechanism.
    \item \textbf{GSPO}: A representative critic-free algorithm at the sequence level, which extends GRPO with sequence-level mechanisms. 
\end{itemize}

\subsubsection{Implementation Details}
\label{subsubsec:impl_details}

To ensure fair comparison, all baselines and our SeeUPO use the same training configuration. We train our agent policies using SeeUPO as described in Section~\ref{sec:method}. The general training configuration is as follows:

\textbf{Training hyperparameters:} We use a learning rate of $1 \times 10^{-6}$ for the actor network and a batch size of 32. The clipping parameter $\epsilon$ in the PPO-style objective is all set to 0.2 for both lower and upper bounds. The KL penalty coefficient is 0.002. We sample 8 rollouts per instance during training. We train for 75 epochs on AppWorld and 50 epochs on BFCL v4. To ensure fairness, all algorithms use the same number of samples, and samples are utilized the same number of times within each update step. All experiments on both base models adopt the \textit{React} + \textit{Reasoning-Augmented Template} paradigm~\citep{zhai2025agentevolver}.

\textbf{SeeUPO-specific settings:} For the sequential update mechanism, we use reverse order for both AppWorld and BFCL v4 benchmarks, as our subsequent experiments demonstrate that reverse order achieves the best performance. The advantage estimation follows a group-relative approach with batch-level normalization, where we normalize advantages by subtracting the batch mean and dividing by the batch standard deviation.

\textbf{Infrastructure:} All experiments were conducted on clusters of NVIDIA H20 (96GB) GPUs, where each cluster consists of 8 GPUs. For computational resource allocation, PPO uses 2 clusters (16 GPUs total), while GRPO, GSPO, and SeeUPO each use 1 cluster (8 GPUs total). Our implementation is built on PyTorch and leverages the \texttt{veRL} library\footnote{\url{https://github.com/volcengine/verl}} for distributed training infrastructure. The agent-related infrastructure is built upon AgentEvolver\footnote{\url{https://github.com/modelscope/AgentEvolver}}~\citep{zhai2025agentevolver}.

\subsection{Main Results}
\label{sec:main_results}

We conduct comprehensive experiments to evaluate the overall performance of the proposed SeeUPO framework and compare it against selected backbone RL algorithms. As shown in Table~\ref{tab:main_results} and Figure~\ref{fig:baseline_compare}, SeeUPO demonstrates substantial performance improvements over existing backbone RL algorithms across both benchmarks and models, with the training dynamics revealing several critical insights into SeeUPO's superior performance and robustness.

\begin{table}[h]
\centering
\caption{Performance comparison on two benchmark environments. Columns show \textit{avg@4} and \textit{pass@4} for each benchmark, plus their averages (Avg.). All values are in percent (\%). \textbf{Bolded numbers} highlight the best results. Improvement percentages (shown as \textcolor{blue}{blue superscripts}) indicate SeeUPO's relative improvement over each baseline.}
\label{tab:main_results}
\small
\setlength{\tabcolsep}{6pt}
\begin{tabular}{@{}l cc cc cc@{}}
\toprule
\textbf{Model \& Method} &
\multicolumn{2}{c}{\textbf{AppWorld}} &
\multicolumn{2}{c}{\textbf{BFCL v4}} &
\multicolumn{2}{c}{\textbf{Avg.}} \\
\cmidrule(lr){2-3}\cmidrule(lr){4-5}\cmidrule(l){6-7}
 & avg@4 & pass@4 &
 avg@4 & pass@4 &
 avg@4 & pass@4 \\
\midrule
\textit{Qwen2.5-14B}   & 4.825 & 7.018 & 25.50 & 35.75 & 15.16 & 21.39  \\
+PPO    & 40.79\textcolor{blue}{\tiny$^{+24.7\%}$} & 57.89\textcolor{blue}{\tiny$^{+21.2\%}$} & 44.75\textcolor{blue}{\tiny$^{+23.5\%}$} & 56.00\textcolor{blue}{\tiny$^{+1.8\%}$} & 42.77\textcolor{blue}{\tiny$^{+24.1\%}$} & 56.95\textcolor{blue}{\tiny$^{+11.7\%}$}  \\
+GRPO    & 35.53\textcolor{blue}{\tiny$^{+43.3\%}$} & 49.12\textcolor{blue}{\tiny$^{+42.9\%}$} & 46.25\textcolor{blue}{\tiny$^{+19.5\%}$} & 55.00\textcolor{blue}{\tiny$^{+3.6\%}$} & 40.89\textcolor{blue}{\tiny$^{+29.8\%}$} & 52.06\textcolor{blue}{\tiny$^{+22.1\%}$}  \\
+GSPO   & 24.12\textcolor{blue}{\tiny$^{+111.0\%}$} & 38.60\textcolor{blue}{\tiny$^{+81.8\%}$} & 50.75\textcolor{blue}{\tiny$^{+8.9\%}$} & 56.00\textcolor{blue}{\tiny$^{+1.8\%}$} & 37.44\textcolor{blue}{\tiny$^{+41.9\%}$} & 47.30\textcolor{blue}{\tiny$^{+34.4\%}$}  \\
\rowcolor{gray!20}
\textbf{SeeUPO (ours)} &
\textbf{50.88} & \textbf{70.18} &
\textbf{55.25} & \textbf{57.00} &
\textbf{53.07} & \textbf{63.59} \\
\midrule
\textit{Qwen3-14B}   & 20.18 & 35.09 & 40.25 & 47.00 & 30.22 & 41.05  \\
+PPO    & 35.09\textcolor{blue}{\tiny$^{+81.2\%}$} & 54.39\textcolor{blue}{\tiny$^{+48.4\%}$} & 43.75\textcolor{blue}{\tiny$^{+32.6\%}$} & 52.00\textcolor{blue}{\tiny$^{+25.0\%}$} & 39.42\textcolor{blue}{\tiny$^{+54.3\%}$} & 53.20\textcolor{blue}{\tiny$^{+36.9\%}$}  \\
+GRPO    & 40.35\textcolor{blue}{\tiny$^{+57.6\%}$} & 57.89\textcolor{blue}{\tiny$^{+39.4\%}$} & 44.50\textcolor{blue}{\tiny$^{+30.3\%}$} & 53.00\textcolor{blue}{\tiny$^{+22.6\%}$} & 42.43\textcolor{blue}{\tiny$^{+43.3\%}$} & 55.45\textcolor{blue}{\tiny$^{+31.4\%}$}  \\
+GSPO   & 32.89\textcolor{blue}{\tiny$^{+93.4\%}$} & 52.63\textcolor{blue}{\tiny$^{+53.3\%}$} & 45.75\textcolor{blue}{\tiny$^{+26.8\%}$} & 55.00\textcolor{blue}{\tiny$^{+18.2\%}$} & 39.32\textcolor{blue}{\tiny$^{+54.6\%}$} & 53.82\textcolor{blue}{\tiny$^{+35.4\%}$}  \\
\rowcolor{gray!20}
\textbf{SeeUPO (ours)} &
\textbf{63.60} & \textbf{80.70} &
\textbf{58.00} & \textbf{65.00} &
\textbf{60.80} & \textbf{72.85} \\
\bottomrule
\end{tabular}
\end{table}

\subsubsection{Performance and Training Dynamics Analysis}
\label{subsubsec:performance_analysis}

\begin{figure}[h]
\centering
\includegraphics[width=\textwidth]{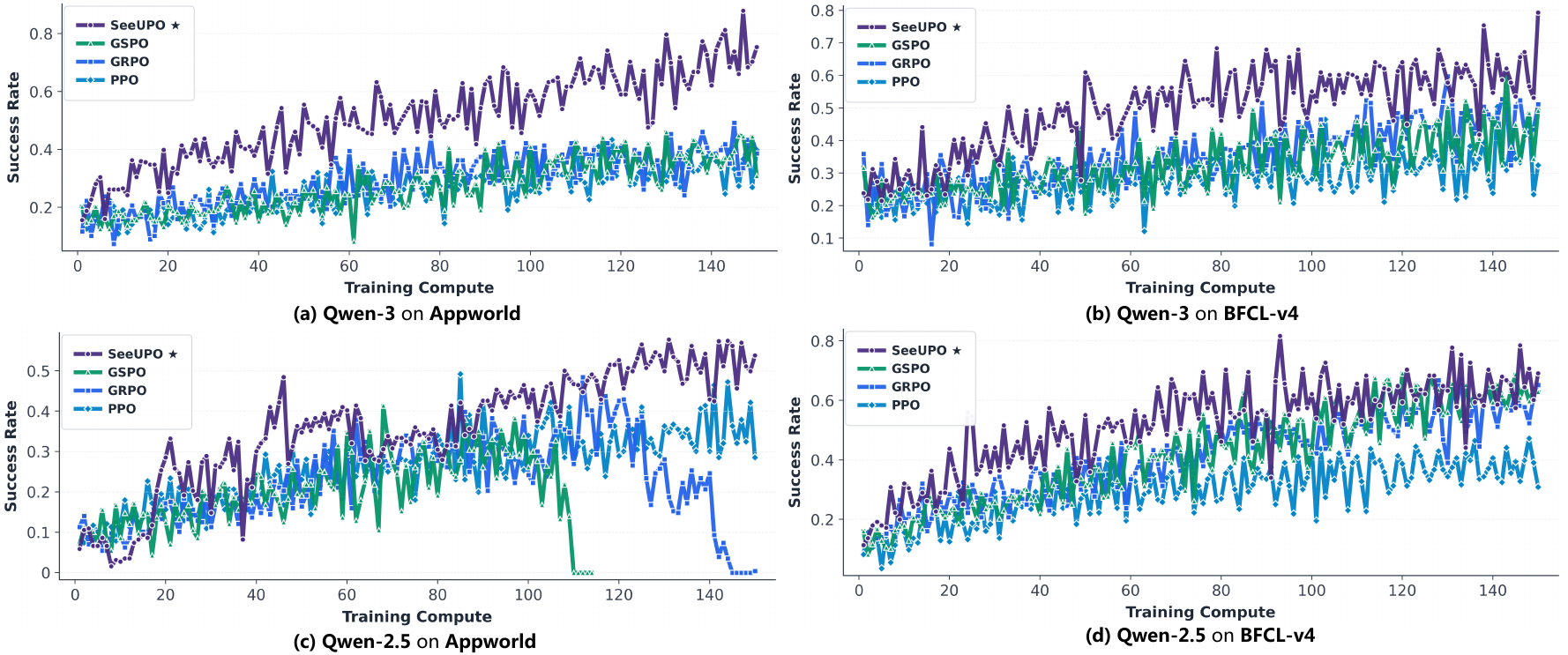}
\caption{Training success rate comparison of SeeUPO and baselines. Subplots (a)-(d) show results for Qwen-3 model on Appworld and BFCL, and Qwen-2.5 model on Appworld and BFCL, respectively.}
\label{fig:baseline_compare}
\end{figure} 

\textbf{Training Stability:} Across all four scenarios, SeeUPO maintains stable training curves without catastrophic failures. In contrast, both GRPO and GSPO exhibit significant performance collapse in the Qwen-2.5 + AppWorld setting (subplot (c)). This catastrophic failure reflects a fundamental limitation of existing critic-free methods in multi-turn scenarios: they suffer from \textbf{advantage estimation bias} when applied to multi-turn tasks, and \textbf{lack monotonic improvement guarantees}. In contrast, SeeUPO addresses these issues through its theoretically-grounded sequential update mechanism, which provides monotonic improvement guarantees inherited from the HAML framework (Theorem 1 in Appendix~\ref{sec:appendix_mirror_learning}) and enables implicit turn-level credit assignment through advantage function decomposition.

\textbf{Final Performance:} The training curves consistently show SeeUPO achieving the highest final success rates across all configurations. The performance gaps are particularly pronounced in the Qwen-3 scenarios, where SeeUPO maintains a clear 20-30 percentage point advantage in pass@4 over baselines throughout training. As shown in Table~\ref{tab:main_results}, SeeUPO achieves substantial improvements on Qwen3-14B, a model with built-in reasoning capabilities, with avg@4 improvements ranging from 43.3\% to 54.6\% across different baselines. On the weaker Qwen2.5-14B model, SeeUPO still demonstrates significant gains, with avg@4 improvements ranging from 24.1\% to 41.9\%. These substantial gains validate that training stability translates directly to superior generalization, consistent with the theoretical guarantee that the reverse-order sequential update mechanism enables convergence to globally optimal policies through backward induction (Theorem 2 in Appendix~\ref{sec:seeupo_global_convergence}).

\subsubsection{Computational Efficiency Analysis}
\label{subsubsec:efficiency}

SeeUPO introduces additional computational overhead compared to baseline methods due to its turn-oriented sequential update mechanism and advantage correction term computation. Therefore, it is necessary to analyze its computational efficiency. We evaluate two key efficiency metrics: \textbf{Training Step Time} (time per training step) and \textbf{Computational Resources} (number of H20 GPUs required). All experiments were conducted on clusters of NVIDIA H20 (96GB) GPUs, with each cluster consisting of 8 GPUs. The efficiency results are presented in Table~\ref{tab:efficiency}.

\begin{table}[h]
\centering
\caption{Computational efficiency comparison across methods. Columns show \textit{Training Step Time} (seconds per training step) for each benchmark and \textit{GPUs} (number of H20 GPUs required). Multiplier factors (shown as \textcolor{red}{red superscripts}) indicate SeeUPO's computation time relative to each baseline method.}
\label{tab:efficiency}
\small
\setlength{\tabcolsep}{6pt}
\begin{tabular}{@{}l ccc c@{}}
\toprule
\textbf{Method} &
\textbf{AppWorld (s)} &
\textbf{BFCL v4 (s)} &
\textbf{Avg. (s)} &
\textbf{GPUs} \\
\midrule
\multicolumn{5}{l}{\textit{Qwen2.5-14B}} \\
\midrule
PPO    & 3026.74\textcolor{red}{\tiny$^{\times 1.39}$} & 2602.88\textcolor{red}{\tiny$^{\times 1.38}$} & 2814.81\textcolor{red}{\tiny$^{\times 1.38}$} & 16 \\
GRPO    & 2377.59\textcolor{red}{\tiny$^{\times 1.76}$} & 2739.73\textcolor{red}{\tiny$^{\times 1.31}$} & 2558.66\textcolor{red}{\tiny$^{\times 1.52}$} & 8 \\
GSPO   & 2263.56\textcolor{red}{\tiny$^{\times 1.85}$} & 2466.14\textcolor{red}{\tiny$^{\times 1.46}$} & 2364.85\textcolor{red}{\tiny$^{\times 1.65}$} & 8 \\
\rowcolor{gray!20}
\textbf{SeeUPO (ours)} & \textbf{4194.70} & \textbf{3595.08} & \textbf{3894.89} & \textbf{8} \\
\midrule
\multicolumn{5}{l}{\textit{Qwen3-14B}} \\
\midrule
PPO    & 3002.29\textcolor{red}{\tiny$^{\times 1.82}$} & 2579.04\textcolor{red}{\tiny$^{\times 1.22}$} & 2790.67\textcolor{red}{\tiny$^{\times 1.54}$} & 16 \\
GRPO    & 2938.37\textcolor{red}{\tiny$^{\times 1.86}$} & 2667.63\textcolor{red}{\tiny$^{\times 1.18}$} & 2802.99\textcolor{red}{\tiny$^{\times 1.54}$} & 8 \\
GSPO   & 3066.94\textcolor{red}{\tiny$^{\times 1.78}$} & 2710.48\textcolor{red}{\tiny$^{\times 1.16}$} & 2888.71\textcolor{red}{\tiny$^{\times 1.49}$} & 8 \\
\rowcolor{gray!20}
\textbf{SeeUPO (ours)} & \textbf{5462.35} & \textbf{3145.06} & \textbf{4303.71} & \textbf{8} \\
\bottomrule
\end{tabular}
\end{table}

As shown in Table~\ref{tab:efficiency}, SeeUPO demonstrates significantly superior performance compared to other algorithms, but its turn-by-turn sequential update mechanism inevitably incurs longer training time. We find that this computational overhead is within an acceptable range: with approximately 1.5$\times$ the training time, SeeUPO achieves notably faster convergence speed and reaches final performance levels far exceeding those of other algorithms. Moreover, in terms of computational resource consumption, \textbf{SeeUPO uses the same amount of resources as the two critic-free algorithms (GRPO and GSPO), requiring only 8 GPUs (1 cluster) compared to PPO's 16 GPUs (2 clusters)}.

\subsection{Additional Comparative Experiments}
\label{sec:ablation_studies}

To investigate the impact of different design choices on overall training performance, we conduct additional comparative experiments focusing on two critical aspects of SeeUPO: \textbf{the update order}, which corresponds to our novel sequential update mechanism that enables backward induction for global optimality (Section~\ref{par:policy_update}), and \textbf{the normalization strategy}, which corresponds to our bandit-based advantage estimation mechanism (Section~\ref{par:advantage_estimation}). All comparative experiments are conducted using Qwen3-14B on both AppWorld and BFCL v4 benchmarks.

\subsubsection{Comparison on Update Order}
\label{subsec:ablation_update_order}

The sequential update mechanism is a core component of SeeUPO (Section~\ref{par:policy_update}). As discussed in Section~\ref{par:theoretical_guarantees}, the reverse update order is theoretically motivated by backward induction, which enables convergence to globally optimal policies (Theorem 2 in Appendix~\ref{sec:seeupo_global_convergence}). However, the specific order in which agents are updated may influence training dynamics and final performance in practice. To empirically validate this theoretical insight, we compare three update order strategies:

\begin{itemize}
    \item \textbf{Natural order}: Agents are updated in the natural execution order (turn 1, turn 2, ..., turn $T$).
    \item \textbf{Reverse order}: Agents are updated in reverse execution order (turn $T$, turn $T-1$, ..., turn 1) (our default setting).
    \item \textbf{Random order}: Agents are updated in a randomly sampled permutation at each iteration.
\end{itemize}

\begin{figure}[h]
\centering
\includegraphics[width=1.0\textwidth]{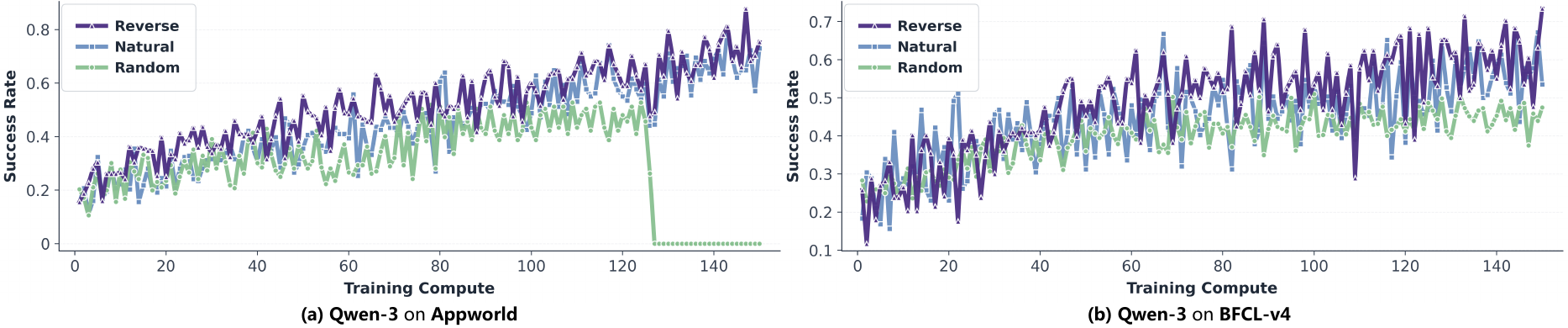}
\caption{Training dynamics comparison of different update order strategies: (a) Qwen-3 evaluated on AppWorld and (b) Qwen-3 evaluated on BFCL-v4.}
\label{fig:ablation_update_order}
\end{figure}

\begin{table}[h]
\centering
\caption{Comparison of update order strategies. Columns show \textit{avg@4} and \textit{pass@4} for each benchmark, plus their averages (Avg.). All values are in percent (\%). The backbone model is Qwen3-14B. \textbf{Bolded numbers} highlight the best results.}
\label{tab:ablation_update_order}
\small
\setlength{\tabcolsep}{8pt}
\begin{tabular}{@{}l cc cc cc@{}}
\toprule
\textbf{Update Order} &
\multicolumn{2}{c}{\textbf{AppWorld}} &
\multicolumn{2}{c}{\textbf{BFCL v4}} &
\multicolumn{2}{c}{\textbf{Avg.}} \\
\cmidrule(lr){2-3}\cmidrule(lr){4-5}\cmidrule(l){6-7}
 & avg@4 & pass@4 & avg@4 & pass@4 & avg@4 & pass@4 \\
\midrule
Natural order & 56.14 & 75.44 & 54.25 & 59.00 & 55.20 & 67.22 \\
Random order & 33.33 & 45.61 & 51.25 & 61.00 & 42.29 & 53.31 \\
\rowcolor{gray!20}
\textbf{Reverse order (ours)} & \textbf{63.60} & \textbf{80.70} & \textbf{58.00} & \textbf{65.00} & \textbf{60.80} & \textbf{72.85} \\
\bottomrule
\end{tabular}
\end{table}

The results are presented in Table~\ref{tab:ablation_update_order} and Figure~\ref{fig:ablation_update_order}. As expected, the reverse order strategy achieves the best performance across both benchmarks, confirming the theoretical insight that updating agents in reverse execution order enables backward induction to achieve global optimality. On AppWorld, reverse order achieves 63.60\% avg@4 and 80.70\% pass@4, outperforming natural order (56.14\% / 75.44\%) and random order (33.33\% / 45.61\%). On BFCL v4, reverse order achieves 58.00\% avg@4 and 65.00\% pass@4, outperforming natural order (54.25\% / 59.00\%) and random order (51.25\% / 61.00\%). Natural order achieves competitive performance, as it still maintains the monotonic improvement guarantee inherited from the HAML framework. However, each update in natural order only achieves local optimality at that moment, preventing direct optimization toward the global optimum. In contrast, random order performs substantially worse, particularly on AppWorld. Unlike natural and reverse orders, which maintain fixed update logic that respects the sequential execution structure, random order completely disrupts the logical chain inherent to the execution sequence. While random update orders have been proven effective in synchronous MARL settings~\citep{HARL}, they fail in sequential multi-agent systems where the update order must be strongly coupled with the execution order. Specifically, random ordering breaks the backward induction mechanism by preventing the guarantee that subsequent turns have been updated to their optimal policies when updating an earlier turn, thereby destroying the logical dependency structure required for global optimality.

\subsubsection{Comparison on Normalization}
\label{subsec:ablation_norm}

The advantage estimation in SeeUPO uses a group-relative approach. While without any subsequent normalization is theoretically grounded, different normalization strategies may impact training stability and performance. We investigate three normalization variants:

\begin{itemize}
    \item \textbf{No normalization}: Use raw advantage estimates without any normalization, preserving drift function properties required for convergence guarantees.
    \item \textbf{Group-level normalization}: Normalize advantages within each group of joint actions sampled from the same initial state, which violates convergence guarantees similar to GSPO.
    \item \textbf{Batch-level normalization}: Normalize advantages by subtracting the batch mean and dividing by the batch standard deviation (our default setting), balancing theoretical correctness and empirical performance.
\end{itemize}

\begin{table}[h]
\centering
\caption{Comparison of normalization strategies. Columns show \textit{avg@4} and \textit{pass@4} for each benchmark, plus their averages (Avg.). All values are in percent (\%). The backbone model is Qwen3-14B. \textbf{Bolded numbers} highlight the best results.}
\label{tab:ablation_norm}
\small
\setlength{\tabcolsep}{6pt}
\begin{tabular}{@{}l cc cc cc@{}}
\toprule
\textbf{Normalization} &
\multicolumn{2}{c}{\textbf{AppWorld}} &
\multicolumn{2}{c}{\textbf{BFCL v4}} &
\multicolumn{2}{c}{\textbf{Avg.}} \\
\cmidrule(lr){2-3}\cmidrule(lr){4-5}\cmidrule(l){6-7}
 & avg@4 & pass@4 & avg@4 & pass@4 & avg@4 & pass@4 \\
\midrule
No normalization & 39.91 & 57.89 & 54.25 & 59.00 & 47.08 & 58.45 \\
Group-level normalization & 62.35 & 79.21 & \textbf{58.20} & 61.00 & 60.28 & 70.10 \\
\rowcolor{gray!20}
\textbf{Batch-level normalization (ours)} & \textbf{63.60} & \textbf{80.70} & 58.00 & \textbf{65.00} & \textbf{60.80} & \textbf{72.85} \\
\bottomrule
\end{tabular}
\end{table}

The results are presented in Table~\ref{tab:ablation_norm}. No normalization is theoretically grounded, as it preserves the drift function properties required for convergence guarantees. However, it achieves the lowest empirical performance (39.91\% avg@4 and 57.89\% pass@4 on AppWorld; 54.25\% avg@4 and 59.00\% pass@4 on BFCL v4), indicating that some form of normalization is essential for numerical stability during training. Group normalization achieves competitive test performance (62.35\% avg@4 and 79.21\% pass@4 on AppWorld; 58.20\% avg@4 and 61.00\% pass@4 on BFCL v4) but violates convergence guarantees similar to GSPO. To balance theoretical correctness and numerical stability, we 
use batch-level normalization, which achieves the best overall performance (63.60\% avg@4 and 80.70\% pass@4 on AppWorld; 58.00\% avg@4 and 65.00\% pass@4 on BFCL v4) while maintaining convergence properties. The group-relative mean subtraction (used in our advantage estimation) provides sufficient variance reduction, and batch normalization further stabilizes training without breaking theoretical guarantees.

%% file: sections/SeeUPO_Main/conclusion.tex
\section{Conclusions}
\label{sec:conclusion}

\textit{In this work, we provide a systematic theoretical analysis of mainstream backbone RL algorithms for LLM training and propose SeeUPO, a novel critic-free algorithm with convergence guarantees in multi-turn scenarios.} Specifically, we categorize existing algorithms along two dimensions---advantage estimation (GAE vs. GRAE) and policy update mechanisms (REINFORCE vs. PPU)---and analyze their convergence properties in both single-turn and multi-turn scenarios. Our analysis reveals a fundamental trade-off: mainstream algorithms cannot simultaneously achieve both critic-free operation and convergence guarantees in multi-turn settings. To address this limitation, we propose \textit{SeeUPO}, which models multi-turn interactions as sequentially-executed multi-agent bandit problems and employs reverse-order sequential policy updates. SeeUPO inherits monotonic improvement guarantees from the HAML framework and achieves convergence to global optimality through backward induction. Experimental results on AppWorld and BFCL v4 demonstrate SeeUPO's substantial improvements: relative gains of \textbf{43.3\%--54.6\%} on Qwen3-14B and \textbf{24.1\%--41.9\%} on Qwen2.5-14B (averaged across benchmarks), along with superior training stability that avoids the catastrophic failures.

\paragraph{Limitations and Future Work}
For limitations, the HAML framework theoretically requires heterogeneous policies across agents, which in traditional RL corresponds to non-shared network parameters. In the LLM context, we argue that the sufficiently large parameter space provides ample representational capacity for different turns or roles to develop functionally distinct behaviors, making the policies non-homogeneous from a turn-level perspective despite parameter sharing. This assumption becomes increasingly justified with larger-scale architectures such as Mixture-of-Experts (MoE) models, where different experts can specialize for different turns. Future work could explore explicit turn-specific parameterization to better satisfy the heterogeneity assumption.

What's more, our discussion of token-level and sequence-level RL modeling is grounded in the current mainstream paradigm of \textit{next-token prediction} for LLMs. While SeeUPO is designed within this framework, the sequence-level RL approach is essentially a meta turn-level modeling and optimization method. This approach may have potential applications in alternative paradigms such as \textit{multi-token/sentence prediction}~\citep{barrault2024large} or \textit{rectified flow diffusion} architectures~\citep{zhang2025survey}, which could be explored in future research.

%% file: sections/SeeUPO_Main/appendix.tex
\section{Mirror Learning and Multi-Agent Mirror Learning}
\label{sec:appendix_mirror_learning}

This section provides a comprehensive introduction to the Mirror Learning framework and its extension to multi-agent settings, Heterogeneous-Agent Mirror Learning (HAML). These frameworks serve as the theoretical foundation for analyzing the convergence properties of various reinforcement learning algorithms discussed in this paper.

\subsection{Mirror Learning Framework}

The Mirror Learning framework~\citep{Mirror} provides a unified theoretical foundation for analyzing the convergence of reinforcement learning algorithms. It introduces three key concepts: \textbf{drift-function}, \textbf{neighbourhood operator}, and \textbf{mirror operator}.

\paragraph{Drift Function}
The drift-function is defined as a mapping $\mathfrak{D} : \Pi \times S \;\longrightarrow\;
\bigl\{\mathfrak{D}_\pi(\,\cdot | s)\colon \mathcal{P}(A)\to\mathbb{R}\bigr\}$,
which captures the \textit{update cost} of the new policy relative to the old policy, where $\Pi$ is the policy space.

\paragraph{Neighbourhood Operator}
The neighbourhood operator is defined as $\mathcal{N} : \Pi \to \mathbb{P}(\Pi)$, where $\mathbb{P}(\Pi)$ is the power set of $\Pi$, reflecting the \textit{search space} for optimizing the new policy.

\paragraph{Mirror Operator}
The mirror operator is a mapping applied to the value function under a certain policy, expressed as:
\begin{equation}
    \left[ \mathcal{M}_{\mathfrak{D}}^{\bar{\pi}} V_\pi \right](s) = \mathbb{E}_{a \sim \bar{\pi}} \left[ A_\pi(s, a) \right] - \frac{\nu_\pi^{\bar{\pi}}(s)}{\beta_\pi(s)} \mathfrak{D}_\pi(\bar{\pi} | s),
    \label{eq:mirror operator}
\end{equation}
where $\pi$ and $\bar{\pi}$ represent the old and new policies, respectively, $A_\pi(s, a)$ denotes the advantage function, which can equivalently be replaced by $Q_\pi(s, a)$. The term $\nu_\pi^{\bar{\pi}}(s)$ is a functional that ensures continuity of expectations between the two policies under the drift context, and $\beta_\pi(s)$ represents the sampling state distribution. Lastly, $\mathfrak{D}_\pi(\bar{\pi} | s)$ quantifies the drift-function of the new policy relative to the old policy at state $s$.

\paragraph{Policy Update Rule.}
The mirror learning framework establishes theoretical convergence guarantees\footnote{In this context, convergence refers to properties such as a \textit{monotonic increase} in the expected return and \textit{eventual convergence} to an optimal policy.} for algorithms utilizing the mirror operator. These guarantees are contingent upon both the drift-function and the neighbourhood operator satisfying certain properties. The update rule is formally defined as:
\begin{equation}
    \pi_{\text{new}} = \underset{\bar{\pi} \in \mathcal{N}(\pi_{\text{old}})}{\arg\max} \mathbb{E}_{s \sim \beta_{\pi}} \left[ \left[ \mathcal{M}^{\bar{\pi}}_{\mathfrak{D}} V_{\pi_{\text{old}}} \right](s) \right].    
    \label{eq:mirror learning}
\end{equation}

\paragraph{Required Conditions for Convergence.}
Specifically, the conditions that the drift-function must satisfy are:
\begin{enumerate}
  \item \emph{(Nonnegativity)}
  For all $s \in S$, $\pi$ and $\bar{\pi} \in \Pi$,
    $\mathfrak{D}_{\pi}(\bar{\pi}\mid s) \ge 
    \mathfrak{D}_{\pi}(\pi\mid s) = 0.$
  \item \emph{(Zero gradient)}
  For all $s \in S$, $\pi$ and $\bar{\pi} \in \Pi$,
  $\nabla_{\bar{\pi}(\cdot\mid s)}
    \mathfrak{D}_{\pi}(\bar{\pi}\mid s)
    \bigl\vert_{\bar{\pi}=\pi} = 0.$
  (All its Gâteaux derivatives are zero).
\end{enumerate}

The conditions that the neighbourhood operator must satisfy are:
\begin{enumerate}
   \item \textit{(Continuity)} $\mathcal{N}$ is a continuous map.
    \item \textit{(Compactness)} Every $\mathcal{N}(\pi)$ is a compact set.
    \item \textit{(Closed ball)} There exists a metric $\chi : \Pi \times \Pi \to \mathbb{R}$, such that for all $\pi \in \Pi$, there exists $\zeta > 0$, such that $\chi(\pi, \bar{\pi}) \leqslant \zeta$ implies $\bar{\pi} \in \mathcal{N}(\pi)$.
\end{enumerate}

The mirror learning in Equation~\ref{eq:mirror learning} represents a generalized form of policy update, which is compatible with commonly used convergent algorithms such as TRPO and PPU. It is worth noting that \textbf{mirror learning assumes an accurate estimation of the value function or advantage function}. This assumption is readily satisfied in classical reinforcement learning~\citep{RL}. However, this assumption may be violated in some backbone RL algorithms used for training LLMs as we show in the above sections.

\subsection{Heterogeneous-Agent Mirror Learning}

Building upon the mirror learning framework, \citep{HARL} propose the Heterogeneous-Agent Mirror Learning (HAML) framework to address cooperative multi-agent reinforcement learning problems. HAML extends the mirror learning paradigm to the multi-agent setting while preserving theoretical guarantees of monotonic improvement and convergence to Nash Equilibrium. In our work, we leverage this theory to address convergence issues in multi-turn policy updates.

\paragraph{Multi-Agent Advantage Decomposition.}
The key challenge in the multi-agent setting lies in coordinating policy updates among multiple agents. HAML addresses this challenge by \textit{virtually} decomposing the joint advantage function into conditional advantage functions for agent subsets (multi-agent advantage decomposition lemma), and sequentially updating each agent's policy based on its contribution to the global advantage. Specifically, the multi-agent advantage decomposition lemma states that for any agent subset $i_{1:m}$:
\begin{equation}
    A^{i_{1:m}}_\pi(s, a^{i_{1:m}}) = \sum_{j=1}^{m} A^{i_j}_\pi(s, a^{i_{1:j-1}}, a^{i_j}),
    \label{eq:multi-agent advantage decomposition}
\end{equation}
where $A^{i_j}_\pi(s, a^{i_{1:j-1}}, a^{i_j}) = Q^{i_{1:j}}_\pi(s, a^{i_{1:j}}) - Q^{i_{1:j-1}}_\pi(s, a^{i_{1:j-1}})$ represents the multi-agent advantage function that evaluates the contribution of agent $i_j$ given previous agents' actions $a^{i_{1:j-1}}$. Note that this advantage decomposition is performed virtually rather than in actual execution; whether agents execute actions sequentially or simultaneously does not affect the decomposition.

\paragraph{Heterogeneous-Agent Drift Functional}
Beyond the advantage decomposition, HAML extends other concepts from mirror learning in a conditional manner. Correspondingly, HAML introduces three key components: (1) The \textbf{heterogeneous-agent drift functional} (HADF) for agent $i \in \mathcal{N}$ is defined as:
\begin{equation}
    \mathfrak{D}^i : \boldsymbol{\Pi} \times \boldsymbol{\Pi} \times \mathbb{P}(-i) \times \mathcal{S} \to \{\mathfrak{D}^i_\pi(\cdot|s, \bar{\pi}^{j_{1:m}}) : \mathcal{P}(\mathcal{A}^i) \to \mathbb{R}\},
    \label{eq:hadf}
\end{equation}
where $\boldsymbol{\Pi}$ denotes the joint policy space, $\mathbb{P}(-i)$ is the power set of agents excluding $i$, and $\mathcal{A}^i$ is the action space of agent $i$. The drift $\mathfrak{D}^i_\pi(\hat{\pi}^i|s, \bar{\pi}^{j_{1:m}})$ captures the update cost between $\pi^i$ and $\hat{\pi}^i$, given that agents $j_{1:m}$ just updated to $\bar{\pi}^{j_{1:m}}$.

\paragraph{Neighbourhood Operator}
(2) The \textbf{neighbourhood operator} $\mathcal{U}^i : \boldsymbol{\Pi} \times \Pi^i \to \mathbb{P}(\Pi^i)$ for agent $i$ is defined such that $\forall \pi^i \in \Pi^i$, $\mathcal{U}^i_\pi(\pi^i)$ contains a closed ball, \emph{i.e.}, there exists a state-wise monotonically non-decreasing metric $\chi : \Pi^i \times \Pi^i \to \mathbb{R}$ such that $\forall \pi^i \in \Pi^i$ there exists $\delta^i > 0$ such that $\chi(\pi^i, \bar{\pi}^i) \leq \delta^i \implies \bar{\pi}^i \in \mathcal{U}^i_\pi(\pi^i)$.

\paragraph{Heterogeneous-Agent Mirror Operator}
(3) The \textbf{heterogeneous-agent mirror operator} (HAMO) integrates the multi-agent advantage function with the HADF:
\begin{equation}
    \left[\mathcal{M}^{(\hat{\pi}^i)}_{\mathfrak{D}^i, \bar{\pi}^{j_{1:m}}} A_\pi\right](s) \triangleq \mathbb{E}_{a^{j_{1:m}} \sim \bar{\pi}^{j_{1:m}}, a^i \sim \hat{\pi}^i} \left[A^i_\pi(s, a^{j_{1:m}}, a^i)\right] - \mathfrak{D}^i_\pi(\hat{\pi}^i \mid s, \bar{\pi}^{j_{1:m}}),
    \label{eq:hamo}
\end{equation}
where the expectation is taken over the joint action of agents $j_{1:m}$ following their updated policies and agent $i$ following the candidate policy $\hat{\pi}^i$.

\paragraph{HAML Update Rule.}
The HAML update rule follows a sequential structure: at each iteration $k$, a random permutation $i_{1:n}$ of agents is drawn, and each agent $i_m$ updates by solving:
\begin{equation}
    \pi^{i_m}_{k+1} = \underset{\bar{\pi}^{i_m} \in \mathcal{U}^{i_m}_{\pi_k}(\pi^{i_m}_k)}{\arg\max} \; \mathbb{E}_{s \sim \beta_{\pi_k}} \left[\left[\mathcal{M}^{(\bar{\pi}^{i_m})}_{\mathfrak{D}^{i_m}, \pi^{i_{1:m-1}}_{k+1}} A_{\pi_k}\right](s)\right],
    \label{eq:haml_update_rule}
\end{equation}
where $\mathcal{U}^{i_m}_{\pi_k}(\pi^{i_m}_k)$ is the neighbourhood operator for agent $i_m$, and $\beta_{\pi_k}$ is the sampling distribution.

\paragraph{Required Conditions for HAML Convergence.}
The HADF must satisfy analogous properties to the single-agent case:
\begin{enumerate}
    \item \emph{(Nonnegativity)} For all $s \in \mathcal{S}$, joint policy $\boldsymbol{\pi} \in \boldsymbol{\Pi}$, agent $i$'s policies $\pi^i, \bar{\pi}^i \in \Pi^i$, and joint policy $\bar{\boldsymbol{\pi}}^{j_{1:m}}$ of agents $j_{1:m} \subseteq \mathcal{N} \setminus \{i\}$:
    \begin{equation}
        \mathfrak{D}^i_{\boldsymbol{\pi}}(\bar{\pi}^i \mid s, \bar{\boldsymbol{\pi}}^{j_{1:m}}) \ge \mathfrak{D}^i_{\boldsymbol{\pi}}(\pi^i \mid s, \bar{\boldsymbol{\pi}}^{j_{1:m}}) = 0.
    \end{equation}
    
    \item \emph{(Zero gradient)} For all $s \in \mathcal{S}$, joint policy $\boldsymbol{\pi} \in \boldsymbol{\Pi}$, and joint policy $\bar{\boldsymbol{\pi}}^{j_{1:m}}$ of agents $j_{1:m} \subseteq \mathcal{N} \setminus \{i\}$, all Gâteaux derivatives vanish at the reference policy:
    \begin{equation}
        \nabla_{\bar{\pi}^i(\cdot\mid s)} \mathfrak{D}^i_{\boldsymbol{\pi}}(\bar{\pi}^i \mid s, \bar{\boldsymbol{\pi}}^{j_{1:m}}) \Big|_{\bar{\pi}^i=\pi^i} = 0.
    \end{equation}
\end{enumerate}

The neighbourhood operator $\mathcal{U}^i : \boldsymbol{\Pi} \times \Pi^i \to \mathbb{P}(\Pi^i)$ must satisfy: (i) \emph{continuity} as a set-valued map, (ii) \emph{compactness} of $\mathcal{U}^i_{\boldsymbol{\pi}}(\pi^i)$ for all $\boldsymbol{\pi}, \pi^i$, and (iii) the \emph{closed ball property}: there exists a metric $\chi: \Pi^i \times \Pi^i \to \mathbb{R}$ such that for all $\pi^i \in \Pi^i$, there exists $\delta^i > 0$ with $\chi(\pi^i, \bar{\pi}^i) \leq \delta^i \Rightarrow \bar{\pi}^i \in \mathcal{U}^i_{\boldsymbol{\pi}}(\pi^i)$.

\begin{theorem}{Monotonic Improvement under HAML}{haml_monotonic}
\label{thm:haml_monotonic}
\textup{\textit{(Adapted from the Fundamental Theorem of HAML~\citep{HARL})}}
Let $\mathcal{N} = \{1, \ldots, n\}$ be the set of agents. For each agent $i \in \mathcal{N}$, let $\mathfrak{D}^i$ be a heterogeneous-agent drift functional satisfying the nonnegativity and zero gradient properties, and let $\mathcal{U}^i$ be a neighbourhood operator satisfying the closed ball property. Consider the policy sequence $\{\boldsymbol{\pi}_k\}_{k=0}^{\infty}$ generated by the HAML update rule: at each iteration $k$, agents update sequentially via:
\begin{equation}
    \pi^{i_m}_{k+1} = \argmax_{\bar{\pi}^{i_m} \in \mathcal{U}^{i_m}_{\boldsymbol{\pi}_k}(\pi^{i_m}_k)} \mathbb{E}_{s \sim \beta_{\boldsymbol{\pi}_k}} \left[\left[\mathcal{M}^{(\bar{\pi}^{i_m})}_{\mathfrak{D}^{i_m}, \boldsymbol{\pi}^{i_{1:m-1}}_{k+1}} A_{\boldsymbol{\pi}_k}\right](s)\right],
    \label{eq:haml}
\end{equation}
where $\beta_{\boldsymbol{\pi}} \in \mathcal{P}(\mathcal{S})$ is a positive sampling distribution. Then, the expected return is monotonically non-decreasing:
\begin{equation}
    J(\boldsymbol{\pi}_{k+1}) \geq J(\boldsymbol{\pi}_k), \quad \forall k \in \mathbb{N}.
\end{equation}
\end{theorem}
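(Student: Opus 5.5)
The plan follows the Fundamental Theorem of HAML, adapted to the notation above. First I will establish a joint performance-difference identity with sequential advantage decomposition. Write $\boldsymbol{\pi}_k$ for the old joint policy and $\boldsymbol{\pi}_{k+1}$ for the new one. Using the multi-agent advantage decomposition lemma (Eq.~\ref{eq:multi-agent advantage decomposition}) with the permutation $i_{1:n}$ drawn at iteration $k$, the joint advantage splits as
\begin{equation*}
A_{\boldsymbol{\pi}_k}(s,a)=\sum_{m=1}^{n}A^{i_m}_{\boldsymbol{\pi}_k}(s,a^{i_{1:m-1}},a^{i_m}).
\end{equation*}
Taking expectations under $a\sim\boldsymbol{\pi}_{k+1}$ gives
\begin{equation*}
\mathbb{E}_{a\sim\boldsymbol{\pi}_{k+1}}[A_{\boldsymbol{\pi}_k}(s,a)]=\sum_{m}\mathbb{E}_{a^{i_{1:m-1}}\sim\boldsymbol{\pi}^{i_{1:m-1}}_{k+1},\,a^{i_m}\sim\pi^{i_m}_{k+1}}\bigl[A^{i_m}_{\boldsymbol{\pi}_k}\bigr].
\end{equation*}
Each summand is exactly the advantage part of the HAMO (Eq.~\ref{eq:hamo}) evaluated at the update actually chosen.

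Second, I will link the return gap to the sum of mirror objectives. Following mirror learning, the key lemma is a lower bound
\begin{equation*}
J(\boldsymbol{\pi}_{k+1})-J(\boldsymbol{\pi}_k)\ \ge\ \mathbb{E}_{s\sim d_{\boldsymbol{\pi}_{k+1}}}\Bigl[\sum_m [\mathcal{M}^{(\pi^{i_m}_{k+1})}_{\mathfrak{D}^{i_m},\boldsymbol{\pi}^{i_{1:m-1}}_{k+1}}A_{\boldsymbol{\pi}_k}](s)\Bigr].
\end{equation*}
This comes from the performance difference lemma, $J(\boldsymbol{\pi}_{k+1})-J(\boldsymbol{\pi}_k)=\mathbb{E}_{s\sim d_{\boldsymbol{\pi}_{k+1}}}\mathbb{E}_{a\sim\boldsymbol{\pi}_{k+1}}[A_{\boldsymbol{\pi}_k}]$, combined with nonnegativity of the HADFs, since subtracting $\mathfrak{D}\ge0$ only lowers the right-hand side. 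The state-distribution mismatch between $d_{\boldsymbol{\pi}_{k+1}}$ and the sampling distribution $\beta_{\boldsymbol{\pi}_k}$ is handled as in mirror learning: the drift is scaled by $\nu/\beta$ inside the operator so that the expectation can be rewritten under $\beta_{\boldsymbol{\pi}_k}$. In the bandit setting relevant to SeeUPO this mismatch is trivial, because $s=\boldsymbol{s}_0$ is drawn from a fixed initial distribution.

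Third, I will show each per-agent term is nonnegative by comparing against the old policy. The candidate $\bar\pi^{i_m}=\pi^{i_m}_k$ lies in $\mathcal{U}^{i_m}_{\boldsymbol{\pi}_k}(\pi^{i_m}_k)$ by the closed ball property, so it is feasible. For this candidate the drift is $0$ by nonnegativity. Its advantage term is also $0$, because $\mathbb{E}_{a^{i_m}\sim\pi^{i_m}_k}[A^{i_m}_{\boldsymbol{\pi}_k}(s,a^{i_{1:m-1}},a^{i_m})]=\mathbb{E}[Q^{i_{1:m}}]-Q^{i_{1:m-1}}=0$ pointwise in $a^{i_{1:m-1}}$. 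Since the argmax does at least as well as this feasible candidate, the $m$-th objective value at $\pi^{i_m}_{k+1}$ is $\ge0$. Summing over $m$ and applying the lower bound from the second step gives $J(\boldsymbol{\pi}_{k+1})\ge J(\boldsymbol{\pi}_k)$.

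The main obstacle is the second step. The per-agent objectives are maximized in expectation under $\beta_{\boldsymbol{\pi}_k}$ rather than statewise, and the drifts of different agents are conditioned on different already-updated subsets. So I must check that the sum of $\beta$-weighted objectives lower-bounds the true return difference, without cross terms from later agents' updates changing earlier agents' conditional advantages. I would handle this by keeping every conditional advantage evaluated under the fixed old joint policy $\boldsymbol{\pi}_k$, so that the decomposition is exact. I would then restrict to the contextual-bandit (single state) case used by SeeUPO, where $\beta$ equals the initial distribution, and defer the general-MDP weighting argument to HAML.
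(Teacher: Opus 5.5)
\textbf{There is a genuine gap: the step you call the ``main obstacle'' is the actual content of the theorem, and you defer it.} The paper gives no proof of its own here; it imports the result from the Fundamental Theorem of HAML \citep{HARL}. Your skeleton matches that theorem's: exact decomposition under the fixed $\boldsymbol{\pi}_k$, feasibility of $\pi^{i_m}_k$ with zero drift and zero conditional advantage, then summation over $m$.

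Write $f^m_s(\bar\pi^{i_m}) := [\mathcal{M}^{(\bar\pi^{i_m})}_{\mathfrak{D}^{i_m},\boldsymbol{\pi}^{i_{1:m-1}}_{k+1}}A_{\boldsymbol{\pi}_k}](s)$. Your third step only gives $\mathbb{E}_{s\sim\beta_{\boldsymbol{\pi}_k}}[f^m_s(\pi^{i_m}_{k+1})]\ge 0$. Your second step needs nonnegativity averaged under $d_{\boldsymbol{\pi}_{k+1}}$. An inequality averaged under one distribution says nothing about another, and $\beta_{\boldsymbol{\pi}_k}$ and $d_{\boldsymbol{\pi}_{k+1}}$ are unrelated. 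The $\nu/\beta$ reweighting you invoke belongs to single-agent mirror learning. The HAMO used here has no such factor, and reweighting the drift would not move the advantage term off $\beta_{\boldsymbol{\pi}_k}$ in any case.

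Restricting to a contextual bandit does not rescue the stated theorem. There your argument closes only if $\beta_{\boldsymbol{\pi}_k}$ equals the initial-state distribution, because then $d_{\boldsymbol{\pi}_{k+1}}$ is that same distribution. That is an extra hypothesis: the theorem assumes only positivity of $\beta$, and it is stated for general HAML. Separately, your pointwise identity $\mathbb{E}_{a^{i_m}\sim\pi^{i_m}_k}[A^{i_m}_{\boldsymbol{\pi}_k}(s,a^{i_{1:m-1}},a^{i_m})]=0$ uses that the joint policy factorizes across agents given $s$, as in HAML. SeeUPO's turn policies condition on earlier turns' actions, so its multi-turn bandit is not literally the special case you describe.

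\textbf{The missing idea is to upgrade the averaged inequality to a statewise one.} This is exactly where positivity of $\beta$ enters, and your argument never uses it. Agent $i_m$'s objective has the separable form $\sum_s \beta_{\boldsymbol{\pi}_k}(s)\, f^m_s(\bar\pi^{i_m})$. Both the conditional-advantage term and the HADF at $s$ depend on the candidate only through $\bar\pi^{i_m}(\cdot|s)$, and $f^m_s(\pi^{i_m}_k)=0$. Suppose the maximizer had $f^m_{s_0}(\pi^{i_m}_{k+1})<0$ at some $s_0$. Reverting it to $\pi^{i_m}_k$ at $s_0$ alone would raise the objective by $\beta_{\boldsymbol{\pi}_k}(s_0)\,\bigl|f^m_{s_0}(\pi^{i_m}_{k+1})\bigr|>0$, contradicting optimality. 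This requires the reverted policy to stay in $\mathcal{U}^{i_m}_{\boldsymbol{\pi}_k}(\pi^{i_m}_k)$, which the state-wise monotone metric in HAML's neighbourhood definition is meant to secure.

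Hence every HAMO is nonnegative at every state, so for all $s$
\[
\mathbb{E}_{a\sim\boldsymbol{\pi}_{k+1}}[A_{\boldsymbol{\pi}_k}(s,a)]\ \ge\ \sum_m \mathfrak{D}^{i_m}_{\boldsymbol{\pi}_k}\bigl(\pi^{i_m}_{k+1}\mid s,\boldsymbol{\pi}^{i_{1:m-1}}_{k+1}\bigr)\ \ge\ 0 .
\]
The recursion $V_{\boldsymbol{\pi}_{k+1}}(s)-V_{\boldsymbol{\pi}_k}(s)\ge\gamma\inf_{s'}\bigl[V_{\boldsymbol{\pi}_{k+1}}(s')-V_{\boldsymbol{\pi}_k}(s')\bigr]$ then gives statewise value improvement, hence $J(\boldsymbol{\pi}_{k+1})\ge J(\boldsymbol{\pi}_k)$. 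Your performance-difference route also works once the statewise bound is available, under any weighting. Without this step, you have proved only the transition-free case with $\beta_{\boldsymbol{\pi}_k}$ equal to the initial distribution, not the theorem as stated.
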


\paragraph{Convergence Properties.}
Under additional regularity conditions (continuity and compactness of the neighbourhood operators, continuous dependence of the sampling distribution on $\boldsymbol{\pi}$), the HAML framework~\citep{HARL} further establishes: (i) convergence of value functions $\lim_{k \to \infty} V_{\boldsymbol{\pi}_k}(s)$, and (ii) when the update order is \textbf{randomly sampled at each iteration with every permutation having positive probability}, the limit points are Nash equilibria. This random ordering requirement is crucial for the Nash equilibrium guarantee in general cooperative games. However, in our multi-turn contextual bandit setting, the problem structure is fundamentally different: the fixed sequential execution order and the absence of state transitions enable backward induction. This special structure allows us to establish a result---convergence to \textbf{global optimum} under a \textit{fixed reverse update order}, without requiring random permutations (see Section~\ref{sec:seeupo_global_convergence}).

\paragraph{Assumption on Advantage Estimation.}
HAML assumes accurate estimation of the joint advantage function $A_{\boldsymbol{\pi}}(s, \boldsymbol{a})$. In contextual bandits, the advantage equals the centered reward: $A_{\boldsymbol{\pi}}(\boldsymbol{s}_0, \boldsymbol{a}) = r(\boldsymbol{s}_0, \boldsymbol{a}) - V_{\boldsymbol{\pi}}(\boldsymbol{s}_0)$. The group relative advantage estimator in our practical method provides unbiased estimates by using the group mean reward as baseline, eliminating the structural bias that arises in MDP settings (see Section~\ref{subsec:advantage_estimation}).

\section{Global Optimality of SeeUPO}
\label{sec:seeupo_global_convergence}

Building upon the HAML framework introduced in Section~\ref{sec:appendix_mirror_learning}, we now establish the global optimality guarantee for SeeUPO. While HAML provides monotonic improvement and convergence to Nash equilibria under random update orders, the multi-turn contextual bandit structure of our problem---with its fixed execution order and absence of state transitions---enables a different result. This section proves that the \textbf{reverse update order} guarantees convergence to the \textbf{global optimum} by exploiting backward induction.

\paragraph{Motivation for the optimal continuation value.}
To formalize backward induction in the multi-turn contextual bandit, we introduce the optimal continuation value $V^*$. It converts the remaining decision horizon into a single scalar value for each history, which allows us to express the global optimal policy in a per-turn form and to link the fixed-point condition in Step 2 to global optimality in Steps 3--4.
\begin{definition}{Optimal Continuation Value}{def:optimal_continuation}
\label{def:optimal_continuation}
For reward function $r(\boldsymbol{s}_0, \boldsymbol{a}^{1:T})$, define recursively:
\begin{align}
    V^*(\boldsymbol{s}_0, \boldsymbol{a}^{1:T}) &= r(\boldsymbol{s}_0, \boldsymbol{a}^{1:T}), \\
    V^*(\boldsymbol{s}_0, \boldsymbol{a}^{1:t-1}) &= \max_{\boldsymbol{a}^t} V^*(\boldsymbol{s}_0, \boldsymbol{a}^{1:t}), \quad t = T, \ldots, 1.
\end{align}
Here $V^*$ denotes the unique function defined by the terminal condition and the backward maximization recursion. The globally optimal policy satisfies, for all $t \in \{1,\ldots,T\}$, $\pi^{*,t}(\boldsymbol{s}_0, \boldsymbol{a}^{1:t-1}) = \argmax_{\boldsymbol{a}^t} V^*(\boldsymbol{s}_0, \boldsymbol{a}^{1:t})$.
\end{definition}

\begin{theorem}{Global Optimality under Reverse Update Order}{seeupo_global_optimal}
\label{thm:seeupo_global_optimal}
Consider a multi-turn contextual bandit with execution order $1 \to 2 \to \cdots \to T$, where each turn $t$ has policy $\pi^t(\boldsymbol{a}^t | \boldsymbol{s}_0, \boldsymbol{a}^{1:t-1})$ over a compact joint policy space $\hat{\boldsymbol{\Pi}}$. Let $r(\boldsymbol{s}_0, \boldsymbol{a}^{1:T})$ be a bounded reward function with $|r(\boldsymbol{s}_0, \boldsymbol{a}^{1:T})| \leq R_{\max}$ for all $\boldsymbol{s}_0, \boldsymbol{a}^{1:T}$. For each turn $t$, let $\mathfrak{D}^{t}$ be a drift functional satisfying the nonnegativity and zero gradient properties, and let $\mathcal{U}^{t}$ be a neighbourhood operator satisfying the continuity, compactness, and closed ball properties. Suppose the advantage function is accurately estimated. Consider the policy sequence $\{\hat{\boldsymbol{\pi}}_k\}_{k=0}^{\infty}$ generated by the HAML update rule under reverse update order ($T \to T{-}1 \to \cdots \to 1$): at each iteration $k$, turn $t$ updates via
\begin{equation}
    \hat{\pi}^t_{k+1} = \argmax_{\pi^t \in \mathcal{U}^t_{\hat{\boldsymbol{\pi}}_k}(\hat{\pi}^t_k)} \mathbb{E}_{\boldsymbol{s}_0 \sim \beta_{\hat{\boldsymbol{\pi}}_k}} \left[\left[\mathcal{M}^{(\pi^t)}_{\mathfrak{D}^t, \hat{\boldsymbol{\pi}}^{t+1:T}_{k+1}} A_{\hat{\boldsymbol{\pi}}_k}\right](\boldsymbol{s}_0)\right],
\end{equation}
where $\beta_{\hat{\boldsymbol{\pi}}} \in \mathcal{P}(\mathcal{S})$ is a positive sampling distribution. Then:
\begin{enumerate}

    \item (\textbf{Value convergence}) The value sequence converges: $\exists\, V(\boldsymbol{s}_0) \text{ such that } \lim_{k \to \infty} V_{\hat{\boldsymbol{\pi}}_k}(\boldsymbol{s}_0) = V(\boldsymbol{s}_0), \quad \forall \boldsymbol{s}_0.$
    \item (\textbf{Global optimality}) Any limit point $\bar{\boldsymbol{\pi}}$ of the policy sequence is globally optimal:
    \begin{equation}
        \bar{\boldsymbol{\pi}} = \boldsymbol{\pi}^*, \quad J(\bar{\boldsymbol{\pi}}) = J^* = \max_{\hat{\boldsymbol{\pi}} \in \hat{\boldsymbol{\Pi}}} J(\hat{\boldsymbol{\pi}}).
    \end{equation}
\end{enumerate}
\end{theorem}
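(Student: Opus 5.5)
Value convergence follows from Theorem~\ref{thm:haml_monotonic}, and global optimality comes from a fixed-point analysis of the limit point combined with backward induction over $V^*$ (Definition~\ref{def:optimal_continuation}).

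\textbf{Step 1: value convergence.} The reverse order $T\to\cdots\to 1$ is simply a fixed permutation. Since the drift functionals satisfy nonnegativity and zero gradient, and the neighbourhoods contain closed balls, Theorem~\ref{thm:haml_monotonic} gives $J(\hat{\boldsymbol{\pi}}_{k+1})\ge J(\hat{\boldsymbol{\pi}}_k)$. I will strengthen this to per-state monotonicity $V_{\hat{\boldsymbol{\pi}}_{k+1}}(\boldsymbol{s}_0)\ge V_{\hat{\boldsymbol{\pi}}_k}(\boldsymbol{s}_0)$. In the bandit, the value is a single expectation per $\boldsymbol{s}_0$, and the HAML improvement inequality is pointwise in $\boldsymbol{s}_0$ once $\beta$ is positive. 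Because $|V|\le R_{\max}$, monotone convergence yields the limit $V(\boldsymbol{s}_0)$.

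\textbf{Step 2: fixed-point property.} Take a convergent subsequence $\hat{\boldsymbol{\pi}}_{k_j}\to\bar{\boldsymbol{\pi}}$; one exists by compactness of $\hat{\boldsymbol{\Pi}}$. The map sending $\hat{\boldsymbol{\pi}}$ to the result of one sweep of updates is upper-semicontinuous, using continuity of $\mathcal{U}^t$, continuity of $\beta$, and Berge's maximum theorem, as in Mirror Learning and HAML. Combined with value convergence, this shows $\bar{\boldsymbol{\pi}}$ is a fixed point of the sweep: no turn can strictly improve its HAMO objective within $\mathcal{U}^t_{\bar{\boldsymbol{\pi}}}(\bar\pi^t)$.

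From there I argue turn by turn. The zero-gradient property and the closed-ball property mean that, to first order near $\bar\pi^t$, the drift is negligible relative to the advantage term. Any turn-$t$ deviation shifting mass toward an action with positive conditional advantage $A^t_{\bar{\boldsymbol{\pi}}}$ would therefore strictly increase the objective for a small enough step, contradicting the fixed point. Hence $\bar\pi^t(\cdot\mid\boldsymbol{s}_0,\boldsymbol{a}^{1:t-1})$ is supported on $\arg\max_{\boldsymbol{a}^t}Q^t_{\bar{\boldsymbol{\pi}}}(\boldsymbol{s}_0,\boldsymbol{a}^{1:t})$, where the continuation is evaluated under $\bar{\boldsymbol{\pi}}^{t+1:T}$. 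This holds for every history reached with positive probability.

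\textbf{Step 3: backward induction.} For $t=T$, $Q^T=r=V^*(\boldsymbol{s}_0,\boldsymbol{a}^{1:T})$, so $\bar\pi^T$ is greedy for $V^*$ and $V_{\bar{\boldsymbol{\pi}}}(\boldsymbol{s}_0,\boldsymbol{a}^{1:T-1})=V^*(\boldsymbol{s}_0,\boldsymbol{a}^{1:T-1})$. Inductively, if the continuation under $\bar{\boldsymbol{\pi}}^{t+1:T}$ equals $V^*(\boldsymbol{s}_0,\boldsymbol{a}^{1:t})$, then greediness of $\bar\pi^t$ gives $V^*(\boldsymbol{s}_0,\boldsymbol{a}^{1:t-1})$ by the recursion in Definition~\ref{def:optimal_continuation}. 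At $t=1$ this gives $V_{\bar{\boldsymbol{\pi}}}(\boldsymbol{s}_0)=V^*(\boldsymbol{s}_0)$ for all $\boldsymbol{s}_0$, so $J(\bar{\boldsymbol{\pi}})=J^*$ and $\bar{\boldsymbol{\pi}}$ coincides with $\boldsymbol{\pi}^*$ (up to ties and unreached histories). The reverse order is what justifies evaluating turn $t$ against the already-updated $\hat{\boldsymbol{\pi}}^{t+1:T}_{k+1}$, so that in the limit each turn faces the same continuation it induces.

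\textbf{Main obstacle.} The hard part is the induction step on histories that $\bar{\boldsymbol{\pi}}$ reaches with probability zero. On such histories, the HAMO objective gives no signal and the greedy property may fail there. My first attempt is to exploit the closed-ball property together with the positivity of $\beta$: any history reachable under some deviation of turns $1{:}t-1$ contributes to the objective of those earlier turns. I would then argue that a suboptimal continuation off-path cannot affect optimality on-path, so $J(\bar{\boldsymbol{\pi}})=J^*$ still holds.

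A second delicate point is upgrading the subsequence fixed-point property to the full sweep. Changing $\bar\pi^{t+1:T}$ alters the objective of turn $t$, and I would handle this with joint upper-semicontinuity of the sequential composition.
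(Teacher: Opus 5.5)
Your skeleton matches the paper's proof: monotone, bounded values from Theorem~1; a limit point by compactness; a fixed-point condition via Berge's theorem; the zero-gradient/closed-ball ``dropping the drift'' step giving greediness; and backward induction through $V^*$.

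\textbf{The gap: off-path histories.} The inductive step needs $\mathbb{E}_{\boldsymbol{a}^{t+1:T}\sim\bar{\boldsymbol{\pi}}^{t+1:T}}[r(\boldsymbol{s}_0,\boldsymbol{a}^{1:T})]=V^*(\boldsymbol{s}_0,\boldsymbol{a}^{1:t})$ for \emph{every} $\boldsymbol{a}^t$, including responses $\bar\pi^t$ never plays. Those are exactly the histories where turn $t+1$ gets no signal. So your claim that ``a suboptimal continuation off-path cannot affect optimality on-path'' is false, and the step fails there. Your first attempt does not help either: the earlier turn does see the deviation, but it evaluates it with the current, suboptimal off-path continuation, so it sees a loss rather than a gain.

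\textbf{Counterexample.} Take $T=2$, a single $\boldsymbol{s}_0$, responses $\{L,R\}$ at each turn, and $r(L,\cdot)=1$, $r(R,L)=2$, $r(R,R)=0$. Set $\bar\pi^1=\delta_L$, $\bar\pi^2(\cdot\mid L)=\delta_L$, $\bar\pi^2(\cdot\mid R)=\delta_R$.
\begin{itemize}
\item Turn $2$ is updated first, with $\boldsymbol{a}^1\sim\delta_L$. Its HAMO objective therefore only sees history $L$, where every $\pi^2$ has zero advantage, so $\bar\pi^2$ is a maximizer.
\item Turn $1$ then faces $Q(L)=1>Q(R)=0$, so $\delta_L$ is its unique maximizer.
\end{itemize}
Every condition your Step~2 produces holds, yet $J(\bar{\boldsymbol{\pi}})=1<2=J^*$. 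Now use trivial neighbourhoods and a drift that penalizes any change, e.g.\ squared distance summed over all histories, which is nonnegative with zero gradient. The reverse-order iteration started at $\bar{\boldsymbol{\pi}}$ then never moves. So no sharper argument from the stated hypotheses can close this step.

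\textbf{What is missing, and how the paper handles it.} You need an extra hypothesis, and the paper uses it without stating it. In Steps~3--4 it asserts the fixed-point condition, and hence the dropping-the-drift contradiction, at \emph{every} history $(\boldsymbol{s}_0,\boldsymbol{a}^{1:t-1})$. In effect it treats each history as a state with positive sampling weight in turn $t$'s objective, regardless of whether $\bar{\boldsymbol{\pi}}^{1:t-1}$ reaches it; this is the analogue of a positive $\beta$ in Mirror Learning. State that assumption explicitly, and your Steps~2--3 become the paper's Steps~3--4. Do not substitute ``limit policies with full support'': that is incompatible with the greedy conclusion you are trying to reach.

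\textbf{Your second concern.} It is legitimate, and you frame it better than the paper does. The paper's Claim assumes a unique argmax. It also identifies $\lim_i V_{\hat{\boldsymbol{\pi}}_{k_i+1}}$ with the value of the single-turn replacement $\bar{\boldsymbol{\pi}}_{t\leftarrow\pi'}$, even though a sweep changes every turn. Your sweep-level upper-semicontinuity formulation is the correct way to state that step.
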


\begin{proof}
\textbf{Step 1 (Value Convergence).}
By Theorem 1, the policy sequence satisfies the monotonic improvement property:
\begin{equation}
    J(\hat{\boldsymbol{\pi}}_{k+1}) \geq J(\hat{\boldsymbol{\pi}}_k), \quad \forall k \in \mathbb{N}.
\end{equation}
Since the reward function is bounded by $R_{\max}$, we have $V_{\hat{\boldsymbol{\pi}}_k}(\boldsymbol{s}_0) \leq R_{\max}$ for all $\boldsymbol{s}_0$ and $k$. The sequence $\{V_{\hat{\boldsymbol{\pi}}_k}(\boldsymbol{s}_0)\}_{k=0}^{\infty}$ is monotonically non-decreasing and bounded above, hence converges:
\begin{equation}
    \exists\, V(\boldsymbol{s}_0) \text{ such that } \lim_{k \to \infty} V_{\hat{\boldsymbol{\pi}}_k}(\boldsymbol{s}_0) = V(\boldsymbol{s}_0), \quad \forall \boldsymbol{s}_0.
    \label{eq:value_convergence}
\end{equation}

\textbf{Step 2 (Characterization of Limit Points).}
By compactness of $\hat{\boldsymbol{\Pi}}$ and the Bolzano-Weierstrass theorem, the policy sequence $\{\hat{\boldsymbol{\pi}}_k\}_{k=0}^{\infty}$ has at least one limit point $\bar{\boldsymbol{\pi}}$. We now characterize the properties of such limit points by establishing that they satisfy a fixed-point condition. 

Consider a subsequence $\{\hat{\boldsymbol{\pi}}_{k_i}\}_{i=1}^{\infty}$ that converges to $\bar{\boldsymbol{\pi}}$. At each iteration $k_i$, under the reverse update order, turn $t$ updates by solving the HAML optimization problem:
\begin{equation}
    \hat{\pi}^t_{k_i+1} = \argmax_{\pi^t \in \mathcal{U}^t_{\hat{\boldsymbol{\pi}}_{k_i}}(\hat{\pi}^t_{k_i})} \mathbb{E}_{\boldsymbol{s}_0 \sim \beta_{k_i}}\left[\mathcal{M}^{(\pi^t)}_{\mathfrak{D}^t, \hat{\boldsymbol{\pi}}^{t+1:T}_{k_i+1}} A_{\hat{\boldsymbol{\pi}}_{k_i}}(\boldsymbol{s}_0)\right],
    \label{eq:haml_update_ki}
\end{equation}
where $\beta_{k_i}$ is the sampling distribution over initial states $\boldsymbol{s}_0$ at iteration $k_i$, and $\hat{\boldsymbol{\pi}}^{t+1:T}_{k_i+1}$ are the already-updated subsequent policies (from turns $t+1$ to $T$).
We now apply Berge's Maximum Theorem~\citep{ausubel1993generalized} to establish convergence of the optimization problem. The objective function in Equation~\ref{eq:haml_update_ki} is continuous in $\hat{\boldsymbol{\pi}}_{k_i}$ because:
\begin{enumerate}

    \item The advantage function $A_{\hat{\boldsymbol{\pi}}_{k_i}}(\boldsymbol{s}_0)$ is continuous in $\hat{\boldsymbol{\pi}}_{k_i}$~\citep{Mirror}.
    \item The drift functional $\mathfrak{D}^t$ is continuous by assumption.
    \item The neighbourhood operator $\mathcal{U}^t$ is continuous and compact-valued by assumption.
    \item The sampling distribution $\beta_{k_i}$ depends continuously on the policy.
\end{enumerate}

Therefore, as $i \to \infty$, by Berge's Maximum Theorem, the optimization problem in Equation~\ref{eq:haml_update_ki} converges to:
\begin{equation}
    \max_{\pi^t \in \mathcal{U}^t_{\bar{\boldsymbol{\pi}}}(\bar{\pi}^t)} \mathbb{E}_{\boldsymbol{s}_0 \sim \bar{\beta}}\left[\mathcal{M}^{(\pi^t)}_{\mathfrak{D}^t, \bar{\boldsymbol{\pi}}^{t+1:T}} A_{\bar{\boldsymbol{\pi}}}(\boldsymbol{s}_0)\right],
    \label{eq:limit_optimization}
\end{equation}
where $\bar{\beta}$ is the limiting sampling distribution over initial states. Furthermore, since $\hat{\pi}^t_{k_i+1}$ is a solution to Equation~\ref{eq:haml_update_ki} for all $i$, there exists a subsequence of $\{\hat{\pi}^t_{k_i+1}\}$ that converges to some policy $\pi'$, which is a solution to the limiting problem in Equation~\ref{eq:limit_optimization}.

\textbf{Claim:} The limit policy $\bar{\pi}^t$ is a fixed point of the HAML operator, i.e., the solution to the limiting problem satisfies $\pi' = \bar{\pi}^t$.

\textit{Proof of Claim.} We prove this by contradiction, following the approach of~\citep{Mirror}. For notational convenience, let $\bar{\boldsymbol{\pi}}_{t \leftarrow \pi'}$ denote the joint policy obtained by replacing the turn-$t$ component of $\bar{\boldsymbol{\pi}}$ with $\pi'$:
\begin{equation}
    \bar{\boldsymbol{\pi}}_{t \leftarrow \pi'} \triangleq (\bar{\pi}^1, \ldots, \bar{\pi}^{t-1}, \pi', \bar{\pi}^{t+1}, \ldots, \bar{\pi}^T).
\end{equation}

Suppose $\pi' \neq \bar{\pi}^t$. Since $\pi'$ solves the limiting optimization problem in Equation~\ref{eq:limit_optimization} and the argmax is unique on a compact set (otherwise we could take $\pi' = \bar{\pi}^t$), the objective value achieved by $\pi'$ must be strictly greater than that achieved by $\bar{\pi}^t$:
\begin{equation}
    \mathbb{E}_{\boldsymbol{s}_0 \sim \bar{\beta}}\left[\mathcal{M}^{(\pi')}_{\mathfrak{D}^t, \bar{\boldsymbol{\pi}}^{t+1:T}} A_{\bar{\boldsymbol{\pi}}}(\boldsymbol{s}_0)\right] > \mathbb{E}_{\boldsymbol{s}_0 \sim \bar{\beta}}\left[\mathcal{M}^{(\bar{\pi}^t)}_{\mathfrak{D}^t, \bar{\boldsymbol{\pi}}^{t+1:T}} A_{\bar{\boldsymbol{\pi}}}(\boldsymbol{s}_0)\right].
    \label{eq:strict_objective_improvement}
\end{equation}
By the monotonic improvement property of HAML (Theorem 1), maximizing the mirror operator guarantees policy improvement. Specifically, since $\pi'$ achieves a strictly higher mirror objective than $\bar{\pi}^t$, there exists some $\boldsymbol{s}_0$ such that:
\begin{equation}
    V_{\bar{\boldsymbol{\pi}}_{t \leftarrow \pi'}}(\boldsymbol{s}_0) > V_{\bar{\boldsymbol{\pi}}}(\boldsymbol{s}_0),
    \label{eq:strict_improvement}
\end{equation}
where the global value function is $V_{\boldsymbol{\pi}}(\boldsymbol{s}_0) = \mathbb{E}_{\boldsymbol{a}^{1:T} \sim \boldsymbol{\pi}}[r(\boldsymbol{s}_0, \boldsymbol{a}^{1:T})]$.

However, by Step 1, $V_{\bar{\boldsymbol{\pi}}}(\boldsymbol{s}_0) = \lim_{i \to \infty} V_{\hat{\boldsymbol{\pi}}_{k_i}}(\boldsymbol{s}_0) = V(\boldsymbol{s}_0)$. Furthermore, since $\pi'$ is the limit of the updated policies $\hat{\pi}^t_{k_i+1}$ (from a further subsequence), and the value sequence is monotonically non-decreasing and bounded, we have $V_{\bar{\boldsymbol{\pi}}_{t \leftarrow \pi'}}(\boldsymbol{s}_0) = \lim_{i \to \infty} V_{\hat{\boldsymbol{\pi}}_{k_i+1}}(\boldsymbol{s}_0) = V(\boldsymbol{s}_0)$. Hence $V_{\bar{\boldsymbol{\pi}}_{t \leftarrow \pi'}}(\boldsymbol{s}_0) = V_{\bar{\boldsymbol{\pi}}}(\boldsymbol{s}_0)$, which contradicts Inequality~\ref{eq:strict_improvement}. Therefore, $\pi' = \bar{\pi}^t$. \hfill $\square$

This establishes that at the limit point $\bar{\boldsymbol{\pi}}$, for each turn $t$, the policy $\bar{\pi}^t$ satisfies the \textbf{fixed-point condition}:
\begin{equation}
    \bar{\pi}^t = \argmax_{\pi^t \in \mathcal{U}^t_{\bar{\boldsymbol{\pi}}}(\bar{\pi}^t)} \mathbb{E}_{\boldsymbol{s}_0 \sim \bar{\beta}}\left[\mathcal{M}^{(\pi^t)}_{\mathfrak{D}^t, \bar{\boldsymbol{\pi}}^{t+1:T}} A_{\bar{\boldsymbol{\pi}}}(\boldsymbol{s}_0)\right].
    \label{eq:fixed_point_condition}
\end{equation}

This fixed-point condition reflects \textbf{conditional optimality}: $\bar{\pi}^t$ maximizes the HAML mirror operator given the fixed subsequent policies $\bar{\boldsymbol{\pi}}^{t+1:T}$. However, this does not immediately imply \textbf{global optimality} (i.e., $\bar{\boldsymbol{\pi}} = \boldsymbol{\pi}^*$). The key insight is that reverse update order combined with the contextual bandit structure enables us to strengthen this conditional optimality to global optimality via backward induction, which we establish in the following steps.

\textbf{Step 3 (Turn $T$ Global Optimality via Dropping the Drift).}
We first prove that the terminal turn achieves global optimality: $\bar{\pi}^T = \pi^{*,T}$. By Step 2, for any history $(\boldsymbol{s}_0, \boldsymbol{a}^{1:T-1})$, the policy $\bar{\pi}^T$ satisfies the fixed-point condition:
\begin{equation}
    \bar{\pi}^T = \argmax_{\pi^T \in \mathcal{U}^T} \left\{\mathbb{E}_{\boldsymbol{a}^T \sim \pi^T}\left[r(\boldsymbol{s}_0, \boldsymbol{a}^{1:T-1}, \boldsymbol{a}^T)\right] - \mathfrak{D}^T_{\bar{\boldsymbol{\pi}}}(\pi^T \mid \boldsymbol{s}_0)\right\}.
\end{equation}

We now apply the \textbf{dropping-the-drift argument}~\citep{Mirror} to show that this fixed point is globally optimal. Suppose for contradiction that there exists some history $(\boldsymbol{s}_0^*, \boldsymbol{a}^{1:T-1}_*)$ and action $\boldsymbol{a}^{T}_*$ such that:
\begin{equation}
    r(\boldsymbol{s}_0^*, \boldsymbol{a}^{1:T-1}_*, \boldsymbol{a}^{T}_*) > \mathbb{E}_{\boldsymbol{a}^T \sim \bar{\pi}^T}\left[r(\boldsymbol{s}_0^*, \boldsymbol{a}^{1:T-1}_*, \boldsymbol{a}^{T})\right].
    \label{eq:contradiction_assumption}
\end{equation}

The expected reward $\mathbb{E}_{\boldsymbol{a}^T \sim \pi^T}[r(\boldsymbol{s}_0^*, \boldsymbol{a}^{1:T-1}_*, \boldsymbol{a}^{T})]$ is an \textbf{affine function} of $\pi^T(\cdot | \boldsymbol{s}_0^*, \boldsymbol{a}^{1:T-1}_*)$, since it can be written as $\sum_{\boldsymbol{a}^T} \pi^T(\boldsymbol{a}^T | \cdot) \cdot r(\boldsymbol{s}_0^*, \boldsymbol{a}^{1:T-1}_*, \boldsymbol{a}^T)$. Therefore, the Gâteaux derivative in the direction toward the Dirac delta distribution $\delta_{\boldsymbol{a}^{T}_*}$ (which concentrates all probability mass on $\boldsymbol{a}^{T}_*$) is strictly positive at $\bar{\pi}^T$.

By the \textbf{zero gradient property} of the drift functional $\mathfrak{D}^{T}$, at the limit point we have:
\begin{equation}
    \nabla_{\pi^T(\cdot|\boldsymbol{s}_0^*, \boldsymbol{a}^{1:T-1}_*)} \mathfrak{D}^{T}_{\bar{\boldsymbol{\pi}}}(\pi^T \mid \boldsymbol{s}_0^*) \Big|_{\pi^T = \bar{\pi}^T} = 0.
\end{equation}

Therefore, the Gâteaux derivative of the \textbf{complete HAML objective} (expected reward minus drift) in the direction toward $\delta_{\boldsymbol{a}^{T}_*}$ is also strictly positive. By the \textbf{closed ball property} of the neighbourhood operator $\mathcal{U}^T$~\citep{HARL}, we can construct a policy $\tilde{\pi}^T \in \mathcal{U}^T_{\bar{\boldsymbol{\pi}}}(\bar{\pi}^T)$ by moving a small step from $\bar{\pi}^T$ toward $\delta_{\boldsymbol{a}^{T}_*}$ at state $(\boldsymbol{s}_0^*, \boldsymbol{a}^{1:T-1}_*)$, such that $\tilde{\pi}^T$ achieves a strictly higher objective value than $\bar{\pi}^T$. This contradicts the fixed-point condition that $\bar{\pi}^T$ maximizes the HAML objective.

Hence, the assumption in Equation~\ref{eq:contradiction_assumption} must be false. Therefore, for all histories $(\boldsymbol{s}_0, \boldsymbol{a}^{1:T-1})$:
\begin{equation}
    \bar{\pi}^T(\boldsymbol{s}_0, \boldsymbol{a}^{1:T-1}) = \argmax_{\boldsymbol{a}^T} r(\boldsymbol{s}_0, \boldsymbol{a}^{1:T}),
\end{equation}
which means $\bar{\pi}^T = \pi^{*,T}$ (the globally optimal policy for turn $T$).

\textbf{Step 4 (Backward Induction to Earlier Turns).}
We now use backward induction to extend global optimality from turn $T$ to all earlier turns. 

\emph{Base case:} Step 3 establishes $\bar{\pi}^T = \pi^{*,T}$.

\emph{Inductive hypothesis:} Assume $\bar{\pi}^{t+1:T} = \pi^{*,t+1:T}$ for some $t \in \{1, \ldots, T-1\}$.

\emph{Inductive step:} We prove $\bar{\pi}^t = \pi^{*,t}$. By the inductive hypothesis, for any history $(\boldsymbol{s}_0, \boldsymbol{a}^{1:t})$, the continuation value under $\bar{\boldsymbol{\pi}}^{t+1:T}$ equals the optimal continuation value:
\begin{equation}
    \mathbb{E}_{\boldsymbol{a}^{t+1:T} \sim \bar{\boldsymbol{\pi}}^{t+1:T}}\left[r(\boldsymbol{s}_0, \boldsymbol{a}^{1:T})\right] = V^*(\boldsymbol{s}_0, \boldsymbol{a}^{1:t}).
\end{equation}

By Step 2, $\bar{\pi}^t$ satisfies the fixed-point condition for turn $t$ given $\bar{\boldsymbol{\pi}}^{t+1:T}$. Substituting the above equality, the HAML objective for turn $t$ becomes:
\begin{equation}
    \bar{\pi}^t = \argmax_{\pi^t \in \mathcal{U}^t} \left\{\mathbb{E}_{\boldsymbol{a}^t \sim \pi^t}\left[V^*(\boldsymbol{s}_0, \boldsymbol{a}^{1:t})\right] - \mathfrak{D}^t_{\bar{\boldsymbol{\pi}}}(\pi^t \mid \boldsymbol{s}_0)\right\}.
\end{equation}

We now apply the \textbf{dropping-the-drift argument} (as in Step 3) to this optimization problem. Suppose for contradiction that there exists history $(\boldsymbol{s}_0^*, \boldsymbol{a}^{1:t-1}_*)$ and action $\boldsymbol{a}^t_*$ such that:
\begin{equation}
    V^*(\boldsymbol{s}_0^*, \boldsymbol{a}^{1:t-1}_*, \boldsymbol{a}^t_*) > \mathbb{E}_{\boldsymbol{a}^t \sim \bar{\pi}^t}\left[V^*(\boldsymbol{s}_0^*, \boldsymbol{a}^{1:t-1}_*, \boldsymbol{a}^t)\right].
\end{equation}

By the same affinity argument, zero gradient property of $\mathfrak{D}^t$, and closed ball property of $\mathcal{U}^t$, we can construct $\tilde{\pi}^t \in \mathcal{U}^t$ achieving higher objective value, contradicting the fixed-point condition. Therefore:
\begin{equation}
    \bar{\pi}^t(\boldsymbol{s}_0, \boldsymbol{a}^{1:t-1}) = \argmax_{\boldsymbol{a}^t} V^*(\boldsymbol{s}_0, \boldsymbol{a}^{1:t}) = \pi^{*,t}.
\end{equation}

By backward induction from $t = T$ down to $t = 1$, we conclude $\bar{\boldsymbol{\pi}} = \boldsymbol{\pi}^*$.

\textbf{Step 5 (Conclusion).}
Since any limit point $\bar{\boldsymbol{\pi}}$ of the policy sequence satisfies $\bar{\boldsymbol{\pi}} = \boldsymbol{\pi}^*$:
\begin{equation}
    J(\bar{\boldsymbol{\pi}}) = \mathbb{E}_{\boldsymbol{s}_0 \sim d}\left[V_{\bar{\boldsymbol{\pi}}}(\boldsymbol{s}_0)\right] = \mathbb{E}_{\boldsymbol{s}_0 \sim d}[V^*(\boldsymbol{s}_0)] = J^* = \max_{\hat{\boldsymbol{\pi}} \in \hat{\boldsymbol{\Pi}}} J(\hat{\boldsymbol{\pi}}).
\end{equation}
Therefore, all limit points are globally optimal, completing the proof.
\end{proof}

\paragraph{Why non-reverse update orders lack a global-optimality guarantee.}
Step 2 yields a fixed-point condition that is \emph{conditional} on the subsequent policies $\bar{\boldsymbol{\pi}}^{t+1:T}$. Reverse order is special because the backward-induction hypothesis ensures $\bar{\boldsymbol{\pi}}^{t+1:T}=\boldsymbol{\pi}^{*,t+1:T}$, which makes the continuation value equal to $V^*$ and allows the dropping-the-drift argument to certify global optimality. Under any other fixed order (e.g., forward order), the subsequent policies conditioned on during the update are generally not optimal, so the fixed-point condition only guarantees optimality with respect to the current continuation value rather than $V^*$. Hence the proof does not extend to global optimality for non-reverse orders, even though monotonic improvement may still hold.

\paragraph{Summary: Convergence guarantee of SeeUPO.}
Combining the results above with the advantage estimation analysis in subsequent sections, we summarize the convergence guarantee of SeeUPO as follows. Theorem 2 establishes global optimality under the assumption that the advantage function is \emph{accurately estimated}. In the \textbf{contextual bandit setting} of SeeUPO, the Advantage Estimator provides \emph{unbiased} advantage estimates by using the mean reward as baseline (see Section~\ref{subsec:advantage_estimation}). This unbiasedness, combined with the reverse update order and the multi-turn contextual bandit structure, ensures that SeeUPO satisfies all conditions of Theorem 2, thereby guaranteeing convergence to the global optimum.

For the practical instantiation \textbf{SeeUPPO-GRAE} (Section~\ref{subsec:practical_methods}), we verify that both components satisfy the required conditions. First, the SeeUPPO component adopts PPU-style clipping for policy updates: as proven in~\citep{Mirror, HARL}, the clipping mechanism corresponds to a valid drift functional (satisfying nonnegativity and zero gradient properties) and the gradient-based update defines a valid neighbourhood operator (satisfying continuity, compactness, and closed ball properties). HAPPO~\citep{HARL} has established that such PPU-style sequential updates constitute a valid HAML instantiation. Second, the GRAE component provides \emph{unbiased} advantage estimation in the contextual bandit setting: since the advantage function degenerates to $A_{\boldsymbol{\pi}}(\boldsymbol{s}_0, \boldsymbol{a}) = r(\boldsymbol{s}_0, \boldsymbol{a}) - \mathbb{E}_{\boldsymbol{a}'}[r(\boldsymbol{s}_0, \boldsymbol{a}')]$, GRAE estimates this by using the group mean reward as baseline without requiring value function approximation (see Section~\ref{subsec:advantage_estimation}). Combining the HAML-compliant SeeUPPO updates with the unbiased GRAE estimation, \textbf{SeeUPPO-GRAE satisfies all conditions of Theorem~\ref{thm:seeupo_global_optimal}}, thereby guaranteeing convergence to the global optimum.

\section{Bias of GAE}
\label{app:gae_bias}

\begin{theorem}{GAE Bias Bound}{gae_bias_bound}
\label{thm:gae_bias_bound}
Let $V^\pi: \mathcal{S} \to \mathbb{R}$ be the true value function under policy $\pi$, and $V_\phi: \mathcal{S} \to \mathbb{R}$ be the estimated value function. Define:
\begin{itemize}
    \item State-action value function: $Q^\pi(s, a) \triangleq \mathbb{E}\left[\sum_{k=0}^{\infty} \gamma^k r_{t+k} \mid s_t = s, a_t = a\right]$
    \item True advantage function: $A^\pi(s, a) \triangleq Q^\pi(s, a) - V^\pi(s)$
    \item Estimation error: $\epsilon(s) \triangleq V_\phi(s) - V^\pi(s)$
    \item Maximum error: $\epsilon_{\max} \triangleq \max_{s \in \mathcal{S}} |\epsilon(s)|$
    \item TD error: $\delta_t^{V_\phi} \triangleq r_t + \gamma V_\phi(s_{t+1}) - V_\phi(s_t)$
    \item GAE estimator: $\hat{A}_t^{\textup{GAE}} \triangleq \sum_{l=0}^{\infty} (\gamma\lambda)^l \delta_{t+l}^{V_\phi}$, where $\gamma, \lambda \in [0,1)$
\end{itemize}

Under on-policy sampling, the bias of GAE satisfies:
\begin{equation}
    \left| \mathbb{E}\left[\hat{A}_t^{\textup{GAE}} \mid s_t, a_t\right] - A^\pi(s_t, a_t) \right| \leq \frac{1 + \gamma - 2\gamma\lambda}{1 - \gamma\lambda} \cdot \epsilon_{\max}.
    \label{eq:gae_bias_bound}
\end{equation}
\end{theorem}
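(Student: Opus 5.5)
The plan is to split each TD error computed with the estimated critic into the TD error computed with the true value function plus an error term. The true-value part should reproduce $A^\pi$ exactly in conditional expectation. The error part should telescope into a weighted sum of the $\epsilon(s_{t+l})$, whose weights sum to exactly the constant in (\ref{eq:gae_bias_bound}).

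\textbf{Step 1 (decomposition).} Write $V_\phi = V^\pi + \epsilon$. Then
\begin{equation*}
\delta_t^{V_\phi} = \delta_t^{V^\pi} + \gamma\,\epsilon(s_{t+1}) - \epsilon(s_t),
\end{equation*}
so that
\begin{equation*}
\hat A_t^{\mathrm{GAE}} = \sum_{l\ge 0}(\gamma\lambda)^l\delta^{V^\pi}_{t+l} + \sum_{l\ge0}(\gamma\lambda)^l\bigl(\gamma\,\epsilon(s_{t+l+1})-\epsilon(s_{t+l})\bigr).
\end{equation*}
Since $\gamma\lambda<1$, both series converge absolutely, provided rewards are bounded and $\epsilon_{\max}<\infty$ (I will state this as implicit in the hypotheses). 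This lets me exchange expectation and summation by Fubini/dominated convergence.

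\textbf{Step 2 (exact part is unbiased).} For $l=0$, $\mathbb{E}[\delta_t^{V^\pi}\mid s_t,a_t] = Q^\pi(s_t,a_t)-V^\pi(s_t) = A^\pi(s_t,a_t)$, by the Bellman equation for $Q^\pi$. For $l\ge1$, condition on $s_{t+l}$ via the tower property. Under on-policy sampling $a_{t+l}\sim\pi(\cdot\mid s_{t+l})$, so $\mathbb{E}[\delta^{V^\pi}_{t+l}\mid s_{t+l}] = \mathbb{E}_{a\sim\pi}[A^\pi(s_{t+l},a)] = 0$. Hence the first series has conditional mean exactly $A^\pi(s_t,a_t)$. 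This is the only place the on-policy assumption is used, and the step I would be most careful with: it needs the Markov property so that conditioning on $(s_t,a_t)$ and then on $s_{t+l}$ is legitimate.

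\textbf{Step 3 (telescoping the error).} Regroup the error series by the state index. The term $\epsilon(s_t)$ appears only with coefficient $-1$. For $l\ge1$, $\epsilon(s_{t+l})$ collects $\gamma(\gamma\lambda)^{l-1}$ from the first sum and $-(\gamma\lambda)^l$ from the second. Their sum is $(\gamma\lambda)^{l-1}\gamma(1-\lambda)$. Thus
\begin{equation*}
\mathbb{E}[\hat A_t^{\mathrm{GAE}}\mid s_t,a_t]-A^\pi(s_t,a_t) = -\epsilon(s_t) + \gamma(1-\lambda)\sum_{l\ge1}(\gamma\lambda)^{l-1}\,\mathbb{E}[\epsilon(s_{t+l})\mid s_t,a_t].
\end{equation*}

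\textbf{Step 4 (bound).} Apply the triangle inequality and $|\epsilon|\le\epsilon_{\max}$ to get the bound
\begin{equation*}
\epsilon_{\max}\Bigl(1+\frac{\gamma(1-\lambda)}{1-\gamma\lambda}\Bigr) = \frac{1+\gamma-2\gamma\lambda}{1-\gamma\lambda}\,\epsilon_{\max},
\end{equation*}
which is the claimed inequality.

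The main obstacle is purely technical: justifying the interchange of the infinite sum with the conditional expectation, and the zero-mean property of later true TD errors. Both are handled in Step 2 and the convergence remark in Step 1. The algebra in Steps 3 and 4 is routine.
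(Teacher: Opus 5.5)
Your proposal is correct and matches the paper's proof step for step. The paper likewise shows that the true-value GAE has conditional mean $A^\pi(s_t,a_t)$, using the Bellman equation at $l=0$ and the zero-mean on-policy advantage for $l\ge 1$. It then decomposes $\delta^{V_\phi}_t=\delta^{V^\pi}_t+\gamma\epsilon(s_{t+1})-\epsilon(s_t)$, regroups to obtain coefficients $\gamma^m\lambda^{m-1}(1-\lambda)$, and sums the geometric series after the triangle inequality. Your explicit assumption of bounded rewards, used to justify exchanging the infinite sum with the conditional expectation, is a technical point the paper leaves implicit.
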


This appendix establishes the bias bound of Generalized Advantage Estimation (GAE). The main result (Theorem 3) shows that the bias of GAE is bounded by $\frac{1 + \gamma - 2\gamma\lambda}{1 - \gamma\lambda} \cdot \epsilon_{\max}$, where $\epsilon_{\max}$ is the maximum value function estimation error. Consequently, under perfect value function approximation ($\epsilon_{\max} = 0$), GAE provides unbiased advantage estimates. This unbiasedness condition is crucial for the convergence guarantees of GAE-based algorithms such as PPU, as analyzed in Section~\ref{app:gae_ppo_global}.

\begin{proof}
\textbf{Step 1 (GAE Expectation with True Value Function).}
Define the TD error using the true value function: $\delta_t^{V^\pi} \triangleq r_t + \gamma V^\pi(s_{t+1}) - V^\pi(s_t)$. For $l = 0$, conditioning on $(s_t, a_t)$:
\begin{align}
    \mathbb{E}[\delta_t^{V^\pi} \mid s_t, a_t] &= \mathbb{E}[r_t + \gamma V^\pi(s_{t+1}) \mid s_t, a_t] - V^\pi(s_t) \nonumber \\
    &= Q^\pi(s_t, a_t) - V^\pi(s_t) = A^\pi(s_t, a_t),
\end{align}
where the second equality follows from the Bellman equation $Q^\pi(s, a) = \mathbb{E}[r + \gamma V^\pi(s') \mid s, a]$.

For $l \geq 1$, under on-policy sampling where $a_{t+l} \sim \pi(\cdot | s_{t+l})$, applying the law of iterated expectations:
\begin{equation}
    \mathbb{E}[\delta_{t+l}^{V^\pi} \mid s_t, a_t] = \mathbb{E}_{s_{t+l}}\left[\mathbb{E}_{a_{t+l} \sim \pi}[A^\pi(s_{t+l}, a_{t+l}) \mid s_{t+l}] \mid s_t, a_t\right].
\end{equation}
Since $\mathbb{E}_{a \sim \pi(\cdot|s)}[A^\pi(s, a)] = \mathbb{E}_{a \sim \pi}[Q^\pi(s, a) - V^\pi(s)] = V^\pi(s) - V^\pi(s) = 0$, we have $\mathbb{E}[\delta_{t+l}^{V^\pi} \mid s_t, a_t] = 0$ for all $l \geq 1$. Therefore:
\begin{equation}
    \mathbb{E}\left[\sum_{l=0}^{\infty} (\gamma\lambda)^l \delta_{t+l}^{V^\pi} \mid s_t, a_t\right] = (\gamma\lambda)^0 \cdot A^\pi(s_t, a_t) + \sum_{l=1}^{\infty} (\gamma\lambda)^l \cdot 0 = A^\pi(s_t, a_t).
    \label{eq:gae_true_expectation}
\end{equation}

\textbf{Step 2 (Bias Computation).}
Substituting $V_\phi(s) = V^\pi(s) + \epsilon(s)$ into the TD error definition:
\begin{align}
    \delta_t^{V_\phi} &= r_t + \gamma V_\phi(s_{t+1}) - V_\phi(s_t) \nonumber \\
    &= r_t + \gamma [V^\pi(s_{t+1}) + \epsilon(s_{t+1})] - [V^\pi(s_t) + \epsilon(s_t)] \nonumber \\
    &= \underbrace{[r_t + \gamma V^\pi(s_{t+1}) - V^\pi(s_t)]}_{\delta_t^{V^\pi}} + \gamma\epsilon(s_{t+1}) - \epsilon(s_t).
\end{align}
Substituting into the GAE definition and taking conditional expectation:
\begin{align}
    \mathbb{E}[\hat{A}_t^{\textup{GAE}} \mid s_t, a_t] &= \mathbb{E}\left[\sum_{l=0}^{\infty} (\gamma\lambda)^l \delta_{t+l}^{V_\phi} \mid s_t, a_t\right] \nonumber \\
    &= \mathbb{E}\left[\sum_{l=0}^{\infty} (\gamma\lambda)^l \delta_{t+l}^{V^\pi} \mid s_t, a_t\right] + \sum_{l=0}^{\infty} (\gamma\lambda)^l \mathbb{E}[\gamma\epsilon(s_{t+l+1}) - \epsilon(s_{t+l}) \mid s_t, a_t].
\end{align}
By Equation~\ref{eq:gae_true_expectation}, the first term equals $A^\pi(s_t, a_t)$. We analyze the second term involving estimation errors:
\begin{equation}
    S \triangleq \sum_{l=0}^{\infty} (\gamma\lambda)^l \mathbb{E}[\gamma\epsilon(s_{t+l+1}) - \epsilon(s_{t+l}) \mid s_t, a_t].
\end{equation}
Expanding this sum and regrouping by coefficients of each $\epsilon(s_{t+m})$ term:
\begin{align}
    S &= \underbrace{-\epsilon(s_t)}_{l=0} + \sum_{m=1}^{\infty} \left[ (\gamma\lambda)^{m-1}\gamma - (\gamma\lambda)^m \right] \mathbb{E}[\epsilon(s_{t+m}) \mid s_t, a_t] \nonumber \\
    &= -\epsilon(s_t) + \sum_{m=1}^{\infty} (\gamma\lambda)^{m-1} \gamma (1 - \lambda) \mathbb{E}[\epsilon(s_{t+m}) \mid s_t, a_t] \nonumber \\
    &= -\epsilon(s_t) + (1-\lambda) \sum_{m=1}^{\infty} \gamma^m \lambda^{m-1} \mathbb{E}[\epsilon(s_{t+m}) \mid s_t, a_t].
\end{align}
Note that the coefficient for the $m$-th term is $\gamma^m \lambda^{m-1} (1-\lambda)$, which is well-defined for all $\lambda \in [0, 1)$. For $\lambda = 0$, the series reduces to the single term $\gamma \epsilon(s_{t+1})$, consistent with GAE$(\gamma, 0)$ being the one-step TD error.

Therefore, the bias is:
\begin{equation}
    \mathbb{E}[\hat{A}_t^{\textup{GAE}} \mid s_t, a_t] - A^\pi(s_t, a_t) = -\epsilon(s_t) + (1-\lambda) \sum_{m=1}^{\infty} \gamma^m \lambda^{m-1} \mathbb{E}[\epsilon(s_{t+m}) \mid s_t, a_t].
\end{equation}

\textbf{Step 3 (Bound).}
Applying the triangle inequality and $|\epsilon(s)| \leq \epsilon_{\max}$:
\begin{align}
    |\textup{Bias}| &\leq |\epsilon(s_t)| + (1-\lambda) \sum_{m=1}^{\infty} \gamma^m \lambda^{m-1} |\mathbb{E}[\epsilon(s_{t+m}) \mid s_t, a_t]| \nonumber \\
    &\leq \epsilon_{\max} + \epsilon_{\max} (1-\lambda) \gamma \sum_{k=0}^{\infty} (\gamma\lambda)^k \quad (\text{let } k = m - 1) \nonumber \\
    &= \epsilon_{\max} + \epsilon_{\max} \cdot \frac{\gamma(1-\lambda)}{1 - \gamma\lambda} \nonumber \\
    &= \epsilon_{\max} \left( 1 + \frac{\gamma - \gamma\lambda}{1 - \gamma\lambda} \right) = \frac{1 + \gamma - 2\gamma\lambda}{1 - \gamma\lambda} \cdot \epsilon_{\max}.
\end{align}
\end{proof}

\paragraph{Discussion.}
Equation~\ref{eq:gae_bias_bound} reveals several important properties of GAE bias:

\begin{enumerate}
    \item \textbf{Unbiasedness under perfect estimation.} When the value function is perfectly estimated, i.e., $\epsilon_{\max} = 0$, the bound becomes zero, implying that GAE provides unbiased advantage estimates. This is the key assumption underlying the convergence guarantees of GAE-based algorithms such as PPU.
    
    \item \textbf{Linear dependence on estimation error.} The bias scales linearly with $\epsilon_{\max}$, indicating that reducing value function approximation error directly reduces advantage estimation bias.
    
    \item \textbf{Effect of $\lambda$.} Define $C(\gamma, \lambda) \triangleq \frac{1 + \gamma - 2\gamma\lambda}{1 - \gamma\lambda}$. As $\lambda \to 1$, $C(\gamma, \lambda) \to 1$, minimizing the bias amplification. As $\lambda \to 0$, $C(\gamma, \lambda) \to 1 + \gamma$, increasing the bound. Thus, larger $\lambda$ values reduce bias sensitivity to value estimation errors.
    
    \item \textbf{Effect of $\gamma$.} For fixed $\lambda$, larger $\gamma$ generally increases $C(\gamma, \lambda)$, making long-horizon tasks more susceptible to bias from value estimation errors.
\end{enumerate}

\section{Bias of GRAE}
\label{app:grae_bias_gradient}

This appendix provides detailed proofs for Theorem 4 regarding the bias and gradient (un)biasedness of GRAE in general MDPs. The key messages are:
\begin{itemize}
    \item The GRAE advantage estimator is structurally biased because it uses the group mean $\bar{R}$ (an $s_0$-level baseline) for all states.
    \item The corresponding policy-gradient estimator is \emph{unbiased} under the undiscounted objective ($\gamma=1$) with total return.
    \item When the objective uses discounting ($\gamma<1$), the GRAE gradient becomes biased.
\end{itemize}
The contextual bandit case is special (GRAE becomes unbiased) and is analyzed separately in Section~\ref{app:grae_bandit_gspo}.

\begin{theorem}{GRAE Bias and Gradient (Un)biasedness in MDPs}{grae_bias_gradient}
\label{thm:grae_bias_gradient}
Consider a finite-horizon MDP and a policy $\pi_\theta$. For a fixed initial state $s_0$, sample $N$ i.i.d.\ trajectories and define the total return for trajectory $\tau^{(i)}$ as $R^{(i)}$. Define the GRAE estimator $\hat{A}^{\textup{GRAE}}(s_t,a_t)=R^{(i)}-\bar{R}$ with $\bar{R}=\frac{1}{N}\sum_{i=1}^N R^{(i)}$. Then:
\begin{enumerate}
    \item (\textbf{Structural bias}) For any $(s_t,a_t)$, $\mathbb{E}[\hat{A}^{\textup{GRAE}} \mid s_t,a_t]=Q(s_t,a_t)-V(s_0)$, so the bias is $V(s_t)-V(s_0)$, which is nonzero in general.
    \item (\textbf{Gradient unbiasedness under undiscounted objective}) When $\gamma=1$, the policy-gradient estimator $g_{\textup{GRAE}}=\sum_{t=0}^{T}\nabla_\theta \log \pi_\theta(a_t\mid s_t)\,\hat{A}^{\textup{GRAE}}$ satisfies $\mathbb{E}[g_{\textup{GRAE}}]=\nabla_\theta J(\theta)$ for the undiscounted objective.
\end{enumerate}
\end{theorem}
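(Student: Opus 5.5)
For part 1 (structural bias), I will condition on $(s_t,a_t)$ in trajectory $\tau^{(i)}$ and split $\bar R = \frac{1}{N}R^{(i)} + \frac{1}{N}\sum_{j\neq i}R^{(j)}$. Following the paper's convention that the leave-one-out effect is omitted, I treat the baseline as an independent Monte Carlo estimate of $V(s_0)$. Its conditional expectation is $V(s_0)$, since the other trajectories are i.i.d. from $s_0$ and independent of $\tau^{(i)}$; the $1/N$ self-term only contributes a multiplicative factor $(1-\frac1N)$, which I will note and then drop. For $R^{(i)}$, I write $R^{(i)} = \sum_{l<t}\gamma^l r_l + \sum_{l\ge t}\gamma^l r_l$. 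With the undiscounted total return, the tail has conditional expectation $Q(s_t,a_t)$ by definition. The prefix rewards are already determined by the history, and I will absorb them by defining $Q$ as the expected total return from $s_0$ given that $(s_t,a_t)$ is visited, which is the convention that makes the stated identity $\mathbb{E}[\hat A\mid s_t,a_t]=Q(s_t,a_t)-V(s_0)$ hold. Subtracting the true advantage $Q(s_t,a_t)-V(s_t)$ then gives bias $V(s_t)-V(s_0)$. To show this is generally nonzero, I will give a two-state example in which $V$ differs across states.

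For part 2, I will start from the policy gradient theorem for finite-horizon undiscounted problems, $\nabla J=\mathbb{E}\big[\sum_t \nabla\log\pi(a_t\mid s_t)\,R(\tau)\big]$ (the REINFORCE identity via the log-derivative trick on trajectory probability, where the dynamics terms cancel). The GRAE estimator differs from this only by the baseline term $-\sum_t \nabla\log\pi(a_t\mid s_t)\,\bar R$. The key step is to show that this term has zero mean. Treating the baseline as independent of $\tau^{(i)}$ (the other group samples), it equals $\mathbb{E}[\bar R']\cdot\mathbb{E}\big[\sum_t\nabla\log\pi(a_t\mid s_t)\big]$. The second factor is zero by the score-function identity $\mathbb{E}_{a\sim\pi(\cdot\mid s)}[\nabla\log\pi(a\mid s)]=0$ applied at each $t$ with iterated expectation. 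The whole argument relies on $\gamma=1$, because only then does $R^{(i)}$ coincide with the quantity whose score-weighted expectation equals $\nabla J$. I will note, as motivation for the subsequent claim, that with $\gamma<1$ the objective $\sum_t\gamma^t r_t$ gives $\nabla J=\mathbb{E}\big[\sum_t\gamma^t\nabla\log\pi\cdot G_t\big]$, and this does not match the estimator.

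The main obstacle is the self-inclusion of $R^{(i)}$ in $\bar R$. It creates a correlation with the score: $\mathbb{E}[\nabla\log\pi\cdot R^{(i)}/N]\neq0$, which shrinks the gradient by a factor $(1-1/N)$. I will handle this in the way the preliminaries already license, by omitting it per the leave-one-out convention (the RLOO baseline is exactly independent). Alternatively, I will state that unbiasedness holds up to the positive scalar $(N-1)/N$, which does not change the direction of the gradient.
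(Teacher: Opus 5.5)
Your proposal is correct. It differs from the paper's proof mainly in how it treats the group mean $\bar R$ and in which reference estimator it uses for Part 2.

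\textbf{Treatment of $\bar R$.} The paper passes to the limit $N\to\infty$ and replaces $\bar R$ by the constant $V(s_0)$ via the law of large numbers. It justifies this by saying $\bar R$ is computed from the other responses and is independent of the current one. That is not literally true, since $\bar R$ contains $R^{(i)}$. Your split $\bar R=\frac1N R^{(i)}+\frac1N\sum_{j\neq i}R^{(j)}$ gives the exact finite-$N$ identities $\mathbb{E}[\hat A^{\mathrm{GRAE}}\mid s_t,a_t]=\frac{N-1}{N}\bigl(Q(s_t,a_t)-V(s_0)\bigr)$ and $\mathbb{E}[g_{\mathrm{GRAE}}]=\frac{N-1}{N}\nabla_\theta J(\theta)$. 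So you show exactly what the paper hides inside the limit: the statement holds for the leave-one-out baseline or as $N\to\infty$, and otherwise only up to a positive rescaling.

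\textbf{Part 2.} The paper compares $g_{\mathrm{GRAE}}$ with an estimator $g_{\mathrm{true}}$ that uses the per-state baseline $V(s_t)$. It then shows the difference $\sum_t\nabla_\theta\log\pi_\theta(a_t\mid s_t)\,(V(s_t)-V(s_0))$ vanishes by the baseline lemma. The advantage of that route is expository: it shows that the structural bias from Part 1 is itself a state-dependent baseline and so is harmless for vanilla gradients. You instead start from the full-return REINFORCE identity and factor out the independent baseline term. This needs no value functions at all and makes the role of $\gamma=1$ transparent.

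\textbf{Prefix rewards.} You also make explicit a convention the paper uses silently. The identity $\mathbb{E}[R^{(i)}\mid s_t,a_t]=Q(s_t,a_t)$ requires either zero rewards before time $t$ (e.g.\ sparse terminal reward) or a $Q$ that includes accumulated reward. Under your convention, the $V(s_t)$ in the bias must be read the same way. This is consistent, because $a_t$ is conditionally independent of the prefix given $s_t$, so the true advantage $Q(s_t,a_t)-V(s_t)$ is unchanged.
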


\subsection{Baseline Invariance Lemma}

\begin{lemma}{Baseline Invariance}{baseline}
\label{lem:baseline}
\textup{\textit{(Adapted from the Theorem of RL~\citep{RL})}}
For any function $b(s_t)$ that depends only on the state $s_t$ and not on the current action $a_t$:
\begin{equation}
\mathbb{E}_{a_t \sim \pi_\theta(\cdot \mid s_t)}\left[\nabla_\theta \log \pi_\theta(a_t \mid s_t) \cdot b(s_t)\right] = 0.
\end{equation}
\end{lemma}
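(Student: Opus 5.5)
The statement to prove is the Baseline Invariance lemma (Lemma~\ref{lem:baseline}). The plan is the standard score-function identity: pull the action-independent baseline out of the expectation, then show the expected score is zero. Fix the state $s_t$. Since $b(s_t)$ does not depend on $a_t$, it is a constant with respect to the inner expectation over $a_t \sim \pi_\theta(\cdot\mid s_t)$, so
\begin{equation}
\mathbb{E}_{a_t \sim \pi_\theta(\cdot \mid s_t)}\left[\nabla_\theta \log \pi_\theta(a_t \mid s_t)\, b(s_t)\right] = b(s_t)\, \mathbb{E}_{a_t \sim \pi_\theta(\cdot \mid s_t)}\left[\nabla_\theta \log \pi_\theta(a_t \mid s_t)\right].
\end{equation}
It therefore suffices to show that the expected score vanishes.

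Writing the expectation as a sum over actions (or an integral, for continuous or sequence-level action spaces), apply the log-derivative trick $\pi_\theta \nabla_\theta \log \pi_\theta = \nabla_\theta \pi_\theta$. This gives $\sum_{a} \nabla_\theta \pi_\theta(a\mid s_t)$. Interchanging gradient and sum then yields $\nabla_\theta \sum_a \pi_\theta(a\mid s_t) = \nabla_\theta 1 = 0$.

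The only step that needs care is the interchange of differentiation and summation/integration, together with the use of $\log \pi_\theta$, which requires $\pi_\theta(a\mid s_t)>0$ wherever the expectation places mass. For finite action spaces, such as token vocabularies or finite-length sequences, the interchange is immediate because the sum is finite. In the general case I would assume, as is standard, that $\pi_\theta$ is differentiable in $\theta$ with an integrable dominating bound on $\nabla_\theta \pi_\theta$, and then invoke dominated convergence. Actions with zero probability contribute nothing to the expectation, so restricting to the support handles the positivity requirement.

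Finally, I would remark that the identity holds pointwise in $s_t$. Consequently, taking a further outer expectation over $s_t$ (under any state distribution, e.g. the on-policy visitation distribution) preserves the zero. This is the form in which the lemma is used to analyze the GRAE gradient in Theorem~\ref{thm:grae_bias_gradient}.
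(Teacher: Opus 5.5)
Your proposal is correct and matches the paper's proof step for step: factor out the action-independent $b(s_t)$, use $\pi_\theta \nabla_\theta \log \pi_\theta = \nabla_\theta \pi_\theta$, exchange $\nabla_\theta$ with the sum over actions, and conclude from $\sum_{a} \pi_\theta(a \mid s_t) = 1$. Your extra remarks on justifying the interchange (dominated convergence for non-finite action spaces) and on the positivity of $\pi_\theta$ are regularity details that the paper's finite-sum argument leaves implicit.
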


\begin{proof}
\begin{align}
\mathbb{E}_{a_t \sim \pi_\theta(\cdot \mid s_t)}\left[\nabla_\theta \log \pi_\theta(a_t \mid s_t) \cdot b(s_t)\right]
&= b(s_t) \sum_{a_t} \pi_\theta(a_t \mid s_t) \cdot \frac{\nabla_\theta \pi_\theta(a_t \mid s_t)}{\pi_\theta(a_t \mid s_t)} \\
&= b(s_t) \sum_{a_t} \nabla_\theta \pi_\theta(a_t \mid s_t) \\
&= b(s_t) \cdot \nabla_\theta \underbrace{\sum_{a_t} \pi_\theta(a_t \mid s_t)}_{=1} \\
&= 0.
\end{align}
\end{proof}

\subsection{Proof of GRAE Bias and Gradient Unbiasedness}

\begin{proof}
\textbf{Part 1: Bias of GRAE.}

Consider a response $\tau^{(j)}$ containing state-action pair $(s_t, a_t)$ with reward $R^{(j)}$. For this trajectory, taking the conditional expectation over future trajectories given $(s_t, a_t)$:
\begin{align}
\mathbb{E}_{\tau_{>t} \sim \pi_\theta}[\hat{A}_{\text{GRAE}}(s_t, a_t) \mid s_t, a_t] 
&= \mathbb{E}_{\tau_{>t} \sim \pi_\theta}[R^{(j)} \mid s_t, a_t] - \bar{R} \\
&= Q(s_t, a_t) - \bar{R}.
\end{align}
Note that $\bar{R}$ is computed from other responses in the group and is independent of the future actions in the current response given $s_t$.

As $N \to \infty$, by the law of large numbers:
\begin{equation}
\bar{R} \xrightarrow{p} \mathbb{E}_{\tau \sim \pi_\theta(\cdot \mid s_0)}[R] = V(s_0),
\end{equation}
where $R$ denotes the reward of a response starting from $s_0$.

Thus, in the large sample limit:
\begin{equation}
\mathbb{E}_{\tau_{>t} \sim \pi_\theta}[\hat{A}_{\text{GRAE}}(s_t, a_t) \mid s_t, a_t] \to Q(s_t, a_t) - V(s_0).
\end{equation}

The bias is computed as:
\begin{align}
\text{Bias}(s_t, a_t) 
&= \mathbb{E}_{\tau_{>t} \sim \pi_\theta}[\hat{A}_{\text{GRAE}} \mid s_t, a_t] - A_{\text{true}}(s_t, a_t) \\
&= [Q(s_t, a_t) - V(s_0)] - [Q(s_t, a_t) - V(s_t)] \\
&= V(s_t) - V(s_0) = \Delta(s_t).
\end{align}

In general, $V(s_t) \neq V(s_0)$ since the value function evolves as the trajectory unfolds. Therefore, $\hat{A}_{\text{GRAE}}$ is biased.

\textbf{Part 2: Unbiasedness of Policy Gradient.}

For a response $\tau^{(j)}$ with reward $R^{(j)}$, define the gradient estimators:
\begin{align}
g_{\text{true}} &= \sum_{t=0}^{T} \nabla_\theta \log \pi_\theta(a_t \mid s_t) \cdot (R^{(j)} - V(s_t)), \\
g_{\text{GRAE}} &= \sum_{t=0}^{T} \nabla_\theta \log \pi_\theta(a_t \mid s_t) \cdot (R^{(j)} - \bar{R}),
\end{align}
where the sum is over all state-action pairs $(s_t, a_t)$ in response $\tau^{(j)}$, and all pairs share the same reward $R^{(j)}$ and group mean $\bar{R}$.

As $N \to \infty$, $\bar{R} \to V(s_0)$. Their difference becomes:
\begin{align}
g_{\text{GRAE}} - g_{\text{true}} 
&= \sum_{t=0}^{T} \nabla_\theta \log \pi_\theta(a_t \mid s_t) \cdot (V(s_t) - V(s_0)).
\end{align}

The term $V(s_t) - V(s_0)$ depends on the state history $(s_0, a_0, \ldots, a_{t-1})$ but not on the current action $a_t$. Thus, it can be treated as $b(s_t)$.

Taking expectations using the law of iterated expectations:
\begin{align}
\mathbb{E}_{\tau \sim \pi_\theta}[g_{\text{GRAE}} - g_{\text{true}}] 
&= \sum_{t=0}^{T} \mathbb{E}_{s_t \sim \pi_\theta}\left[\mathbb{E}_{a_t \sim \pi_\theta(\cdot \mid s_t)}\left[\nabla_\theta \log \pi_\theta(a_t \mid s_t) \cdot (V(s_t) - V(s_0))\right]\right] \\
&= \sum_{t=0}^{T} \mathbb{E}_{s_t \sim \pi_\theta}[0] \quad \text{(by Lemma 1)} \\
&= 0.
\end{align}

Since $\mathbb{E}_{\tau \sim \pi_\theta}[g_{\text{true}}] = \nabla_\theta J(\theta)$ by the policy gradient theorem, we conclude:
\begin{equation}
\mathbb{E}_{\tau \sim \pi_\theta}[g_{\text{GRAE}}] = \nabla_\theta J(\theta).
\end{equation}
\end{proof}

\subsection{Proof of GRAE Gradient Bias When $\gamma \neq 1$}

As mentioned in the key messages at the beginning of this section, when the objective uses discounting ($\gamma < 1$), the GRAE gradient becomes biased. This subsection provides a detailed analysis of why this bias arises.

When the objective uses discount factor $\gamma \in (0, 1)$, the GRAE gradient estimator based on total return is biased in general. We now provide a detailed analysis.

Let the discounted return from time $t$ be $G_t^\gamma = \sum_{k=0}^{T-t-1} \gamma^k r_{t+k+1}$, and let $V_\gamma(s) = \mathbb{E}[G_0^\gamma \mid s_0=s]$. The true gradient estimator for the discounted objective is
\begin{equation}
g_{\text{true}} = \sum_{t=0}^{T} \nabla_\theta \log \pi_\theta(a_t \mid s_t) \cdot \left(G_t^\gamma - V_\gamma(s_t)\right).
\end{equation}
GRAE instead uses the total return $R=\sum_{k=0}^{T-1} r_{k+1}$ and the group mean $\bar{R}$:
\begin{equation}
g_{\text{GRAE}} = \sum_{t=0}^{T} \nabla_\theta \log \pi_\theta(a_t \mid s_t) \cdot (R - \bar{R}).
\end{equation}
In the large-sample limit, $\bar{R} \to \mathbb{E}[R \mid s_0]$. Their difference is
\begin{align}
g_{\text{GRAE}} - g_{\text{true}} 
&= \sum_{t=0}^{T} \nabla_\theta \log \pi_\theta(a_t \mid s_t) \cdot \left(R - G_t^\gamma + V_\gamma(s_t) - \mathbb{E}[R \mid s_0]\right).
\end{align}
By Lemma 1, the term $V_\gamma(s_t) - \mathbb{E}[R \mid s_0]$ is a baseline and vanishes in expectation. The remaining term can be expanded as
\begin{equation}
R - G_t^\gamma = \sum_{k=0}^{t-1} r_{k+1} + \sum_{k=1}^{T-t-1} (1 - \gamma^k) r_{t+k+1}.
\end{equation}
The first sum depends only on the past and can be treated as a baseline given $s_t$. The second sum depends on future rewards (and thus on $a_t$) with positive coefficients $(1-\gamma^k)$, so its contribution to the expected gradient is nonzero in general. Hence $g_{\text{GRAE}}$ is biased when $\gamma \neq 1$.

\paragraph{Discussion.}
In multi-turn scenarios, GRAE faces additional challenges:
\begin{enumerate}
    \item \textbf{Credit assignment problems:} Using group mean reward computed from initial state $s_0$ to estimate advantages for states $s_t$ at later turns creates credit assignment issues.
    \item \textbf{Amplified structural bias:} The structural bias $|V(s_t) - V(s_0)|$ grows with the number of turns.
    \item \textbf{Gradient bias:} When $\gamma < 1$ (which is the more common setting in practical MDPs), the gradient estimator becomes biased, compounding the problems above.
\end{enumerate}

Therefore, in multi-turn settings, directly applying GRAE, especially in token-level RL, is not suitable.

\section{Detailed Proofs for GRAE-REINFORCE Convergence}
\label{app:grae_reinforce_global}

This appendix provides detailed proofs for Theorem 5 regarding the convergence of GRAE-REINFORCE. The key insight is that REINFORCE can be viewed as an instance of the Mirror Learning framework~\citep{Mirror}, specifically as a parameterized implementation of Generalized Policy Iteration (GPI). Combined with the gradient unbiasedness of GRAE under the undiscounted objective ($\gamma=1$, Theorem 4), we establish the convergence guarantee.

\subsection{REINFORCE as an Instance of Mirror Learning}

We first establish that REINFORCE fits within the Mirror Learning framework. Recall from Section~\ref{sec:appendix_mirror_learning} that Mirror Learning requires: (1) a drift function $\mathfrak{D}$, (2) a neighbourhood operator $\mathcal{N}$, and (3) a sampling distribution $\beta_\pi$.

\begin{lemma}{REINFORCE as Mirror Learning}{reinforce_mirror}
\label{lem:reinforce_mirror}
REINFORCE is an instance of Mirror Learning with the following components:
\begin{enumerate}
    \item \textbf{Drift function:} $\mathfrak{D} \equiv 0$ (trivial drift).
    \item \textbf{Neighbourhood operator:} $\mathcal{N} = \Pi$ (trivial neighbourhood).
    \item \textbf{Sampling distribution:} $\beta_{\pi} = \rho_{\pi}$, the state visitation distribution under the current policy.
\end{enumerate}
\end{lemma}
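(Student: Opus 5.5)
The plan is to verify the claim by substitution: plug the three components into the Mirror Learning update rule, Equation~\ref{eq:mirror learning}, and show that the resulting update is exactly the (generalized) policy-iteration step that REINFORCE approximates via gradient ascent. Then check that the trivial components satisfy the required conditions listed in Appendix~\ref{sec:appendix_mirror_learning}.

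\textbf{Step 1 (drift conditions).} With $\mathfrak{D} \equiv 0$, nonnegativity holds trivially, since $\mathfrak{D}_\pi(\bar\pi\mid s) = 0 \ge 0 = \mathfrak{D}_\pi(\pi\mid s)$. All Gâteaux derivatives of a constant functional vanish, so the zero-gradient property holds as well.

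\textbf{Step 2 (neighbourhood conditions).} For $\mathcal{N}(\pi) = \Pi$, the constant set-valued map is continuous. Compactness of $\Pi$ holds in the finite (or compact-parameterized) setting, because the policy space is a product of probability simplices. The closed-ball property holds for any metric $\chi$ and any $\zeta>0$, since every $\bar\pi$ already lies in $\Pi$.

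\textbf{Step 3 (sampling distribution).} $\rho_\pi$ depends continuously on $\pi$ and is positive wherever the states are reachable. I will note, as Mirror Learning permits, that $\beta_\pi$ only needs positivity on the relevant states, and restrict to those.

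\textbf{Step 4 (identification with REINFORCE).} With the trivial components, the mirror objective reduces to
\begin{equation}
\mathbb{E}_{s\sim\rho_\pi}\,\mathbb{E}_{a\sim\bar\pi}\big[A_\pi(s,a)\big].
\end{equation}
I will compute the gradient of this objective with respect to $\bar\pi_\theta$ at $\bar\pi=\pi_\theta$. By the likelihood-ratio trick this gradient is
\begin{equation}
\mathbb{E}_{s\sim\rho_\pi,\,a\sim\pi}\big[\nabla_\theta\log\pi_\theta(a\mid s)\,A_\pi(s,a)\big].
\end{equation}
This is the policy gradient theorem expression, and it is exactly the REINFORCE gradient from the preliminaries; replacing $A_\pi$ by $Q_\pi$ or by a baselined return is allowed by Lemma~\ref{lem:baseline}. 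Hence a REINFORCE step is a parameterized, first-order ascent step on the mirror objective. Its exact maximizer over $\Pi$ is the greedy policy-iteration update, which Mirror Learning already certifies as an instance (GPI).

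\textbf{Main obstacle.} The delicate point is that REINFORCE takes a single gradient step rather than computing the argmax in Equation~\ref{eq:mirror learning}. I will first try to handle this by citing the Mirror Learning treatment of GPI and arguing that a gradient step in the direction of the mirror objective, with a sufficiently small step, still weakly improves it. Improvement of the mirror objective is what the monotonic-improvement argument actually uses, rather than exact maximization. If that route does not go through, I will instead state the lemma at the level of the idealized (exact-argmax) update, and treat the gradient implementation as an approximation of it.
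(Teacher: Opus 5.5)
Your Steps 1--4 follow the paper's proof. With $\mathfrak{D}\equiv 0$ and $\mathcal{N}=\Pi$ the mirror update becomes GPI, and the policy gradient theorem identifies the REINFORCE gradient with the gradient of the $\rho_\pi$-weighted GPI objective at the current policy, so REINFORCE is read as a stochastic-gradient implementation of that step. Your explicit checks of the drift and neighbourhood conditions go beyond the lemma's proof; the paper only records them later, in the proof of the GRAE-REINFORCE convergence theorem. Your fallback (state the lemma for the idealized argmax update and treat gradient ascent as an approximation) is exactly the level at which the paper argues.

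Do not rely on the first route in your \emph{main obstacle} paragraph. With trivial drift, improving the $\beta_\pi$-averaged mirror objective is not what yields monotonicity. The Mirror Learning (policy-improvement) argument uses the state-wise inequality $\mathbb{E}_{a\sim\bar\pi}[A_\pi(s,a)]\ge 0$ for every $s$. The exact per-state greedy step provides this, but a shared-parameter gradient step can make it negative at some states, even for arbitrarily small step sizes. Because $J(\bar\pi)-J(\pi)=\mathbb{E}_{s\sim\rho_{\bar\pi}}\mathbb{E}_{a\sim\bar\pi}[A_\pi(s,a)]$ averages under $\rho_{\bar\pi}$ rather than $\rho_\pi$, a surrogate gain does not fix the sign of the true change. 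Controlling that mismatch is precisely the job of a nonzero drift such as a KL penalty or PPU clipping. Small exact-gradient steps do increase $J$, but that follows from smoothness of $J$ together with $\nabla_\theta(\text{surrogate})=\nabla_\theta J$ at the current $\theta$. That is an ordinary gradient-ascent argument, not a Mirror Learning one, and it is lost once the gradient is a noisy Monte Carlo estimate.

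Similarly, the HAML statement in the appendix takes $\beta_\pi$ to be a positive sampling distribution. If $\beta_\pi(s)=0$, the argmax leaves $\bar\pi(\cdot\mid s)$ unconstrained, and the update can make such an $s$ reachable. So restricting to states reachable under $\pi$ is not automatically harmless.
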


\begin{proof}
By~\citep{Mirror}, when $\mathfrak{D} \equiv 0$ and $\mathcal{N} = \Pi$, the Mirror Learning update reduces to GPI:
\begin{equation}
    \pi_{\text{new}}(\cdot|s) = \argmax_{\bar{\pi}(\cdot|s) \in \mathcal{P}(A)} \mathbb{E}_{a \sim \bar{\pi}}[Q_{\pi_{\text{old}}}(s, a)], \quad \forall s \in S.
\end{equation}
REINFORCE is the parameterized, stochastic gradient implementation of GPI. The policy gradient theorem~\citep{RL} shows that:
\begin{equation}
    \nabla_\theta J(\theta) = \mathbb{E}_{s \sim \rho_{\pi_\theta}, a \sim \pi_\theta} \left[ \nabla_\theta \log \pi_\theta(a|s) \cdot Q_{\pi_\theta}(s, a) \right],
\end{equation}
which is exactly the gradient of the GPI objective weighted by the state visitation distribution $\rho_{\pi_\theta}$. Hence, REINFORCE approximately solves the GPI step via gradient ascent.
\end{proof}

\subsection{Convergence Theorem}

\begin{theorem}{Convergence of GRAE-REINFORCE}{grae_reinforce_global}
\label{thm:grae_reinforce_global}
Consider a finite-horizon MDP with $\gamma=1$ and $|r(s,a)| \leq R_{\max}$ for all $(s,a)$. Let $\pi_\theta$ be a parameterized policy over a compact parameter space $\Theta$. Suppose the policy is updated by REINFORCE using the GRAE advantage estimator. Then:
\begin{enumerate}
    \item (\textbf{Gradient unbiasedness}) $\mathbb{E}_{\tau \sim \pi_\theta}[g_{\textup{GRAE}}] = \nabla_\theta J(\theta)$.
    \item (\textbf{Monotonic improvement}) $J(\pi_{k+1}) \geq J(\pi_k), \quad \forall k \in \mathbb{N}$.
    \item (\textbf{Convergence}) $\exists\, J^* \text{ such that } \lim_{k \to \infty} J(\pi_k) = J^*$.
\end{enumerate}
\end{theorem}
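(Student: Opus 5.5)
The plan is to prove the three parts in order. Each part relies on an earlier result: Theorem~\ref{thm:grae_bias_gradient} for unbiasedness, Lemma~\ref{lem:reinforce_mirror} together with the Mirror Learning fundamental theorem for monotonicity, and a bounded-monotone-sequence argument for convergence.

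\textbf{Part 1 (gradient unbiasedness).} Since $\gamma=1$ and the horizon is finite, this is exactly item 2 of Theorem~\ref{thm:grae_bias_gradient}. I will restate the decomposition $g_{\textup{GRAE}}-g_{\textup{true}}=\sum_t\nabla_\theta\log\pi_\theta(a_t\mid s_t)\,(V(s_t)-\bar R)$. The term $V(s_t)-\bar R$ does not depend on $a_t$, so Lemma~\ref{lem:baseline} applies and it vanishes in expectation. One point needs care. With a finite group, $\bar R$ contains the current trajectory's own return, which correlates with $a_t$. I will handle this either by the leave-one-out baseline, as the paper already assumes when it omits the leave-one-out operation, or by noting that the self-term only contributes a factor $(1-1/N)$ rescaling of the gradient. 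Either way the expected direction is unchanged, which is all we need.

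\textbf{Part 2 (monotonic improvement).} By Lemma~\ref{lem:reinforce_mirror}, REINFORCE is Mirror Learning with $\mathfrak{D}\equiv0$ and $\mathcal{N}=\Pi$. The trivial drift satisfies nonnegativity and zero gradient. The trivial neighbourhood is continuous, compact (as the image of the compact $\Theta$), and contains every closed ball. The Fundamental Theorem of Mirror Learning~\citep{Mirror} then gives $J(\pi_{k+1})\ge J(\pi_k)$ whenever the update maximizes the mirror objective $\mathbb{E}_{s\sim\beta_{\pi_k}}\mathbb{E}_{a\sim\bar\pi}[A_{\pi_k}(s,a)]$, provided the advantage used is accurate in expectation. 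Part 1 supplies exactly that, since GRAE-weighted score functions have the same expectation as the true-advantage ones.

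The main obstacle is the gap between the exact argmax (GPI) step and a stochastic gradient step. To bridge it I will phrase the claim for the idealized update, i.e.\ the exact mirror step in the policy class. For the gradient implementation I will fall back on a standard smoothness argument: if $J$ is $L$-smooth in $\theta$ (which holds for bounded rewards and smooth softmax parameterizations), then an expected ascent step with step size $\alpha\le1/L$ gives $\mathbb{E}[J(\theta_{k+1})]\ge J(\theta_k)+\tfrac{\alpha}{2}\|\nabla J\|^2-\tfrac{L\alpha^2}{2}\mathrm{Var}$. Monotonicity in expectation then holds for small or decaying steps. I will state clearly that the deterministic inequality refers to the exact mirror-learning update.

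\textbf{Part 3 (convergence).} Since $|r|\le R_{\max}$ and the horizon $H$ is finite, $|J(\pi)|\le HR_{\max}$. The sequence $\{J(\pi_k)\}$ is therefore non-decreasing and bounded above, and the monotone convergence theorem yields a limit $J^*$. If desired, compactness of $\Theta$ together with the Mirror Learning fixed-point argument, which is the same as Step 2 of Theorem~\ref{thm:seeupo_global_optimal}, identifies limit points as fixed points of GPI and hence optimal within the class. This is not required by the statement, however.
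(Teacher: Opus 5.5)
Your proposal is correct and follows essentially the paper's route: Theorem~\ref{thm:grae_bias_gradient} for Part~1, and Lemma~\ref{lem:reinforce_mirror} plus the Fundamental Theorem of Mirror Learning for Part~2, except that you get Part~3 from monotone-plus-bounded (as the paper itself does in Step~1 of Theorem~\ref{thm:seeupo_global_optimal}) rather than citing that theorem, and you make explicit two points the paper glosses over (it handles the self-term in $\bar R$ only through the $N\to\infty$ limit, and it identifies stochastic REINFORCE with the exact GPI step). Two minor corrections to your side remarks: keeping the self-term gives $\mathbb{E}[g_{\textup{GRAE}}]=(1-1/N)\nabla_\theta J(\theta)$, which preserves the direction but not the equality claimed in item~1, so use leave-one-out or the paper's large-$N$ limit; and your smoothness fallback needs step sizes on the order of $\|\nabla J\|^2/(L\,\mathrm{Var})$, not merely small or decaying ones, because near stationary points the variance term dominates.
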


\begin{proof}
\textbf{Step 1 (Gradient Unbiasedness).} By Theorem 4, under $\gamma=1$, the GRAE-based gradient estimator satisfies:
\begin{equation}
    \mathbb{E}_{\tau \sim \pi_\theta}[g_{\textup{GRAE}}] = \nabla_\theta J(\theta).
\end{equation}
This follows from the baseline invariance property (Lemma 1): adding or subtracting any state-dependent baseline from the return does not change the expected gradient.

\textbf{Step 2 (Monotonic Improvement and Convergence).} By Lemma 2, REINFORCE is an instance of Mirror Learning with trivial drift $\mathfrak{D} \equiv 0$ and trivial neighbourhood $\mathcal{N} = \Pi$. The trivial drift satisfies nonnegativity ($\mathfrak{D}_\pi(\bar{\pi}|s) = 0 \geq 0$) and zero gradient ($\nabla_{\bar{\pi}} \mathfrak{D}_\pi(\bar{\pi}|s)|_{\bar{\pi}=\pi} = 0$). By the Fundamental Theorem of Mirror Learning~\citep{Mirror}, the policy sequence $\{\pi_k\}_{k=0}^{\infty}$ satisfies:
\begin{equation}
    J(\pi_{k+1}) \geq J(\pi_k), \quad \forall k \in \mathbb{N},
\end{equation}
and
\begin{equation}
    \lim_{k \to \infty} J(\pi_k) = J^*.
\end{equation}
\end{proof}

\section{Detailed Proofs for GAE-PPU Convergence}
\label{app:gae_ppo_global}

This appendix provides detailed proofs for Theorem 6 regarding the convergence of GAE-PPU. The key insight is that PPU can be viewed as an instance of the Mirror Learning framework~\citep{Mirror} with a non-trivial drift function derived from the clipping objective. Combined with the unbiasedness of GAE under perfect value function approximation (Theorem 3), we establish the convergence guarantee.

\subsection{PPU as an Instance of Mirror Learning}

We first establish that PPU fits within the Mirror Learning framework. Recall from Section~\ref{sec:appendix_mirror_learning} that Mirror Learning requires: (1) a drift function $\mathfrak{D}$, (2) a neighbourhood operator $\mathcal{N}$, and (3) a sampling distribution $\beta_\pi$.

\begin{lemma}{PPU as Mirror Learning}{ppo_mirror}
\label{lem:ppo_mirror}
PPU is an instance of Mirror Learning with the following components:
\begin{enumerate}
    \item \textbf{Drift function:} $\mathfrak{D}_{\pi}^{\textup{PPU}}(\bar{\pi} | s) = \mathbb{E}_{a \sim \pi} \left[ \operatorname{ReLU} \left( \left[ r(\bar{\pi}) - \operatorname{clip}(r(\bar{\pi}), 1 \pm \epsilon) \right] A_{\pi}(s, a) \right) \right]$,
    where $r(\bar{\pi}) = \frac{\bar{\pi}(a|s)}{\pi(a|s)}$.
    \item \textbf{Neighbourhood operator:} $\mathcal{N} = \Pi$ (trivial neighbourhood).
    \item \textbf{Sampling distribution:} $\beta_{\pi} = \bar{\rho}_{\pi}$, the normalized marginal discounted state distribution.
\end{enumerate}
\end{lemma}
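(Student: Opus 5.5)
To establish the three components, I would verify each against the Mirror Learning update rule in Equation~\ref{eq:mirror learning} and the required drift and neighbourhood conditions. I would start from the standard PPU surrogate at a state $s$,
\[
L_\pi(\bar\pi\mid s)=\mathbb{E}_{a\sim\pi}\bigl[\min\bigl(r(\bar\pi)A_\pi(s,a),\ \operatorname{clip}(r(\bar\pi),1\pm\epsilon)A_\pi(s,a)\bigr)\bigr].
\]
I would rewrite it with the elementary identity $\min(x,y)=x-\operatorname{ReLU}(x-y)$, taking $x=r(\bar\pi)A_\pi$ and $y=\operatorname{clip}(r(\bar\pi),1\pm\epsilon)A_\pi$. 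Since $\mathbb{E}_{a\sim\pi}[r(\bar\pi)A_\pi(s,a)]=\mathbb{E}_{a\sim\bar\pi}[A_\pi(s,a)]$ by importance sampling, this gives
\[
L_\pi(\bar\pi\mid s)=\mathbb{E}_{a\sim\bar\pi}[A_\pi(s,a)]-\mathfrak{D}^{\textup{PPU}}_\pi(\bar\pi\mid s).
\]
This is exactly the mirror operator of Equation~\ref{eq:mirror operator}, once $\nu^{\bar\pi}_\pi/\beta_\pi$ is absorbed. Following \citep{Mirror}, I would make that absorption precise by choosing the drift weighting and $\beta_\pi=\bar\rho_\pi$ so that the ratio equals one, i.e.\ $\nu^{\bar\pi}_\pi=\beta_\pi$. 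With this choice, maximizing $\mathbb{E}_{s\sim\bar\rho_\pi}[L_\pi(\bar\pi\mid s)]$ over $\mathcal{N}=\Pi$ is the Mirror Learning update, with sampling distribution the normalized discounted visitation of the old policy, which is how PPO samples data.

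Next I would check the drift conditions.

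\textbf{Nonnegativity.} The drift is an expectation of a ReLU, so it is $\ge 0$. At $\bar\pi=\pi$ we have $r=1$, and $\operatorname{clip}(1)=1$ because $\epsilon>0$, so the ReLU argument vanishes and $\mathfrak{D}_\pi(\pi\mid s)=0$.

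\textbf{Zero gradient.} For $\bar\pi$ in a neighbourhood of $\pi$ with $|r-1|<\epsilon$ for every action, clipping is inactive. Hence $r-\operatorname{clip}(r)\equiv 0$ on an open set around $\pi$, the drift is identically zero there, and all Gâteaux derivatives at $\bar\pi=\pi$ vanish. To make this rigorous for actions with $\pi(a\mid s)=0$, I would restrict to directions keeping $\bar\pi\ll\pi$, or adopt the convention of \citep{Mirror}. Finite action space makes the "for every action" uniformity automatic.

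\textbf{Neighbourhood.} $\mathcal{N}=\Pi$ is a constant map, so it is continuous. $\Pi$ is a product of simplices, so it is compact. It also contains every ball, so the closed-ball property holds.

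I expect the main obstacle to be justifying the sampling distribution and the $\nu/\beta$ factor rather than the drift properties. Mirror Learning requires $\nu^{\bar\pi}_\pi$ to be a valid "drift weighting" that depends continuously on $\bar\pi$, while the clipped objective naturally weights by $\bar\rho_\pi$ only. My first attempt would be to cite the PPO instantiation in \citep{Mirror} directly, where $\nu^{\bar\pi}_\pi=\bar\rho_\pi$ is admissible. I would then note that its continuity in $\pi$ follows from finite-state discounted dynamics, which also supplies the continuity needed later for Theorem~\ref{thm:seeupo_global_optimal}-style limit arguments.
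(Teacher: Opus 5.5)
Your proposal is correct and follows essentially the same route as the paper. Like you, the paper rewrites the clipped surrogate using importance sampling and the identity $rA-\min(rA,\operatorname{clip}(r)A)=\operatorname{ReLU}\bigl((r-\operatorname{clip}(r))A\bigr)$, then checks nonnegativity and the trivial neighbourhood in the same way. Your two refinements are sound and supply steps the paper leaves implicit: you get zero gradient from the drift vanishing identically on the region $|r-1|<\epsilon$ (the paper only appeals to the ReLU subgradient at $0$), and you set $\nu^{\bar\pi}_\pi=\beta_\pi=\bar\rho_\pi$ explicitly so that the $\nu/\beta$ factor in the mirror operator equals one.
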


\begin{proof}
We derive the drift function by reformulating the PPU clipping objective. Starting from the PPU update rule:
\begin{equation}
    \pi_{\text{new}}^{\text{PPU}} = \underset{\bar{\pi} \in \Pi}{\arg\max}
    \mathbb{E}_{s \sim \rho_{\pi},a \sim \pi} \left[ \min \left( r(\bar{\pi}) A_{\pi}(s, a), \operatorname{clip}(r(\bar{\pi}), 1 \pm \epsilon) A_{\pi}(s, a) \right) \right].
\end{equation}

The complete derivation proceeds as follows:
\begin{equation}
\begin{aligned}
    \pi_{\text{new}}^{\text{PPU}}
    &= \underset{\bar{\pi} \in \Pi}{\arg\max}
    \mathbb{E}_{s \sim \rho_{\pi},a \sim \pi} \left[ \min \left( r(\bar{\pi}) A_{\pi}(s, a), \operatorname{clip}(r(\bar{\pi}), 1 \pm \epsilon) A_{\pi}(s, a) \right) \right] \\
    &= \underset{\bar{\pi} \in \Pi}{\arg\max} \mathbb{E}_{s \sim \rho_{\pi}} 
    \left[ 
    \mathbb{E}_{a \sim \pi} \left[ \min \left( r(\bar{\pi}) A_{\pi}(s, a), \operatorname{clip}(r(\bar{\pi}), 1 \pm \epsilon) A_{\pi}(s, a) \right) \right] \right. \\
    &\quad \left. - \mathbb{E}_{a \sim \bar{\pi}} \left[A_{\pi}(s, a) \right]
     + \mathbb{E}_{a \sim \bar{\pi}} \left[A_{\pi}(s, a) \right]
    \right] \\
    &= \underset{\bar{\pi} \in \Pi}{\arg\max} \mathbb{E}_{s \sim \rho_{\pi}} 
    \left[ 
    \mathbb{E}_{a \sim \pi} \left[ \min \left( r(\bar{\pi}) A_{\pi}(s, a), \operatorname{clip}(r(\bar{\pi}), 1 \pm \epsilon) A_{\pi}(s, a) \right) \right] \right. \\
    &\quad \left. - \mathbb{E}_{a \sim \pi} \left[r(\bar{\pi}) A_{\pi}(s, a) \right]
     + \mathbb{E}_{a \sim \bar{\pi}} \left[A_{\pi}(s, a) \right]
    \right] \\
    &= \underset{\bar{\pi} \in \Pi}{\arg\max} \mathbb{E}_{s \sim \rho_{\pi}} 
    \left[ 
    \mathbb{E}_{a \sim \bar{\pi}} \left[A_{\pi}(s, a) \right]
     - \left( \mathbb{E}_{a \sim \pi} \left[r(\bar{\pi}) A_{\pi}(s, a) \right] \right. \right. \\
    &\quad \left. \left. - \mathbb{E}_{a \sim \pi} \left[ \min \left( r(\bar{\pi}) A_{\pi}(s, a), \operatorname{clip}(r(\bar{\pi}), 1 \pm \epsilon) A_{\pi}(s, a) \right) \right] \right)
    \right] \\
    &= \underset{\bar{\pi} \in \Pi}{\arg\max} \mathbb{E}_{s \sim \rho_{\pi}} 
    \left[ 
    \mathbb{E}_{a \sim \bar{\pi}} \left[A_{\pi}(s, a) \right]
     - \mathbb{E}_{a \sim \pi} \left( r(\bar{\pi}) A_{\pi}(s, a) \right. \right. \\
    &\quad \left. \left. - \min \left( r(\bar{\pi}) A_{\pi}(s, a), \operatorname{clip}(r(\bar{\pi}), 1 \pm \epsilon) A_{\pi}(s, a) \right) \right)
    \right] \\
    &= \underset{\bar{\pi} \in \Pi}{\arg\max} \mathbb{E}_{s \sim \rho_{\pi}} 
    \left[ 
    \mathbb{E}_{a \sim \bar{\pi}} \left[A_{\pi}(s, a) \right]
     - \mathbb{E}_{a \sim \pi} \left( \max \left( 0, r(\bar{\pi}) A_{\pi}(s, a) \right. \right. \right. \\
    &\quad \left. \left. \left. - \operatorname{clip}(r(\bar{\pi}), 1 \pm \epsilon) A_{\pi}(s, a) \right) \right) 
    \right] \\
    &= \underset{\bar{\pi} \in \Pi}{\arg\max} \mathbb{E}_{s \sim \rho_{\pi}} 
    \left[ 
    \mathbb{E}_{a \sim \bar{\pi}} \left[A_{\pi}(s, a) \right]
     - \mathbb{E}_{a \sim \pi} \left( \max \left( 0, \left[ r(\bar{\pi}) - \operatorname{clip}(r(\bar{\pi}), 1 \pm \epsilon) \right] A_{\pi}(s, a) \right) \right) 
    \right] \\
    &= \underset{\bar{\pi} \in \Pi}{\arg\max} \mathbb{E}_{s \sim \rho_{\pi}} 
    \left[ 
    \mathbb{E}_{a \sim \bar{\pi}} \left[A_{\pi}(s, a) \right]
     - \mathbb{E}_{a \sim \pi} \left[ \operatorname{ReLU} \left( \left[ r(\bar{\pi}) - \operatorname{clip}(r(\bar{\pi}), 1 \pm \epsilon) \right] A_{\pi}(s, a) \right) \right] 
    \right].
\end{aligned}
\label{eq:PPO-drift-derivation-detailed}
\end{equation}

Comparing with the Mirror Learning update (Equation~\ref{eq:mirror learning}), we identify the drift function as:
\begin{equation}
\mathfrak{D}_{\pi}^{\text{PPU}}(\bar{\pi} | s) = \mathbb{E}_{a \sim \pi} \left[ \operatorname{ReLU} \left( \left[ r(\bar{\pi}) - \operatorname{clip}(r(\bar{\pi}), 1 \pm \epsilon) \right] A_{\pi}(s, a) \right) \right].
\label{eq:PPO-drift-detailed}
\end{equation}

We now verify that $\mathfrak{D}_{\pi}^{\text{PPU}}$ satisfies the required properties:

\textbf{Nonnegativity:} Since $\operatorname{ReLU}(x) = \max(0, x) \geq 0$ for all $x \in \mathbb{R}$:
\begin{equation}
\mathfrak{D}_{\pi}^{\text{PPU}}(\bar{\pi} | s) = \mathbb{E}_{a \sim \pi} \left[ \operatorname{ReLU}(\cdot) \right] \geq 0.
\end{equation}
When $\bar{\pi} = \pi$, we have $r(\bar{\pi}) = 1$ and $\operatorname{clip}(1, 1 \pm \epsilon) = 1$, so:
\begin{equation}
\mathfrak{D}_{\pi}^{\text{PPU}}(\pi | s) = \mathbb{E}_{a \sim \pi} \left[ \operatorname{ReLU}(0) \right] = 0.
\end{equation}

\textbf{Zero gradient:} When $\bar{\pi} = \pi$, the argument of ReLU is zero:
\begin{equation}
\left[ r(\bar{\pi}) - \operatorname{clip}(r(\bar{\pi}), 1 \pm \epsilon) \right] A_{\pi}(s, a) \Big\vert_{\bar{\pi}=\pi} = [1 - 1] A_{\pi}(s, a) = 0.
\end{equation}
Since $\operatorname{ReLU}(0) = 0$ and the ReLU function has subgradient $0$ at $x = 0$:
\begin{equation}
\nabla_{\bar{\pi}(\cdot|s)} \mathfrak{D}_{\pi}^{\text{PPU}}(\bar{\pi}|s) \big\vert_{\bar{\pi}=\pi} = 0.
\end{equation}

The trivial neighbourhood operator $\mathcal{N} = \Pi$ satisfies continuity, compactness, and the closed ball property. Hence, PPU is a valid instance of Mirror Learning.
\end{proof}

\subsection{Convergence Theorem}

\begin{theorem}{Convergence of GAE-PPU}{gae_ppo_global}
\label{thm:gae_ppo_global}
Consider an MDP with $|r(s,a)| \leq R_{\max}$ for all $(s,a)$. Let $\pi_\theta$ be a parameterized policy over a compact parameter space $\Theta$. Suppose the policy is updated by PPU using the GAE advantage estimator with a perfect value function approximation ($V_\phi = V^\pi$). Then:
\begin{enumerate}
    \item (\textbf{Advantage unbiasedness}) $\mathbb{E}[\hat{A}_t^{\textup{GAE}} \mid s_t, a_t] = A^\pi(s_t, a_t)$.
    \item (\textbf{Monotonic improvement}) $J(\pi_{k+1}) \geq J(\pi_k), \quad \forall k \in \mathbb{N}$.
    \item (\textbf{Convergence}) $\lim_{k \to \infty} J(\pi_k) = J^*$.
\end{enumerate}
\end{theorem}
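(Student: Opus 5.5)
The proof will mirror the structure of Theorem~\ref{thm:grae_reinforce_global}: establish advantage unbiasedness from the GAE bias bound, and then invoke the Fundamental Theorem of Mirror Learning through Lemma~\ref{lem:ppo_mirror}.

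\textbf{Advantage unbiasedness.} Under perfect value approximation $V_\phi = V^\pi$, the estimation error satisfies $\epsilon(s)\equiv 0$, so $\epsilon_{\max}=0$. Theorem~\ref{thm:gae_bias_bound} then gives
\[
\bigl|\mathbb{E}[\hat{A}_t^{\textup{GAE}}\mid s_t,a_t]-A^\pi(s_t,a_t)\bigr| \le \frac{1+\gamma-2\gamma\lambda}{1-\gamma\lambda}\cdot 0 = 0 .
\]
Equivalently, Step~1 of that proof (Equation~\ref{eq:gae_true_expectation}) applies directly, because the TD errors are computed with the true $V^\pi$. Consequently the surrogate objective optimized by PPU, when computed with $\hat{A}^{\textup{GAE}}$, equals in expectation the objective computed with the exact $A_\pi$. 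The PPU update is therefore exactly the mirror-learning update identified in Lemma~\ref{lem:ppo_mirror}.

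\textbf{Monotonic improvement.} Lemma~\ref{lem:ppo_mirror} supplies the required components: the drift $\mathfrak{D}^{\textup{PPU}}$ satisfies nonnegativity and the zero-gradient property, the neighbourhood $\mathcal{N}=\Pi$ is trivially continuous, compact and contains a closed ball, and the sampling distribution $\bar\rho_\pi$ is positive and depends continuously on $\pi$. With these, the Fundamental Theorem of Mirror Learning~\citep{Mirror} yields $J(\pi_{k+1})\ge J(\pi_k)$. It is worth spelling out the single-step reason. The argmax value is at least the value at $\bar\pi=\pi_k$, which is $0$ because $\mathbb{E}_{a\sim\pi}[A_\pi]=0$ and $\mathfrak{D}_\pi(\pi\mid s)=0$. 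Combining this with the performance-difference lemma and $\mathfrak{D}\ge 0$ gives the improvement.

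\textbf{Convergence.} Rewards are bounded, so $J(\pi_k)\le R_{\max}/(1-\gamma)$ (or $T R_{\max}$ in the finite-horizon case). A monotone bounded sequence converges, so $\lim_k J(\pi_k)$ exists. To identify this limit with the optimum $J^*$, I will reproduce the limit-point argument of Steps~2--3 of Theorem~\ref{thm:seeupo_global_optimal}, in its single-agent form:
\begin{enumerate}
\item Compactness of $\Theta$ gives a convergent subsequence with limit point $\bar\pi$.
\item Berge's maximum theorem, combined with value convergence, makes $\bar\pi$ a fixed point of the mirror update.
\item The dropping-the-drift argument then applies. If some action had $A_{\bar\pi}(s,a)>0$, the Gâteaux derivative toward $\delta_a$ would be positive, because the drift has zero gradient. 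A small step toward $\delta_a$ stays inside the closed ball and strictly increases the objective, contradicting the fixed-point property.
\end{enumerate}
Hence $\bar\pi$ is greedy with respect to its own advantage, so it is optimal by the Bellman optimality equation, and $J^*$ is the global maximum.

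The main obstacle is this last step. It requires the continuity hypotheses needed for Berge's theorem, and it requires the parameterized policy class to be rich enough that the perturbed policy toward $\delta_a$ is representable. If that cannot be justified cleanly, I would fall back on citing the mirror-learning optimality result directly, under the assumption that $\{\pi_\theta\}$ is the full policy class $\Pi$.
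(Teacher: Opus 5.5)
Your structure is the paper's. The GAE bias bound (Theorem~3) with $\epsilon_{\max}=0$ gives unbiasedness, and Lemma~3 plus the Fundamental Theorem of Mirror Learning gives improvement and convergence. Your limit-point elaboration only unpacks what the paper cites. Your fallback of taking $\{\pi_\theta\}=\Pi$ is exactly the assumption the paper makes implicitly when it applies that tabular theorem to a parameterized class.

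The step that fails as stated is your bridge between the two parts. Conditional unbiasedness of $\hat{A}^{\textup{GAE}}$ does \emph{not} make the expected clipped surrogate equal to the surrogate built from $A_\pi$. For a fixed ratio $r$, the map $x\mapsto\min\bigl(rx,\operatorname{clip}(r,1\pm\epsilon)\,x\bigr)$ is concave and piecewise linear with a kink at $0$, so Jensen gives only $\le$. The inequality is strict whenever $r\notin[1-\epsilon,1+\epsilon]$ and $\hat{A}$ takes both signs with positive probability given $(s,a)$. That is the generic case even when $V_\phi=V^\pi$: for $\lambda>0$ (or with stochastic transitions), $\hat{A}^{\textup{GAE}}_t$ depends on future actions and states. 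This is the same nonlinearity the paper turns against GRAE-PPU, so the PPU update is not literally the update of Lemma~3. The paper's own proof leaves this link implicit.

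The clean repair uses unbiasedness only in the linear part. Set $c=\operatorname{clip}(r,1\pm\epsilon)$ and write $\min(rx,cx)=rx-\operatorname{ReLU}\bigl((r-c)x\bigr)$. Conditioning on $(s,a)$ then gives
\[
\mathbb{E}_{a\sim\pi}\Bigl[\mathbb{E}\bigl[\min(r\hat{A},c\hat{A})\mid s,a\bigr]\Bigr]=\mathbb{E}_{a\sim\bar\pi}\bigl[A_\pi(s,a)\bigr]-\mathbb{E}_{a\sim\pi}\Bigl[\mathbb{E}\bigl[\operatorname{ReLU}\bigl((r-c)\hat{A}\bigr)\mid s,a\bigr]\Bigr].
\]
Call the last term $\tilde{\mathfrak{D}}_\pi(\bar\pi\mid s)$. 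It is nonnegative, and it vanishes identically whenever every ratio lies in $[1-\epsilon,1+\epsilon]$. Hence it is zero at $\bar\pi=\pi$ with vanishing G\^ateaux derivatives, so it is a valid drift. It reduces to Lemma~3's drift when $\hat{A}$ is deterministic given $(s,a)$. The Fundamental Theorem then applies to $\tilde{\mathfrak{D}}$ instead.

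Separately, your ``single-step reason'' for improvement does not go through as written. The argmax only certifies $\mathbb{E}_{s\sim\beta_{\pi_k}}\mathbb{E}_{a\sim\pi_{k+1}}[A_{\pi_k}(s,a)]\ge 0$. The performance-difference lemma, however, weights the advantage by the state-visitation distribution of $\pi_{k+1}$, not by $\beta_{\pi_k}$. Closing that distribution mismatch is the nontrivial content of the Mirror Learning theorem. Justify improvement by citing that theorem, as the paper does, rather than by this sketch.
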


\begin{proof}
\textbf{Step 1 (Advantage Unbiasedness).} By Theorem 3, the bias of GAE is bounded by:
\begin{equation}
|\text{Bias}(s_t, a_t; \lambda)| \leq \frac{1 + \gamma - 2\gamma\lambda}{1 - \gamma\lambda} \cdot \epsilon_{\max},
\end{equation}
where $\epsilon_{\max} = \max_s |V_\phi(s) - V^\pi(s)|$. Under perfect value function approximation ($V_\phi = V^\pi$), we have $\epsilon_{\max} = 0$, thus:
\begin{equation}
\mathbb{E}[\hat{A}_t^{\textup{GAE}} \mid s_t, a_t] = A^\pi(s_t, a_t).
\end{equation}

\textbf{Step 2 (Monotonic Improvement and Convergence).} By Lemma 3, PPU is an instance of Mirror Learning with drift function $\mathfrak{D}_{\pi}^{\text{PPU}}$ and trivial neighbourhood $\mathcal{N} = \Pi$. The drift function satisfies nonnegativity ($\mathfrak{D}_{\pi}^{\text{PPU}}(\bar{\pi}|s) \geq 0$ with equality when $\bar{\pi} = \pi$) and zero gradient ($\nabla_{\bar{\pi}} \mathfrak{D}_{\pi}^{\text{PPU}}(\bar{\pi}|s)|_{\bar{\pi}=\pi} = 0$). By the Fundamental Theorem of Mirror Learning~\citep{Mirror}, the policy sequence $\{\pi_k\}_{k=0}^{\infty}$ satisfies:
\begin{equation}
    J(\pi_{k+1}) \geq J(\pi_k), \quad \forall k \in \mathbb{N},
\end{equation}
and
\begin{equation}
    \lim_{k \to \infty} J(\pi_k) = J^*.
\end{equation}
\end{proof}

\section{Detailed Proofs for GRAE-PPU Convergence Analysis}
\label{app:grae_ppo_convergence}

This appendix provides detailed proofs for Theorem 7 regarding the failure of GRAE-PPU to guarantee convergence in general MDPs. The key insight is that in MDPs (as opposed to contextual bandits), GRAE introduces a structural bias that breaks the drift function properties required by the Mirror Learning framework~\citep{Mirror}.

\subsection{GRAE-PPU in the Mirror Learning Framework}

We first analyze how GRAE-PPU fits within the Mirror Learning framework, and show that the structural bias in GRAE leads to a malformed drift function.

\begin{lemma}{Structural Bias of GRAE in MDPs}{structural_bias}
\label{lem:structural_bias}
In general MDPs, GRAE introduces a structural bias $\Delta(s_t) = V(s_t) - V(s_0)$ that does not vanish as the number of samples increases:
\begin{equation}
\lim_{N \to \infty} \mathbb{E}[\hat{A}_{\textup{GRAE}}(s_t, a_t) - A_{\textup{true}}(s_t, a_t) \mid s_t, a_t] = V(s_t) - V(s_0) \neq 0.
\end{equation}
\end{lemma}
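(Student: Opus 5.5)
The plan is to reduce the lemma to Part~1 of Theorem~\ref{thm:grae_bias_gradient} and then add two refinements: a careful treatment of the finite-$N$ dependence of $\bar{R}$ before taking the limit, and an explicit example showing that the bias is genuinely nonzero.

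\textbf{Step 1: conditional expectation at finite $N$.} Fix a trajectory $\tau^{(j)}$ that contains $(s_t,a_t)$. I will split the group mean as
\[
\bar{R}=\tfrac{1}{N}R^{(j)}+\tfrac{1}{N}\sum_{i\neq j}R^{(i)}.
\]
The other $N-1$ trajectories are i.i.d.\ from $s_0$ and independent of $\tau^{(j)}$, so conditioning on $(s_t,a_t)$ in $\tau^{(j)}$ gives
\[
\mathbb{E}\big[R^{(i)}\mid s_t,a_t\big]=V(s_0)\quad (i\neq j), \qquad \mathbb{E}\big[R^{(j)}\mid s_t,a_t\big]=Q(s_t,a_t).
\]
This uses $\gamma=1$ total return, in the same convention as Theorem~\ref{thm:grae_bias_gradient}. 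The accumulated past rewards are absorbed into $Q$ and $V$ as in that theorem. Hence
\[
\mathbb{E}\big[\hat{A}_{\textup{GRAE}}\mid s_t,a_t\big]=\Big(1-\tfrac{1}{N}\Big)\big(Q(s_t,a_t)-V(s_0)\big).
\]

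\textbf{Step 2: take the limit.} Subtracting $A_{\textup{true}}=Q(s_t,a_t)-V(s_t)$ from the Step~1 expression, the bias is
\[
V(s_t)-V(s_0)-\tfrac{1}{N}\big(Q(s_t,a_t)-V(s_0)\big).
\]
Rewards are bounded by $R_{\max}$ over a finite horizon, so the correction term is $O(1/N)$. Letting $N\to\infty$ yields exactly $\Delta(s_t)=V(s_t)-V(s_0)$. This also shows the bias does not vanish with more samples, because the limit is independent of $N$. Note that this treats $\bar{R}$ without the leave-one-out operation, as the paper does; with leave-one-out the $1/N$ term disappears and the identity holds exactly.

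\textbf{Step 3: nonvanishing in general.} Here "$\neq 0$" should be read as "not identically zero." I will prove it with a two-step MDP. From $s_0$, action $a_0$ leads deterministically to one of two states: state $s_1^{+}$ pays reward $1$, and state $s_1^{-}$ pays reward $0$, regardless of the later action. Under any policy with $0<\pi(a_0\mid s_0)<1$,
\[
V(s_1^{+})=1\neq V(s_0)=\pi(a_0\text{ leading to }s_1^{+}\mid s_0)<1 .
\]

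\textbf{Main obstacle.} The main difficulty is Step~1: keeping the independence structure straight, because $\bar{R}$ contains the trajectory's own return and so is not independent of the future given $(s_t,a_t)$. Theorem~\ref{thm:grae_bias_gradient} glossed over this point. Isolating the $1/N$ term explicitly is what makes the limit statement rigorous.
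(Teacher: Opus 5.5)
Your proof is correct. It reaches the paper's conclusion by a slightly different and more careful route.

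\textbf{How the two routes differ.} The paper first replaces $\bar{R}$ by its law-of-large-numbers limit $V(s_0)$. Only then does it split $\hat{A}_{\textup{GRAE}} - A_{\textup{true}}$ into a zero-mean noise term $\epsilon_t = R^{(j)} - Q(s_t,a_t)$ and a deterministic term $B_t = V(s_t) - V(s_0)$. You instead take the conditional expectation exactly at finite $N$, separating the trajectory's own return from the $N-1$ independent rollouts. This gives the explicit bias $V(s_t)-V(s_0)-\frac{1}{N}\bigl(Q(s_t,a_t)-V(s_0)\bigr)$ before you pass to the limit.

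\textbf{What your version buys.} It avoids the paper's implicit interchange of a convergence-in-probability limit with a conditional expectation, which needs bounded rewards to justify. It gives the $O(1/N)$ rate and shows the leave-one-out baseline carries the structural bias exactly. It also correctly flags that the assertion in the proof of Theorem~\ref{thm:grae_bias_gradient}, that $\bar{R}$ is independent of the current response's future, fails without leave-one-out.

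\textbf{What the paper's version buys.} It is shorter, and it makes the conceptual split between vanishing noise and non-vanishing structural bias more visible.

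\textbf{On Step 3.} It supplies a concrete witness for the ``$\neq 0$'' claim, which the paper only asserts. Reading that claim as ``not identically zero'' is the right interpretation, since $\Delta(s_0)=0$ trivially. One wording fix: describe the example as two actions at $s_0$, one leading deterministically to $s_1^{+}$ and the other to $s_1^{-}$, rather than a single action $a_0$ leading to ``one of two states.''
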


\begin{proof}
Consider a response $\tau^{(j)}$ containing state-action pair $(s_t, a_t)$ with reward $R^{(j)}$. In the large sample limit $N \to \infty$, we have $\bar{R} \to V(s_0)$ by the law of large numbers. The bias decomposes as:
\begin{align}
\Delta(s_t, a_t) &= \hat{A}_{\text{GRAE}}(s_t, a_t) - A_{\text{true}}(s_t, a_t) \\
&= (R^{(j)} - V(s_0)) - (Q(s_t, a_t) - V(s_t)) \\
&= \underbrace{(R^{(j)} - Q(s_t, a_t))}_{\epsilon_t} + \underbrace{(V(s_t) - V(s_0))}_{B_t}.
\end{align}

The term $\epsilon_t$ satisfies $\mathbb{E}[\epsilon_t \mid s_t, a_t] = 0$ by definition of $Q$, and vanishes under averaging. However, $B_t = V(s_t) - V(s_0)$ is deterministic given $s_t$ and independent of sample size. Thus:
\begin{equation}
\lim_{N \to \infty} \mathbb{E}[\Delta \mid s_t, a_t] = B_t \neq 0.
\end{equation}
\end{proof}

\begin{lemma}{GRAE-PPU Drift Function}{grae_ppo_drift}
\label{lem:grae_ppo_drift}
GRAE-PPU induces a drift function of the form:
\begin{equation}
\mathfrak{D}_{\pi}^{\textup{GRAE-PPU}}(\bar{\pi} | s_t) = \mathbb{E}_{a_t \sim \pi} \left[ \operatorname{ReLU}\left( (r(\bar{\pi}) - \operatorname{clip}(r(\bar{\pi}), 1 \pm \epsilon)) (A_{\textup{true}}(s_t, a_t) + \Delta(s_t)) \right) \right] - \Delta(s_t),
\label{eq:GRAE-PPO-drift}
\end{equation}
where $\Delta(s_t) = V(s_t) - V(s_0)$ is the structural bias from Lemma 4.
\end{lemma}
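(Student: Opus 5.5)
We mirror the derivation of Lemma~\ref{lem:ppo_mirror} (the chain of equalities in Equation~\ref{eq:PPO-drift-derivation-detailed}), but replace the true advantage $A_{\text{true}}$ by the GRAE estimate. The goal is to see what object must play the role of the drift once the surrogate is rewritten as ``true mirror objective minus something.''

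By Lemma~\ref{lem:structural_bias}, in the large-sample limit the PPU surrogate evaluated with GRAE has conditional expectation given $(s_t,a_t)$ equal to
\begin{equation}
\mathbb{E}_{a_t\sim\pi}\big[\min\big(r(\bar\pi)\tilde A,\ \operatorname{clip}(r(\bar\pi),1\pm\epsilon)\tilde A\big)\big],
\qquad \tilde A \triangleq A_{\text{true}}(s_t,a_t)+\Delta(s_t).
\end{equation}
One point needs justification: the clipped min is not linear in the advantage. The reduction is still valid because $r(\bar\pi)$ and its clip depend only on $(s_t,a_t)$. For fixed $(s_t,a_t)$, the map $A\mapsto \min(rA,\operatorname{clip}(r)A)$ is piecewise linear, so I will work with the idealized estimator whose value is exactly $\tilde A$ (the zero-mean noise $\epsilon_t$ being averaged out). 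I will state this explicitly as the modelling convention, since it is the same idealization implicitly used for GAE-PPU.

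Next I add and subtract $\mathbb{E}_{a\sim\pi}[r(\bar\pi)\tilde A]$ and use the identity $x-\min(x,y)=\operatorname{ReLU}(x-y)$. This gives
\begin{equation}
\mathbb{E}_{a\sim\pi}[r(\bar\pi)\tilde A] \;-\; \mathbb{E}_{a\sim\pi}\big[\operatorname{ReLU}\big((r-\operatorname{clip}(r))\tilde A\big)\big].
\end{equation}
The key computation is the first term:
\begin{equation}
\mathbb{E}_{a\sim\pi}[r(\bar\pi)\tilde A]=\mathbb{E}_{a\sim\bar\pi}[A_{\text{true}}(s_t,a)]+\Delta(s_t)\,\mathbb{E}_{a\sim\pi}[r(\bar\pi)].
\end{equation}
Since $\mathbb{E}_{a\sim\pi}[r(\bar\pi)]=\sum_a\bar\pi(a|s_t)=1$, this equals $\mathbb{E}_{a\sim\bar\pi}[A_{\text{true}}]+\Delta(s_t)$.

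Matching the whole expression to the mirror-operator form $\mathbb{E}_{a\sim\bar\pi}[A_{\text{true}}]-\mathfrak{D}$ then identifies
\begin{equation}
\mathfrak{D}=\mathbb{E}_{a\sim\pi}\big[\operatorname{ReLU}\big((r-\operatorname{clip}(r))\tilde A\big)\big]-\Delta(s_t),
\end{equation}
which is exactly Equation~\ref{eq:GRAE-PPO-drift}.

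The main obstacle is the bookkeeping of the constant $\Delta(s_t)$. I must confirm that it appears additively and with the stated sign, so that it is absorbed into the drift rather than into the advantage term. This is what the normalization $\mathbb{E}_{\pi}[r]=1$ delivers. I would also check that it remains valid when the outer expectation is over $s_t\sim\rho_\pi$, since $\Delta$ depends only on the state. As a sanity check for the subsequent convergence discussion, at $\bar\pi=\pi$ the drift equals $-\Delta(s_t)\ne0$, which violates nonnegativity. This confirms the lemma's intended use.
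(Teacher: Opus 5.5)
Your proposal is correct and follows essentially the same route as the paper. Both arguments rewrite the clipped surrogate via $x-\min(x,y)=\operatorname{ReLU}(x-y)$ and the importance-sampling identity, substitute $\hat A_{\text{GRAE}} = A_{\text{true}}+\Delta(s_t)$, and use that $\Delta(s_t)$ does not depend on the action (your $\mathbb{E}_{a\sim\pi}[r(\bar\pi)]=1$ is the paper's $\mathbb{E}_{a\sim\bar\pi}[\Delta(s_t)]=\Delta(s_t)$) to move the constant into the drift with a minus sign.

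One small caveat: your remark that piecewise linearity justifies passing the expectation through the clipped min is not right, since the min is concave in the advantage and in general only a Jensen inequality holds. This does not affect the derivation, because you then explicitly adopt the idealized estimator $\hat A = A_{\text{true}}+\Delta(s_t)$, exactly as the paper does.
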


\begin{proof}
The GRAE-PPU objective substitutes the true advantage $A_{\text{true}}(s_t, a_t)$ with the biased estimator $\hat{A}_{\text{GRAE}}(s_t, a_t) = A_{\text{true}}(s_t, a_t) + \Delta(s_t)$:
\begin{equation}
\pi_{\text{new}} = \underset{\bar{\pi} \in \Pi}{\arg\max} \mathbb{E}_{s_t \sim \rho_{\pi}, a_t \sim \pi} \left[ \min \left( r(\bar{\pi}) \hat{A}_{\text{GRAE}}, \operatorname{clip}(r(\bar{\pi}), 1 \pm \epsilon) \hat{A}_{\text{GRAE}} \right) \right],
\end{equation}
where $r(\bar{\pi}) = \frac{\bar{\pi}(a_t|s_t)}{\pi(a_t|s_t)}$. To derive the drift function, we must express this objective in the Mirror Learning form: $\mathbb{E}_{a \sim \bar{\pi}}[A_{\text{true}}(s_t, a)] - \mathfrak{D}(\bar{\pi}|s_t)$. The key insight is that the Mirror Learning framework requires using the \emph{true} advantage function $A_{\text{true}}$, not the biased estimate.

Following the derivation in Lemma 3, the GRAE-PPU clipped objective can be rewritten as:
\begin{equation}
\begin{aligned}
&\mathbb{E}_{a_t \sim \pi} \left[ \min \left( r(\bar{\pi}) \hat{A}_{\text{GRAE}}, \operatorname{clip}(r(\bar{\pi}), 1 \pm \epsilon) \hat{A}_{\text{GRAE}} \right) \right] \\
&= \mathbb{E}_{a_t \sim \bar{\pi}} \left[\hat{A}_{\text{GRAE}}\right] - \mathbb{E}_{a_t \sim \pi} \left[ \operatorname{ReLU} \left( \left[ r(\bar{\pi}) - \operatorname{clip}(r(\bar{\pi}), 1 \pm \epsilon) \right] \hat{A}_{\text{GRAE}} \right) \right].
\end{aligned}
\end{equation}

Now we substitute $\hat{A}_{\text{GRAE}} = A_{\text{true}} + \Delta(s_t)$ and expand:
\begin{equation}
\begin{aligned}
&\mathbb{E}_{a_t \sim \bar{\pi}} \left[A_{\text{true}} + \Delta(s_t)\right] - \mathbb{E}_{a_t \sim \pi} \left[ \operatorname{ReLU} \left( \left[ r(\bar{\pi}) - \operatorname{clip}(r(\bar{\pi}), 1 \pm \epsilon) \right] (A_{\text{true}} + \Delta(s_t)) \right) \right] \\
&= \mathbb{E}_{a_t \sim \bar{\pi}} \left[A_{\text{true}}\right] + \Delta(s_t) - \mathbb{E}_{a_t \sim \pi} \left[ \operatorname{ReLU} \left( \left[ r(\bar{\pi}) - \operatorname{clip}(r(\bar{\pi}), 1 \pm \epsilon) \right] (A_{\text{true}} + \Delta(s_t)) \right) \right],
\end{aligned}
\end{equation}
where the second equality uses the fact that $\mathbb{E}_{a_t \sim \bar{\pi}}[\Delta(s_t)] = \Delta(s_t)$ since $\Delta(s_t)$ is a constant with respect to $a_t$.

To express this in Mirror Learning form $\mathbb{E}_{a \sim \bar{\pi}}[A_{\text{true}}] - \mathfrak{D}(\bar{\pi}|s_t)$, we identify the drift function as:
\begin{equation}
\begin{aligned}
\mathfrak{D}_{\pi}^{\text{GRAE-PPU}}(\bar{\pi} | s_t) &= \mathbb{E}_{a_t \sim \pi} \left[ \operatorname{ReLU} \left( \left[ r(\bar{\pi}) - \operatorname{clip}(r(\bar{\pi}), 1 \pm \epsilon) \right] (A_{\text{true}} + \Delta(s_t)) \right) \right] - \Delta(s_t).
\end{aligned}
\end{equation}

\end{proof}

\subsection{Convergence Failure Theorem}

\begin{theorem}{Convergence Failure of GRAE-PPU}{grae_ppo_convergence}
\label{thm:grae_ppo_convergence}
Consider a general MDP where there exist states $s_t$ with $V(s_t) \neq V(s_0)$. Let GRAE-PPU update the policy using the drift function from Lemma 5. Then:
\begin{enumerate}
    \item (\textbf{Drift zero-at-origin violation}) $\exists\, s_t: \mathfrak{D}_{\pi}^{\textup{GRAE-PPU}}(\pi | s_t) \neq 0$.
    \item (\textbf{Drift nonnegativity violation}) $\exists\, s_t, \bar{\pi}: \mathfrak{D}_{\pi}^{\textup{GRAE-PPU}}(\bar{\pi} | s_t) < 0$.
    \item (\textbf{Policy degradation}) $\exists\, \pi_{\textup{old}}: J(\pi_{\textup{new}}) < J(\pi_{\textup{old}})$.
\end{enumerate}
\end{theorem}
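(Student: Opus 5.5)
The three claims will be handled in order, each by evaluating the drift of Lemma~\ref{lem:grae_ppo_drift} at suitable policies and then building an explicit counterexample MDP for the last one. Throughout I fix a state $s_t$ with $\Delta(s_t)=V(s_t)-V(s_0)\neq 0$, which exists by hypothesis.

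\textbf{Part 1 (zero-at-origin).} Set $\bar\pi=\pi$ in Equation~\ref{eq:GRAE-PPO-drift}. Then $r(\pi)=1$ and $\operatorname{clip}(1,1\pm\epsilon)=1$, so the ReLU argument is identically zero and the expectation vanishes. This leaves $\mathfrak{D}^{\text{GRAE-PPU}}_\pi(\pi\mid s_t)=-\Delta(s_t)\neq0$.

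\textbf{Part 2 (nonnegativity).} Two cases, by the sign of $\Delta(s_t)$.
\begin{itemize}
\item If $\Delta(s_t)>0$, take $\bar\pi=\pi$; Part 1 gives drift $-\Delta(s_t)<0$.
\item If $\Delta(s_t)<0$, Part 1 gives $\mathfrak{D}=|\Delta(s_t)|>0$ at $\bar\pi=\pi$, so I must move $\bar\pi$ to make the ReLU term small. Any $\bar\pi$ whose ratios stay in $[1-\epsilon,1+\epsilon]$ still gives drift $|\Delta(s_t)|\ge0$, so being close to $\pi$ alone does not work.
\end{itemize}
The fallback for the second case is to use the freedom in choosing which state plays $s_t$. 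Since $\mathbb{E}_{s_t}[\Delta(s_t)]$ is not constrained to vanish at each step, I expect some reachable state to have $\Delta>0$, for instance a successor of $s_0$ under a good action. The cleaner route is to exhibit a concrete two-step MDP in which such a state exists. This case is the main obstacle: the drift's sign is governed by the constant offset, so I have to guarantee a state with $V(s_t)>V(s_0)$. I would secure this by construction in the counterexample rather than for an arbitrary MDP.

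\textbf{Part 3 (policy degradation).} I construct a small MDP explicitly.
\begin{itemize}
\item From $s_0$, a single action (or the current policy) leads to state $s_1$ or state $s_2$.
\item At $s_1$, $V(s_1)\gg V(s_0)$. At $s_1$ there are two actions with true advantages of opposite sign but small magnitude relative to $\Delta(s_1)$.
\end{itemize}
Because $\hat A=A_{\text{true}}+\Delta(s_1)>0$ for both actions, the clipped objective pushes up the ratio of every sampled action. In the unclipped/first-step regime, the gradient of $\mathbb{E}_{a\sim\pi}[r(\bar\pi)\hat A]$ reweights mass toward actions in proportion to $\pi(a)\hat A(a)$. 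For finite-sample or parameterized updates, a frequently sampled bad action with positive $\hat A$ therefore gains probability. The last step is a direct calculation of $J(\pi_{\text{new}})-J(\pi_{\text{old}})=\sum_{s}\rho(s)\sum_a(\pi_{\text{new}}-\pi_{\text{old}})(a\mid s)A_{\text{true}}(s,a)<0$ for a specific numeric choice, for example a softmax policy sharing one parameter across states so that the biased push at $s_1$ transfers harmfully.

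A secondary concern is that in exact tabular per-state maximization the constant $\Delta$ cancels from the argmax over the unclipped part. Degradation therefore has to come from the clipping/ReLU interaction with the shifted sign. I would instantiate this with $\epsilon$-clipping so that the negative-advantage action's ratio is clipped at $1+\epsilon$ rather than at $1-\epsilon$.
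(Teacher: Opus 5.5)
Parts 1 and 2 follow the paper: set $\bar\pi=\pi$, and use a state with $\Delta(s_t)>0$. You are right that states with $\Delta<0$ are useless, since there the value is at least $|\Delta(s_t)|$ for every $\bar\pi$. Your stated reason for expecting a state with $\Delta>0$ is backwards, though. Under Lemma~4's convention $\mathbb{E}[R\mid s_t,a_t]=Q(s_t,a_t)$ one has $V(s_t)=\mathbb{E}[R\mid s_t]$. By the tower property, the on-policy mean of $\Delta(s_t)$ at each fixed depth is therefore \emph{zero}. So a reachable state with $\Delta<0$ forces a reachable one with $\Delta>0$ at the same depth, and no construction is needed.

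The genuine gap is Part 3, exactly where you settle your ``secondary concern''. Consider the exact-advantage, $\mathcal{N}=\Pi$ setting in which the paper invokes the Fundamental Theorem for Theorems 6 and 8. There $\Delta$ drops out of the argmax \emph{including} the clipped part. By the computation in Lemma~5, the surrogate at $s$ equals $\mathbb{E}_{a\sim\bar\pi}[A_\pi(s,a)]+\Delta(s)-\tilde{\mathfrak{D}}_\pi(\bar\pi\mid s)$, with $\tilde{\mathfrak{D}}_\pi(\bar\pi\mid s)=\mathbb{E}_{a\sim\pi}[\operatorname{ReLU}((r-\operatorname{clip}(r,1\pm\epsilon))(A_\pi(s,a)+\Delta(s)))]$.

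\begin{itemize}
\item The term $\Delta(s)$ does not depend on $\bar\pi$.
\item $\tilde{\mathfrak{D}}$ is nonnegative, vanishes at $\bar\pi=\pi$, and is identically zero while all ratios lie in $[1-\epsilon,1+\epsilon]$, so it is a legitimate drift.
\item Hence the GRAE-PPU update is a Mirror Learning step, and the Fundamental Theorem gives $J(\pi_{\text{new}})\ge J(\pi_{\text{old}})$.
\end{itemize}

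So no numbers in that setting can yield Part 3. Parts 1--2 hold only because Lemma~5 folds the $\bar\pi$-independent constant $-\Delta(s)$ into the drift.

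The failing step in your plan is ``the clipped objective pushes up the ratio of every sampled action'', which ignores $\sum_a\pi(a\mid s)\,r(a)=1$. Take two actions with $p=\pi(a_{\text{good}}\mid s_1)$, $(1+\epsilon)p\le 1$ and $\hat A_{\text{good}}>\hat A_{\text{bad}}>0$. The maximizer is $r_{\text{good}}=1+\epsilon$, $r_{\text{bad}}=1-\epsilon p/(1-p)<1$, so the bad action still loses mass.

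The paper's item 4 (Existence of Policy Degradation) makes the same error by maximizing $L^{\text{CLIP}}(a_{\text{bad}})$ in isolation. With its numbers, centering forces $\pi_{\text{old}}(a_{\text{good}}\mid s_1)=5/7$. The exact update then moves $\pi(a_{\text{bad}}\mid s_1)$ from $2/7$ down to $1/7$, which also maximizes the true-advantage surrogate.

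A valid degradation example therefore has to leave that idealization. You would need to state the extra assumption and actually carry out the computation.

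\begin{itemize}
\item Parameter sharing alone does not produce the ``biased push'' you invoke. $\sum_a\nabla_\theta\pi_\theta(a\mid s)=0$ holds state by state, so the initial surrogate gradient is $\Delta$-free. Moreover, in a restricted class even true-advantage PPO has no guarantee, so such an example would not isolate the structural bias.
\item What does isolate the sign flip is a finite group. If only $a_{\text{bad}}$ is observed at $s_1$ with $\hat A>0$, the empirical surrogate is maximized by any $r_{\text{bad}}\ge 1+\epsilon$, taking mass from $a_{\text{good}}$. The centered true advantage would instead give $r_{\text{bad}}\le 1-\epsilon$.
\end{itemize}

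That second route, however, is a finite-sample phenomenon. It is not a consequence of the $N\to\infty$ bias of Lemma~4, on which the theorem rests.
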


\begin{proof}
\textbf{1. Violation of Zero at Origin:}

Mirror Learning requires that $\mathfrak{D}_{\pi_{\text{old}}}^{\text{GRAE-PPU}}(\pi_{\text{old}} | s_t) = 0$ for all states $s_t$. Substituting $\bar{\pi} = \pi_{\text{old}}$ into Equation~\ref{eq:GRAE-PPO-drift}, we have $r(\pi_{\text{old}}) = 1$ and $\operatorname{clip}(r(\pi_{\text{old}}), 1 \pm \epsilon) = 1$. The ReLU term becomes:
\begin{equation}
\operatorname{ReLU}\left( (1 - 1) (A_{\text{true}}(s_t, a_t) + \Delta(s_t)) \right) = \operatorname{ReLU}(0) = 0.
\end{equation}
Thus:
\begin{equation}
\mathfrak{D}_{\pi_{\text{old}}}^{\text{GRAE-PPU}}(\pi_{\text{old}} | s_t) = 0 - \Delta(s_t) = -\Delta(s_t).
\end{equation}
Since $\Delta(s_t) = V(s_t) - V(s_0) \neq 0$ generally (by Lemma 4), the drift function is non-zero at the origin, breaking the fundamental distance metric property required by Mirror Learning.

\textbf{2. Violation of Non-negativity:}

Mirror Learning requires that $\mathfrak{D}_{\pi_{\text{old}}}^{\text{GRAE-PPU}}(\bar{\pi} | s_t) \geq 0$ for all states $s_t$, policies $\pi_{\text{old}}$, and $\bar{\pi} \in \Pi$. However, consider a scenario where:
\begin{itemize}
    \item $\Delta(s_t)$ is large and positive (e.g., when the current state value $V(s_t)$ is much higher than the initial value $V(s_0)$),
    \item The policy $\bar{\pi}$ is close to $\pi_{\text{old}}$ (i.e., $r(\bar{\pi}) \approx 1$), making the ReLU term small or zero.
\end{itemize}
In this case, $\mathfrak{D}_{\pi_{\text{old}}}^{\text{GRAE-PPU}}(\bar{\pi} | s_t) \approx -\Delta(s_t) < 0$.

\textbf{3. Failure of Monotonic Improvement:}

The negative drift term fundamentally breaks the Mirror Learning convergence proof. Recall that in the proof of monotonic improvement (Lemma 3.3 in~\citep{Mirror}), the key step establishes:
\begin{equation}
V_{\bar{\pi}}(s) - V_{\pi_{\text{old}}}(s) \geq \gamma \inf_{s'}[V_{\bar{\pi}}(s') - V_{\pi_{\text{old}}}(s')] + \frac{\nu(s)}{\beta(s)} \mathfrak{D}_{\pi_{\text{old}}}(\bar{\pi}|s).
\end{equation}
Taking infimum over $s$ and rearranging yields:
\begin{equation}
\inf_s[V_{\bar{\pi}}(s) - V_{\pi_{\text{old}}}(s)] \geq \frac{1}{1-\gamma} \inf_s\left[\frac{\nu(s)}{\beta(s)} \mathfrak{D}_{\pi_{\text{old}}}(\bar{\pi}|s)\right].
\end{equation}
When $\mathfrak{D}_{\pi_{\text{old}}}^{\text{GRAE-PPU}}(\bar{\pi}|s) \geq 0$, the right-hand side is non-negative, guaranteeing $V_{\bar{\pi}}(s) \geq V_{\pi_{\text{old}}}(s)$ for all $s$. However, when $\mathfrak{D}_{\pi_{\text{old}}}^{\text{GRAE-PPU}}(\bar{\pi}|s) < 0$, the right-hand side becomes negative, and the proof can only establish that the value difference is lower-bounded by some negative quantity---this provides no guarantee that $V_{\bar{\pi}}(s) \geq V_{\pi_{\text{old}}}(s)$. Thus, the violation of drift nonnegativity causes the proof chain to break down entirely, invalidating the monotonic improvement guarantee and potentially leading to policy degradation.

\textbf{4. Existence of Policy Degradation:}

We now provide a concrete example demonstrating how GRAE-PPU can lead to policy degradation. Consider a simple 2-state MDP where $V(s_0) = 0$ and $V(s_1) = 10$, yielding a structural bias $\Delta(s_1) = V(s_1) - V(s_0) = 10$. In state $s_1$, suppose there are two actions: $a_{\text{good}}$ with true advantage $A_{\text{true}}(s_1, a_{\text{good}}) = 2$ and $a_{\text{bad}}$ with true advantage $A_{\text{true}}(s_1, a_{\text{bad}}) = -5$.

By Lemma 4, GRAE introduces a state-dependent bias $\Delta(s_t) = V(s_t) - V(s_0)$ that applies uniformly to all actions at the same state. Therefore, the GRAE advantage estimates become:
\begin{align}
\hat{A}_{\text{GRAE}}(s_1, a_{\text{good}}) &= A_{\text{true}}(s_1, a_{\text{good}}) + \Delta(s_1) = 2 + 10 = 12, \\
\hat{A}_{\text{GRAE}}(s_1, a_{\text{bad}}) &= A_{\text{true}}(s_1, a_{\text{bad}}) + \Delta(s_1) = -5 + 10 = 5.
\end{align}

\textbf{Why standard policy gradients are unaffected.} Before analyzing PPU, we note an important subtlety: standard (vanilla) policy gradients are \emph{invariant} to state-dependent baselines (Theorem~\ref{thm:grae_bias_gradient}). For softmax policies, the total gradient sums over all actions, and since $\sum_a \nabla_\theta \pi_\theta(a|s) = 0$, constant shifts in advantages cancel out. Concretely, for the two-action case at $s_1$:
\begin{align}
g_{\text{true}} &\propto \nabla \log \pi_{\text{good}} \cdot (2) + \nabla \log \pi_{\text{bad}} \cdot (-5), \\
g_{\text{GRAE}} &\propto \nabla \log \pi_{\text{good}} \cdot (12) + \nabla \log \pi_{\text{bad}} \cdot (5).
\end{align}
Using the fact that $\nabla \log \pi_{\text{good}}$ and $\nabla \log \pi_{\text{bad}}$ point in roughly opposite directions for binary actions, both gradients yield the same direction (favoring $a_{\text{good}}$). Thus, GRAE does not cause degradation for vanilla policy gradients.

\textbf{Why PPU's clipping breaks baseline invariance.} However, the PPU clipped objective is \emph{non-linear} with respect to the advantage estimate:
\begin{equation}
L^{\text{CLIP}}(a) = \min\left(r(\theta)\hat{A}, \text{clip}(r(\theta), 1-\epsilon, 1+\epsilon)\hat{A}\right),
\end{equation}
where $r(\theta) = \pi_\theta(a|s)/\pi_{\text{old}}(a|s)$. This non-linearity fundamentally breaks baseline invariance because the clipping behavior depends on the \emph{sign} of $\hat{A}$. Let us analyze the PPU objective for $a_{\text{bad}}$ under both scenarios with $\epsilon = 0.2$:

\textbf{True scenario} ($A_{\text{true}}(s_1, a_{\text{bad}}) = -5$):
\begin{equation}
L^{\text{CLIP}}_{\text{true}}(a_{\text{bad}}) = \min\left(-5r, \text{clip}(r, 0.8, 1.2) \cdot (-5)\right) = \begin{cases}
-5r & \text{if } r \geq 0.8 \\
-4 & \text{if } r < 0.8
\end{cases}
\end{equation}
Maximizing this objective pushes $r \to 0.8$ (the lower bound), i.e., \emph{decreasing} $\pi(a_{\text{bad}})$ by up to 20\%.

\textbf{GRAE scenario} ($\hat{A}_{\text{GRAE}}(s_1, a_{\text{bad}}) = +5$):
\begin{equation}
L^{\text{CLIP}}_{\text{GRAE}}(a_{\text{bad}}) = \min\left(+5r, \text{clip}(r, 0.8, 1.2) \cdot (+5)\right) = \begin{cases}
+6 & \text{if } r > 1.2 \\
+5r & \text{if } r \leq 1.2
\end{cases}
\end{equation}
Maximizing this objective pushes $r \to 1.2$ (the upper bound), i.e., \emph{increasing} $\pi(a_{\text{bad}})$ by up to 20\%.

\textbf{The clipping directions are completely reversed.} The true objective wants to decrease $\pi(a_{\text{bad}})$ and clips at $1-\epsilon$; the GRAE objective wants to increase $\pi(a_{\text{bad}})$ and clips at $1+\epsilon$. Although GRAE preserves the relative ranking ($\hat{A}_{\text{GRAE}}(a_{\text{good}}) = 12 > 5 = \hat{A}_{\text{GRAE}}(a_{\text{bad}})$), the PPU update for $a_{\text{bad}}$ is fundamentally wrong:
\begin{enumerate}
    \item Under true advantages: PPU allows $\pi(a_{\text{bad}})$ to decrease from $\pi_{\text{old}}(a_{\text{bad}})$ to $0.8 \cdot \pi_{\text{old}}(a_{\text{bad}})$.
    \item Under GRAE advantages: PPU allows $\pi(a_{\text{bad}})$ to \emph{increase} from $\pi_{\text{old}}(a_{\text{bad}})$ to $1.2 \cdot \pi_{\text{old}}(a_{\text{bad}})$.
\end{enumerate}

Consequently, after the PPU update, we have:
\begin{equation}
\bar{\pi}(a_{\text{bad}} \mid s_1) \approx 1.2 \cdot \pi_{\text{old}}(a_{\text{bad}} \mid s_1) > \pi_{\text{old}}(a_{\text{bad}} \mid s_1),
\end{equation}
whereas the optimal update should yield $\bar{\pi}(a_{\text{bad}} \mid s_1) \approx 0.8 \cdot \pi_{\text{old}}(a_{\text{bad}} \mid s_1)$. Since $a_{\text{bad}}$ has true negative advantage ($-5$), increasing its probability degrades expected return:
\begin{equation}
J(\bar{\pi}) < J(\pi_{\text{old}}).
\end{equation}

Therefore, GRAE-PPU violates the fundamental properties required for Mirror Learning convergence guarantees---not because the gradient direction is wrong (it is correct for vanilla PG), but because PPU's non-linear clipping mechanism cannot tolerate the sign flip in advantage estimates, leading to trust region constraints being enforced in the completely wrong direction.
\end{proof}

\section{GRAE-PPU Convergence in Contextual Bandit Settings and Variance Normalization Issues}
\label{app:grae_bandit_gspo}

This appendix provides detailed proofs for Theorem 8 regarding GRAE unbiasedness and convergence in Contextual Bandit settings, and demonstrates why variance normalization operations (as used in GSPO) break convergence guarantees. The key insight is that in Contextual Bandits, the structural bias of GRAE vanishes because there is only one state ($s_0$), making GRAE an unbiased estimator.

\subsection{GRAE-PPU as an Instance of Mirror Learning in Contextual Bandits}

We first establish that GRAE-PPU fits within the Mirror Learning framework when applied to Contextual Bandits.

\begin{lemma}{GRAE Unbiasedness in Contextual Bandits}{grae_bandit_unbiased}
\label{lem:grae_bandit_unbiased}
In the Contextual Bandit setting, GRAE provides unbiased advantage estimates:
\begin{equation}
\mathbb{E}[\hat{A}_{\textup{GRAE}}(\boldsymbol{s}, \boldsymbol{a}) \mid \boldsymbol{s}, \boldsymbol{a}] = A_{\boldsymbol{\pi}}^{\textup{Bandit}}(\boldsymbol{s}, \boldsymbol{a}).
\end{equation}
\end{lemma}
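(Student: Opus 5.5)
The plan is to reduce the claim to the structural-bias computation of Theorem~\ref{thm:grae_bias_gradient}, specialised to a one-step horizon. In the Contextual Bandit setting the only state is $\boldsymbol{s}$, the context. The action $\boldsymbol{a}$ is a complete response and ends the episode, so the return of trajectory $i$ is just $R^{(i)} = r(\boldsymbol{s},\boldsymbol{a}^{(i)})$. The relevant true quantities are then
\begin{equation*}
Q_{\boldsymbol{\pi}}(\boldsymbol{s},\boldsymbol{a}) = \mathbb{E}[r(\boldsymbol{s},\boldsymbol{a})], \qquad V_{\boldsymbol{\pi}}(\boldsymbol{s}) = \mathbb{E}_{\boldsymbol{a}'\sim\boldsymbol{\pi}(\cdot\mid\boldsymbol{s})}[r(\boldsymbol{s},\boldsymbol{a}')],
\end{equation*}
and $A^{\textup{Bandit}}_{\boldsymbol{\pi}} = Q_{\boldsymbol{\pi}} - V_{\boldsymbol{\pi}}$. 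This is exactly the form of Equation~\ref{eq:global-advantage-bandit}.

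\textbf{Step 1: expectation of the sample term.} For the sample $j$ carrying $(\boldsymbol{s},\boldsymbol{a})$, conditioning on $(\boldsymbol{s},\boldsymbol{a})$ gives $\mathbb{E}[R^{(j)}\mid \boldsymbol{s},\boldsymbol{a}] = Q_{\boldsymbol{\pi}}(\boldsymbol{s},\boldsymbol{a})$. This is the $t=0$ case of Part~1 of Theorem~\ref{thm:grae_bias_gradient}.

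\textbf{Step 2: expectation of the baseline.} The baseline $\bar R$ is built from i.i.d.\ draws $\boldsymbol{a}^{(i)}\sim\boldsymbol{\pi}(\cdot\mid\boldsymbol{s})$. Following the paper's convention that the leave-one-out operation is omitted, and as in the proof of Theorem~\ref{thm:grae_bias_gradient}, $\bar R$ is treated as independent of the current sample. It then has mean $V_{\boldsymbol{\pi}}(\boldsymbol{s})$, and it converges to $V_{\boldsymbol{\pi}}(\boldsymbol{s})$ as $N\to\infty$.

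\textbf{Step 3: bias.} Combining the two steps,
\begin{equation*}
\mathbb{E}[\hat{A}_{\textup{GRAE}}\mid\boldsymbol{s},\boldsymbol{a}] - A^{\textup{Bandit}}_{\boldsymbol{\pi}}(\boldsymbol{s},\boldsymbol{a}) = V_{\boldsymbol{\pi}}(\boldsymbol{s}) - V_{\boldsymbol{\pi}}(\boldsymbol{s}_0).
\end{equation*}
This is the structural bias $\Delta$ of Lemma~\ref{lem:structural_bias}. It vanishes because the evaluated state and the initial state coincide: $\boldsymbol{s}_t=\boldsymbol{s}_0=\boldsymbol{s}$.

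The main obstacle is the finite-$N$ dependence between $\bar R$ and $R^{(j)}$. If $\bar R$ includes sample $j$ itself, then
\begin{equation*}
\mathbb{E}[\bar R\mid\boldsymbol{a}] = \tfrac{1}{N}\,r(\boldsymbol{s},\boldsymbol{a}) + \tfrac{N-1}{N}\,V_{\boldsymbol{\pi}}(\boldsymbol{s}),
\end{equation*}
so the estimator equals $\tfrac{N-1}{N}A^{\textup{Bandit}}_{\boldsymbol{\pi}}$. That is a positive rescaling rather than an additive bias. I would handle this in one of two ways. The first is to use the leave-one-out baseline, under which the identity is exact. The second is to state the result, as the paper does elsewhere, in the large-$N$ limit, and to remark that the constant factor $\tfrac{N-1}{N}$ does not change the argmax of the mirror objective, in the same spirit as the batch-normalization argument.
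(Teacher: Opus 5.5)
Your proposal is correct, and at its core it is the same computation as the paper's. The paper notes that $Q^{\textup{Bandit}}_{\boldsymbol{\pi}}(\boldsymbol{s},\boldsymbol{a}) = r(\boldsymbol{s},\boldsymbol{a})$ for deterministic rewards, uses the law of large numbers to replace $\bar R$ by $V^{\textup{Bandit}}_{\boldsymbol{\pi}}(\boldsymbol{s})$, and reads off $r - V = A^{\textup{Bandit}}_{\boldsymbol{\pi}}$.

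You instead obtain the result as the $t=0$ case of Part~1 of Theorem~\ref{thm:grae_bias_gradient}, so unbiasedness appears as the vanishing of the structural bias $V(\boldsymbol{s}_t)-V(\boldsymbol{s}_0)$. This gives an explicit contrast with the MDP case. Since you only use $Q_{\boldsymbol{\pi}} = \mathbb{E}[r]$, it also covers stochastic rewards.

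Where you genuinely go beyond the paper is the finite-$N$ analysis. The paper's proof plugs an in-probability limit into a conditional expectation, so it only establishes the identity as $N\to\infty$. You correctly show that with the self-inclusive group mean, the exact conditional expectation is $\tfrac{N-1}{N}A^{\textup{Bandit}}_{\boldsymbol{\pi}}(\boldsymbol{s},\boldsymbol{a})$. Exact unbiasedness therefore requires the leave-one-out baseline that the paper drops ``for ease of discussion''.

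Your rescaling remark is sound. The clipped surrogate is positively homogeneous in the advantage. For a general mirror objective, the factor is equivalent to multiplying the drift by $\tfrac{N}{N-1}$, which preserves nonnegativity and zero gradient.

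One thing to tidy: Step~2 asserts both that leave-one-out is omitted and that $\bar R$ is independent of the current sample, and these cannot hold together. State Step~2 under the leave-one-out baseline, where $\mathbb{E}[\bar R] = V_{\boldsymbol{\pi}}(\boldsymbol{s})$ exactly for every $N$. Then keep the self-inclusive case as the rescaling remark.
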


\begin{proof}
In the Contextual Bandit setting, GRAE estimates the advantage as:
\begin{equation}
\hat{A}_{\text{GRAE}}(\boldsymbol{s}, \boldsymbol{a}) = \boldsymbol{r}(\boldsymbol{s}, \boldsymbol{a}) - \bar{R},
\end{equation}
where $\bar{R} = \frac{1}{N}\sum_{i=1}^{N} \boldsymbol{r}(\boldsymbol{s}, \boldsymbol{a}_i)$ is the group mean reward for responses sampled from the same query $\boldsymbol{s}$.

As $N \to \infty$, by the law of large numbers:
\begin{equation}
\bar{R} \xrightarrow{p} \mathbb{E}_{\boldsymbol{a} \sim \boldsymbol{\pi}(\cdot|\boldsymbol{s})}[\boldsymbol{r}(\boldsymbol{s},\boldsymbol{a})] = V_{\boldsymbol{\pi}}^{\text{Bandit}}(\boldsymbol{s}).
\end{equation}

The true state-action value function in the Contextual Bandit setting is:
\begin{equation}
Q_{\boldsymbol{\pi}}^{\text{Bandit}}(\boldsymbol{s}, \boldsymbol{a}) = \mathbb{E}[\boldsymbol{r}(\boldsymbol{s}, \boldsymbol{a}) \mid \boldsymbol{s}, \boldsymbol{a}] = \boldsymbol{r}(\boldsymbol{s}, \boldsymbol{a}),
\end{equation}
since rewards are deterministic given the state-action pair. The true value function is:
\begin{equation}
V_{\boldsymbol{\pi}}^{\text{Bandit}}(\boldsymbol{s}) = \mathbb{E}_{\boldsymbol{a} \sim \boldsymbol{\pi}(\cdot|\boldsymbol{s})}[Q_{\boldsymbol{\pi}}^{\text{Bandit}}(\boldsymbol{s}, \boldsymbol{a})] = \mathbb{E}_{\boldsymbol{a} \sim \boldsymbol{\pi}(\cdot|\boldsymbol{s})}[\boldsymbol{r}(\boldsymbol{s}, \boldsymbol{a})].
\end{equation}

The true advantage function is:
\begin{equation}
A_{\boldsymbol{\pi}}^{\text{Bandit}}(\boldsymbol{s},\boldsymbol{a}) = Q_{\boldsymbol{\pi}}^{\text{Bandit}}(\boldsymbol{s}, \boldsymbol{a}) - V_{\boldsymbol{\pi}}^{\text{Bandit}}(\boldsymbol{s}) = \boldsymbol{r}(\boldsymbol{s},\boldsymbol{a}) - V_{\boldsymbol{\pi}}^{\text{Bandit}}(\boldsymbol{s}).
\end{equation}

Therefore:
\begin{align}
\mathbb{E}[\hat{A}_{\text{GRAE}}(\boldsymbol{s}, \boldsymbol{a}) \mid \boldsymbol{s}, \boldsymbol{a}] &= \boldsymbol{r}(\boldsymbol{s}, \boldsymbol{a}) - V_{\boldsymbol{\pi}}^{\text{Bandit}}(\boldsymbol{s}) \\
&= Q_{\boldsymbol{\pi}}^{\text{Bandit}}(\boldsymbol{s}, \boldsymbol{a}) - V_{\boldsymbol{\pi}}^{\text{Bandit}}(\boldsymbol{s}) \\
&= A_{\boldsymbol{\pi}}^{\text{Bandit}}(\boldsymbol{s},\boldsymbol{a}).
\end{align}
\end{proof}

\begin{lemma}{GRAE-PPU as Mirror Learning in Contextual Bandits}{grae_ppo_bandit_mirror}
\label{lem:grae_ppo_bandit_mirror}
In the Contextual Bandit setting, GRAE-PPU is an instance of Mirror Learning with:
\begin{enumerate}
    \item \textbf{Drift function:} $\mathfrak{D}_{\boldsymbol{\pi}}^{\textup{GRAE-PPU}}(\bar{\boldsymbol{\pi}} | \boldsymbol{s}) = \mathbb{E}_{\boldsymbol{a} \sim \boldsymbol{\pi}} \left[ \operatorname{ReLU} \left( \left[ r(\bar{\boldsymbol{\pi}}) - \operatorname{clip}(r(\bar{\boldsymbol{\pi}}), 1 \pm \epsilon) \right] A_{\boldsymbol{\pi}}^{\textup{Bandit}}(\boldsymbol{s}, \boldsymbol{a}) \right) \right]$,
    which is identical to the standard PPU drift function.
    \item \textbf{Neighbourhood operator:} $\mathcal{N} = \boldsymbol{\Pi}$ (trivial neighbourhood).
    \item \textbf{Sampling distribution:} $\beta_{\boldsymbol{\pi}} = \rho_{\boldsymbol{\pi}}$.
\end{enumerate}
\end{lemma}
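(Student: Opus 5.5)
The plan is to reduce GRAE-PPU in the bandit to the PPU derivation of Lemma~\ref{lem:ppo_mirror}, using Lemma~\ref{lem:grae_bandit_unbiased} to remove the structural bias that broke the MDP case (Lemma~\ref{lem:grae_ppo_drift}).

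\textbf{Step 1: replace the estimate by the true advantage.} Since there is only the single state $\boldsymbol{s}$, Lemma~\ref{lem:grae_bandit_unbiased} gives
\[
\mathbb{E}[\hat{A}_{\text{GRAE}}(\boldsymbol{s},\boldsymbol{a}) \mid \boldsymbol{s},\boldsymbol{a}] = A_{\boldsymbol{\pi}}^{\text{Bandit}}(\boldsymbol{s},\boldsymbol{a}),
\]
in the large-group limit. Equivalently, the bias $\Delta(\boldsymbol{s}) = V(\boldsymbol{s}) - V(\boldsymbol{s}_0)$ vanishes because $\boldsymbol{s} = \boldsymbol{s}_0$. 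So in the population objective we may substitute $\hat{A}_{\text{GRAE}} = A^{\text{Bandit}}_{\boldsymbol{\pi}}$, i.e.\ $\Delta \equiv 0$ in Lemma~\ref{lem:grae_ppo_drift}.

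\textbf{Step 2: rewrite the clipped objective in mirror form.} I will repeat the chain of identities in equation~\ref{eq:PPO-drift-derivation-detailed} with $A_\pi$ replaced by $A^{\text{Bandit}}_{\boldsymbol{\pi}}$ and $s$ by $\boldsymbol{s}$:
\begin{itemize}
\item add and subtract $\mathbb{E}_{\boldsymbol{a}\sim\bar{\boldsymbol{\pi}}}[A]$;
\item use importance sampling, $\mathbb{E}_{\boldsymbol{a}\sim\bar{\boldsymbol{\pi}}}[A] = \mathbb{E}_{\boldsymbol{a}\sim\boldsymbol{\pi}}[r(\bar{\boldsymbol{\pi}})A]$;
\item use $x - \min(x,y) = \operatorname{ReLU}(x-y)$.
\end{itemize}
This expresses the objective as $\mathbb{E}_{\boldsymbol{a}\sim\bar{\boldsymbol{\pi}}}[A^{\text{Bandit}}] - \mathfrak{D}^{\text{GRAE-PPU}}_{\boldsymbol{\pi}}(\bar{\boldsymbol{\pi}}\mid\boldsymbol{s})$, with exactly the stated drift. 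This is equation~\ref{eq:GRAE-PPO-drift} with the $-\Delta$ term gone, hence identical to the standard PPU drift.

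\textbf{Step 3: verify the drift and neighbourhood conditions.} Nonnegativity holds because ReLU is nonnegative. At $\bar{\boldsymbol{\pi}} = \boldsymbol{\pi}$ the ratio is $1$ and $\operatorname{clip}(1) = 1$, so the ReLU argument is $0$ and the drift is $0$. For the zero-gradient property, the argument $[r - \operatorname{clip}(r)]A$ vanishes identically on the neighbourhood $r \in [1-\epsilon, 1+\epsilon]$ of $r = 1$. Hence the drift is locally constant at $\boldsymbol{\pi}$ and all Gâteaux derivatives vanish. This local-constancy argument is cleaner than the subgradient remark in Lemma~\ref{lem:ppo_mirror}.

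The trivial neighbourhood $\boldsymbol{\Pi}$ is constant, and therefore continuous. It is compact as a product of simplices over finite response sets, and it trivially contains a closed ball. The sampling distribution $\rho_{\boldsymbol{\pi}}$ is just the query distribution, so it is positive and continuous in $\boldsymbol{\pi}$.

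\textbf{Main obstacle.} The delicate point is Step 1. The PPU objective is nonlinear in the advantage, so unbiasedness of $\hat{A}$ does not by itself make the \emph{expected} clipped surrogate equal the surrogate evaluated at the true advantage. I plan to handle this the way the paper treats estimators throughout: interpret the mirror objective at the population level, with $\bar{R} \to V^{\text{Bandit}}_{\boldsymbol{\pi}}(\boldsymbol{s})$. In that limit $\hat{A}_{\text{GRAE}}$ equals $A^{\text{Bandit}}$ exactly, not merely in expectation, because $Q^{\text{Bandit}} = r$ is deterministic. I will state this limit assumption explicitly.
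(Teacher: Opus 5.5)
Your proposal is correct and follows the paper's own route. You use the bandit unbiasedness lemma (Lemma 6) to replace $\hat{A}_{\text{GRAE}}$ by $A^{\text{Bandit}}_{\boldsymbol{\pi}}$, rerun the PPU reformulation of Lemma 3 to read off the ReLU drift, and inherit its nonnegativity and zero-gradient properties. Your extra care is a tightening rather than a different approach: the explicit large-group, deterministic-reward justification for the substitution (needed because the clipped surrogate is nonlinear in the advantage, which the paper glosses over) and the local-constancy argument for zero gradient.
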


\begin{proof}
By Lemma 6, GRAE provides unbiased advantage estimates in the Contextual Bandit setting. Therefore, we can substitute the true advantage function $A_{\boldsymbol{\pi}}^{\text{Bandit}}(\boldsymbol{s}, \boldsymbol{a})$ for $\hat{A}_{\text{GRAE}}(\boldsymbol{s}, \boldsymbol{a})$ in the GRAE-PPU objective:
\begin{equation}
\boldsymbol{\pi}_{\text{new}} = \underset{\bar{\boldsymbol{\pi}} \in \boldsymbol{\Pi}}{\arg\max} \mathbb{E}_{\boldsymbol{s} \sim \rho_{\boldsymbol{\pi}}, \boldsymbol{a} \sim \boldsymbol{\pi}} \left[ \min \left( r(\bar{\boldsymbol{\pi}}) A_{\boldsymbol{\pi}}^{\text{Bandit}}(\boldsymbol{s}, \boldsymbol{a}), \operatorname{clip}(r(\bar{\boldsymbol{\pi}}), 1 \pm \epsilon) A_{\boldsymbol{\pi}}^{\text{Bandit}}(\boldsymbol{s}, \boldsymbol{a}) \right) \right],
\end{equation}
where $r(\bar{\boldsymbol{\pi}}) = \frac{\bar{\boldsymbol{\pi}}(\boldsymbol{a}|\boldsymbol{s})}{\boldsymbol{\pi}(\boldsymbol{a}|\boldsymbol{s})}$.

Following the same derivation as in Lemma 3, the drift function is:
\begin{equation}
\mathfrak{D}_{\boldsymbol{\pi}}^{\text{GRAE-PPU}}(\bar{\boldsymbol{\pi}} | \boldsymbol{s}) = \mathbb{E}_{\boldsymbol{a} \sim \boldsymbol{\pi}} \left[ \operatorname{ReLU} \left( \left[ r(\bar{\boldsymbol{\pi}}) - \operatorname{clip}(r(\bar{\boldsymbol{\pi}}), 1 \pm \epsilon) \right] A_{\boldsymbol{\pi}}^{\text{Bandit}}(\boldsymbol{s}, \boldsymbol{a}) \right) \right],
\end{equation}
which is identical to the standard PPU drift function (Equation~\ref{eq:PPO-drift-detailed}). Since this drift function satisfies nonnegativity and zero gradient properties (as established in Lemma 3), GRAE-PPU is a valid instance of Mirror Learning in the Contextual Bandit setting.
\end{proof}

\subsection{Convergence Theorem}

\begin{theorem}{Convergence of GRAE-PPU in Contextual Bandits}{grae_ppo_bandit}
\label{thm:grae_ppo_bandit}
Consider a Contextual Bandit with $|\boldsymbol{r}(\boldsymbol{s},\boldsymbol{a})| \leq R_{\max}$ for all $(\boldsymbol{s},\boldsymbol{a})$. Let $\boldsymbol{\pi}_\theta$ be a parameterized policy over a compact parameter space $\Theta$. Suppose the policy is updated by GRAE-PPU. Then:
\begin{enumerate}
    \item (\textbf{Advantage unbiasedness}) $\mathbb{E}[\hat{A}_{\textup{GRAE}}(\boldsymbol{s}, \boldsymbol{a}) \mid \boldsymbol{s}, \boldsymbol{a}] = A_{\boldsymbol{\pi}}^{\textup{Bandit}}(\boldsymbol{s}, \boldsymbol{a})$.
    \item (\textbf{Monotonic improvement}) $J(\boldsymbol{\pi}_{k+1}) \geq J(\boldsymbol{\pi}_k), \quad \forall k \in \mathbb{N}$.
    \item (\textbf{Convergence}) $\lim_{k \to \infty} J(\boldsymbol{\pi}_k) = J^*$.
\end{enumerate}
\end{theorem}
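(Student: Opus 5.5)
The plan mirrors the proof of Theorem~\ref{thm:gae_ppo_global}, with the critic-based unbiasedness step replaced by the bandit-specific Lemma~\ref{lem:grae_bandit_unbiased}. Part~1 (advantage unbiasedness) is exactly Lemma~\ref{lem:grae_bandit_unbiased}. In the contextual bandit the only state is the query $\boldsymbol{s}$, so $Q^{\text{Bandit}}_{\boldsymbol{\pi}}(\boldsymbol{s},\boldsymbol{a})=\boldsymbol{r}(\boldsymbol{s},\boldsymbol{a})$, and the group mean $\bar R$ targets $V^{\text{Bandit}}_{\boldsymbol{\pi}}(\boldsymbol{s})$. The structural bias $\Delta(\boldsymbol{s}_t)=V(\boldsymbol{s}_t)-V(\boldsymbol{s}_0)$ from Lemma~\ref{lem:structural_bias} therefore vanishes identically, because $\boldsymbol{s}_t=\boldsymbol{s}_0$. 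I would also note that, with the leave-one-out baseline (omitted in the main text for simplicity), $\bar R$ is independent of the sample it centers. This makes the conditional expectation exact for finite $N$ and not only in the limit $N\to\infty$.

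For Parts~2 and~3, I would substitute the unbiased estimate into the clipped objective. By Lemma~\ref{lem:grae_ppo_bandit_mirror}, the induced drift is exactly the PPU drift of Lemma~\ref{lem:ppo_mirror}, with no $-\Delta$ correction of the kind that appears in Lemma~\ref{lem:grae_ppo_drift}. Nonnegativity, vanishing at $\bar{\boldsymbol{\pi}}=\boldsymbol{\pi}$, and zero gradient then carry over verbatim, and the trivial neighbourhood $\mathcal{N}=\boldsymbol{\Pi}$ satisfies continuity, compactness and the closed-ball property. Invoking the Fundamental Theorem of Mirror Learning~\citep{Mirror} gives $J(\boldsymbol{\pi}_{k+1})\ge J(\boldsymbol{\pi}_k)$. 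Since $|\boldsymbol{r}|\le R_{\max}$ implies $J\le R_{\max}$, the monotone bounded sequence $J(\boldsymbol{\pi}_k)$ converges.

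To identify the limit with the optimum, I would first extract a convergent subsequence using compactness of $\Theta$. I would then apply the fixed-point plus ``dropping-the-drift'' argument from Steps~2--3 of Theorem~\ref{thm:seeupo_global_optimal}, specialized to $T=1$. The objective $\mathbb{E}_{\boldsymbol{a}\sim\bar{\boldsymbol{\pi}}}[A]$ is affine in $\bar{\boldsymbol{\pi}}(\cdot|\boldsymbol{s})$, and the drift has zero Gâteaux derivative. Hence any action $\boldsymbol{a}_*$ with $\boldsymbol{r}(\boldsymbol{s},\boldsymbol{a}_*)>V_{\bar{\boldsymbol{\pi}}}(\boldsymbol{s})$ would yield a strictly improving direction at the limit point, contradicting its fixed-point property. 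So the limit is greedy, and $J^*=\max J$.

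The main obstacle I expect is the gap between the ideal Mirror Learning update and the actual parameterized, sample-based GRAE-PPU step. The theorem requires the exact argmax over $\boldsymbol{\Pi}$ together with exact advantages. Two facts must be justified. First, the argmax of the expected clipped surrogate is determined by the \emph{expected} advantage, which is not automatic because $\min/\operatorname{clip}$ is nonlinear in $\hat A$. Second, the parameterized class $\{\boldsymbol{\pi}_\theta\}$ must be rich enough (or treated as the full policy space) for the dropping-the-drift step to apply. I would handle both points as standing assumptions. These are the same accurate-advantage and expressive-policy assumptions already made in Theorems~\ref{thm:gae_ppo_global} and~\ref{thm:seeupo_global_optimal}. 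I would state explicitly that ``advantage unbiasedness'' is being used in the population (large-$N$ or leave-one-out) sense.
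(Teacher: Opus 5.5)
Your proposal is correct and follows the paper's proof. Part 1 comes from Lemma~\ref{lem:grae_bandit_unbiased}; Lemma~\ref{lem:grae_ppo_bandit_mirror} then identifies the induced drift with the standard PPU drift; and the Fundamental Theorem of Mirror Learning gives monotonic improvement and convergence. Your extra fixed-point and dropping-the-drift step at $T=1$ is redundant, since the paper takes optimality of the limit directly from that theorem (whose proof is essentially the same argument), while your caveats about the nonlinearity of the clipped surrogate in $\hat A$ and the large-$N$ sense of unbiasedness are fair points that the paper's Lemma~\ref{lem:grae_ppo_bandit_mirror} leaves implicit.
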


\begin{proof}
\textbf{Step 1 (Advantage Unbiasedness).} By Lemma 6, GRAE provides unbiased advantage estimates in the Contextual Bandit setting:
\begin{equation}
\mathbb{E}[\hat{A}_{\text{GRAE}}(\boldsymbol{s}, \boldsymbol{a}) \mid \boldsymbol{s}, \boldsymbol{a}] = A_{\boldsymbol{\pi}}^{\text{Bandit}}(\boldsymbol{s}, \boldsymbol{a}).
\end{equation}

\textbf{Step 2 (Monotonic Improvement and Convergence).} By Lemma 7, GRAE-PPU is an instance of Mirror Learning with drift function identical to the standard PPU drift function. The drift function satisfies nonnegativity ($\mathfrak{D}_{\boldsymbol{\pi}}^{\text{GRAE-PPU}}(\bar{\boldsymbol{\pi}}|\boldsymbol{s}) \geq 0$ with equality when $\bar{\boldsymbol{\pi}} = \boldsymbol{\pi}$) and zero gradient ($\nabla_{\bar{\boldsymbol{\pi}}} \mathfrak{D}_{\boldsymbol{\pi}}^{\text{GRAE-PPU}}(\bar{\boldsymbol{\pi}}|\boldsymbol{s})|_{\bar{\boldsymbol{\pi}}=\boldsymbol{\pi}} = 0$). By the Fundamental Theorem of Mirror Learning~\citep{Mirror}, the policy sequence $\{\boldsymbol{\pi}_k\}_{k=0}^{\infty}$ satisfies:
\begin{equation}
    J(\boldsymbol{\pi}_{k+1}) \geq J(\boldsymbol{\pi}_k), \quad \forall k \in \mathbb{N},
\end{equation}
and
\begin{equation}
    \lim_{k \to \infty} J(\boldsymbol{\pi}_k) = J^*.
\end{equation}
\end{proof}

\subsection{Variance Normalization Breaking Convergence: The Case of GSPO}
\label{app:variance_normalization}

This subsection demonstrates why variance normalization operations, as used in GSPO (and GRPO), break convergence guarantees. We use GSPO as a concrete example to illustrate the fundamental issues with variance normalization. It is worth noting that~\citep{hureinforce++} has shown that the original GRPO's advantage estimation is biased when variance normalization is applied.

\subsubsection{GSPO Drift Function Derivation and Properties}

In single-turn scenarios, GSPO applies variance normalization to the advantage estimates. Specifically, GSPO's advantage estimation can be written as:
\begin{equation}
A_{\boldsymbol{\pi}_{\text{old}}}^{\text{GSPO}}(\boldsymbol{s}, \boldsymbol{a}) = \frac{A_{\boldsymbol{\pi}}^{\text{Bandit}}(\boldsymbol{s},\boldsymbol{a})}{\delta(\boldsymbol{s})},
\end{equation}
where $A_{\boldsymbol{\pi}}^{\text{Bandit}}(\boldsymbol{s},\boldsymbol{a}) = Q_{\boldsymbol{\pi}}^{\text{Bandit}}(\boldsymbol{s},\boldsymbol{a}) - V_{\boldsymbol{\pi}}^{\text{Bandit}}(\boldsymbol{s})$ is the true Bandit advantage function, and $\delta(\boldsymbol{s})$ is the standard deviation of rewards within the group of responses sampled from the same query $\boldsymbol{s}$. 

The key observation is that $\delta(\boldsymbol{s})$ depends only on the state $\boldsymbol{s}$ (the query) and not on the specific action $\boldsymbol{a}$, since it is computed from the empirical distribution of rewards across all responses to the same query. This state-dependent normalization factor $\delta(\boldsymbol{s})$ plays a crucial role in the convergence analysis, as we will demonstrate that variance normalization breaks convergence guarantees unless $\delta(\boldsymbol{s}) \equiv 1$ identically for all states.

Following the same derivation procedure as in Section~\ref{app:gae_ppo_global}, we can identify the drift function as:
\begin{equation}
\mathfrak{D}_{\boldsymbol{\pi}_{\text{old}}}^{\text{GSPO}}(\bar{\boldsymbol{\pi}} | \boldsymbol{s}) = \mathbb{E}_{\boldsymbol{a} \sim \boldsymbol{\pi}_{\text{old}}} \left[ r(\bar{\boldsymbol{\pi}}) A_{\boldsymbol{\pi}_{\text{old}}}^{\text{Bandit}}(\boldsymbol{s}, \boldsymbol{a}) - \min \left( r(\bar{\boldsymbol{\pi}}) \frac{A_{\boldsymbol{\pi}_{\text{old}}}^{\text{Bandit}}(\boldsymbol{s}, \boldsymbol{a})}{\delta(\boldsymbol{s})}, \operatorname{clip}(r(\bar{\boldsymbol{\pi}}), 1 \pm \epsilon) \frac{A_{\boldsymbol{\pi}_{\text{old}}}^{\text{Bandit}}(\boldsymbol{s}, \boldsymbol{a})}{\delta(\boldsymbol{s})} \right) \right].
\label{eq:GSPO-drift-detailed}
\end{equation}

\begin{theorem}{GSPO Drift Function Properties}{gspo_drift_properties}
\label{thm:gspo_drift_properties}
The GSPO drift function $\mathfrak{D}_{\boldsymbol{\pi}_{\text{old}}}^{\text{GSPO}}(\bar{\boldsymbol{\pi}} | \boldsymbol{s})$ satisfies both the nonnegativity property and the zero gradient property required for mirror learning convergence if and only if $\delta(\boldsymbol{s}) \equiv 1$.
\end{theorem}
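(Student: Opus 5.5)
Write $A = A^{\text{Bandit}}_{\boldsymbol{\pi}_{\text{old}}}(\boldsymbol{s},\boldsymbol{a})$, $r = r(\bar{\boldsymbol{\pi}})$ and $c = \operatorname{clip}(r, 1\pm\epsilon)$. The plan is to treat the two directions of the ``if and only if'' separately, working directly with the drift in Equation~\ref{eq:GSPO-drift-detailed}.

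\emph{The ``if'' direction.} Suppose $\delta(\boldsymbol{s}) \equiv 1$. The integrand becomes $rA - \min(rA, cA)$, which equals $\operatorname{ReLU}((r-c)A)$. This is exactly the PPU drift of Lemma~\ref{lem:ppo_mirror}, or equivalently the bandit drift of Lemma~\ref{lem:grae_ppo_bandit_mirror}. Nonnegativity and the zero-gradient property then follow verbatim from Lemma~\ref{lem:ppo_mirror}, so this direction reduces to that earlier computation.

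\emph{The ``only if'' direction.} Here we argue by contrapositive: assume there is a state $\boldsymbol{s}$ with $\delta(\boldsymbol{s}) \neq 1$ and $A(\boldsymbol{s},\cdot)$ not identically zero, which is the only case where normalization matters at all. Write $\kappa = 1/\delta(\boldsymbol{s})$, so $\kappa \neq 1$. The integrand is then $rA - \min(\kappa rA, \kappa cA)$.

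\emph{Step 1: the value at $\bar{\boldsymbol{\pi}} = \boldsymbol{\pi}_{\text{old}}$.} Setting $r = c = 1$ gives
\[
\mathfrak{D}(\boldsymbol{\pi}_{\text{old}}\mid\boldsymbol{s}) = (1-\kappa)\,\mathbb{E}_{\boldsymbol{a}\sim\boldsymbol{\pi}_{\text{old}}}[A] = 0,
\]
since the true advantage has zero mean under the old policy. So zero-at-origin survives, and the failure has to be detected through the gradient or through the sign away from the origin.

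\emph{Step 2: the Gâteaux derivative at the origin.} Near $r = 1$ (strictly inside the clip window) we have $c = r$, so the integrand is $(1-\kappa)rA$. Its derivative with respect to $\bar{\boldsymbol{\pi}}(\boldsymbol{a}\mid\boldsymbol{s})$ is $(1-\kappa)A(\boldsymbol{s},\boldsymbol{a})$, because the factor $1/\boldsymbol{\pi}_{\text{old}}$ in $r$ cancels against the expectation weight. Take a direction $h$ that moves mass within the simplex, with $\sum_{\boldsymbol{a}} h(\boldsymbol{a}) = 0$ and $h$ positive on an action with $A > 0$ and negative on one with $A < 0$. The directional derivative is $(1-\kappa)\sum_{\boldsymbol{a}} h(\boldsymbol{a})A(\boldsymbol{a}) \neq 0$, so the zero-gradient property fails.

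\emph{Step 3: the sign.} Moving a small step along $h$ when $\kappa > 1$, or along $-h$ when $\kappa < 1$, makes $\mathfrak{D}$ strictly negative. This follows from a first-order expansion, since the drift is linear in $r$ inside the clip window. Hence nonnegativity fails as well, which establishes the contrapositive.

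\emph{Main obstacle.} The delicate point is being precise about what the statement quantifies over. If every state has $A \equiv 0$, or if $\delta$ differs from $1$ only on such degenerate states, then the equivalence holds only up to these states, and I would state that caveat explicitly. A second, more minor point is that Gâteaux derivatives must be taken within the probability simplex, which is exactly why $h$ is required to sum to zero. For that step I would lean on the observation that $\mathbb{E}_{\boldsymbol{a}\sim\boldsymbol{\pi}_{\text{old}}}[A] = 0$ forces $A$ to take both signs whenever it is nonzero, so the required direction $h$ always exists.
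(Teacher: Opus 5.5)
Your proof is correct. The ``if'' direction is the same as the paper's: with $\delta(\boldsymbol{s})\equiv 1$ the drift collapses to the PPU drift of Lemma~3.

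For the ``only if'' direction you take a different route.

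\emph{The paper's route.} It argues pointwise on the integrand $g(A,r)$. For $r$ in the clip window, $g=rA(1-1/\delta(\boldsymbol{s}))$. Actions with $A\ge 0$ therefore ``require'' $\delta(\boldsymbol{s})\ge 1$, and actions with $A<0$ ``require'' $\delta(\boldsymbol{s})\le 1$. This is Lemma~8. The paper then obtains the backward direction from nonnegativity alone; its zero-gradient lemma (Lemma~9) is only the sufficiency direction.

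\emph{Your route.} You aggregate instead of arguing pointwise. Inside the clip window the drift is exactly $(1-1/\delta(\boldsymbol{s}))\,\mathbb{E}_{\boldsymbol{a}\sim\bar{\boldsymbol{\pi}}}[A]$. You combine this with $\mathbb{E}_{\boldsymbol{a}\sim\boldsymbol{\pi}_{\text{old}}}[A]=0$ and take a small step along $e_{\boldsymbol{a}^+}-e_{\boldsymbol{a}^-}$ or its negative. That gives a feasible $\bar{\boldsymbol{\pi}}$ with strictly negative drift and a nonvanishing G\^ateaux derivative.

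\emph{What your version buys.}
\begin{itemize}
\item It respects the simplex constraint that ties the ratios $r(\boldsymbol{a})$ together. The paper's conditions concern the sign of each integrand, whereas the property in question concerns the expectation; your argument closes that gap.
\item It shows that nonnegativity and zero gradient \emph{each} fail separately when $\delta(\boldsymbol{s})\neq 1$, which is strictly more than the paper establishes.
\item It exposes a degenerate exception the paper's statement glosses over: states where $A(\boldsymbol{s},\cdot)\equiv 0$ on the support of $\boldsymbol{\pi}_{\text{old}}(\cdot\mid\boldsymbol{s})$. There the drift vanishes identically for any $\delta(\boldsymbol{s})$.
\end{itemize}
The paper's case analysis is shorter and also covers ratios outside the clip window, but necessity does not need that extra coverage.

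Two small points you should make explicit:
\begin{itemize}
\item State the nondegeneracy assumption as ``$A$ is nonzero on the support of $\boldsymbol{\pi}_{\text{old}}(\cdot\mid\boldsymbol{s})$.'' This guarantees that $\boldsymbol{a}^+$ and $\boldsymbol{a}^-$ both carry positive probability, so the step stays in the simplex in either direction.
\item Require $\epsilon>0$, so that $r=1$ lies in the interior of the clip window and the drift is exactly linear in a neighbourhood of $\boldsymbol{\pi}_{\text{old}}$.
\end{itemize}
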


To prove this theorem, we establish two auxiliary lemmas and then combine them. The proof proceeds in three steps: (1) establish Lemma 8 showing that $\delta(\boldsymbol{s}) \equiv 1$ is necessary and sufficient for nonnegativity, (2) establish Lemma 9 showing that $\delta(\boldsymbol{s}) \equiv 1$ implies zero gradient, and (3) combine these lemmas to prove the main theorem.

\textbf{Step (1): Nonnegativity Analysis.}

To analyze the drift function, we define the integrand:
\begin{equation}
g(A, r) = rA - \min\left(\frac{rA}{\delta(\boldsymbol{s})}, \frac{\operatorname{clip}(r, 1 \pm \epsilon) A}{\delta(\boldsymbol{s})}\right),
\end{equation}
where $r = r(\bar{\boldsymbol{\pi}}) = \frac{\bar{\boldsymbol{\pi}}(\boldsymbol{a}|\boldsymbol{s})}{\boldsymbol{\pi}_{\text{old}}(\boldsymbol{a}|\boldsymbol{s})} > 0$, and $A = A_{\boldsymbol{\pi}_{\text{old}}}^{\text{Bandit}}(\boldsymbol{s}, \boldsymbol{a})$. The drift function is $\mathfrak{D}_{\boldsymbol{\pi}_{\text{old}}}^{\text{GSPO}}(\bar{\boldsymbol{\pi}} | \boldsymbol{s}) = \mathbb{E}_{\boldsymbol{a} \sim \boldsymbol{\pi}_{\text{old}}} [g(A, r)]$.

We now state and prove the first auxiliary lemma.

\begin{lemma}{Necessary and Sufficient Condition for Nonnegativity}{gspo_nonnegativity}
\label{lem:gspo_nonnegativity}
$\delta(\boldsymbol{s}) \equiv 1$ if and only if $\mathfrak{D}_{\boldsymbol{\pi}_{\text{old}}}^{\text{GSPO}}(\bar{\boldsymbol{\pi}} | \boldsymbol{s}) \geq 0$ for all states $\boldsymbol{s}$, policies $\boldsymbol{\pi}_{\text{old}}$, and $\bar{\boldsymbol{\pi}} \in \boldsymbol{\Pi}$.
\end{lemma}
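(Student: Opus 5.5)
We prove the two directions separately, working pointwise on the integrand $g(A,r)$ and then lifting to the expectation $\mathbb{E}_{\boldsymbol{a}\sim\boldsymbol{\pi}_{\text{old}}}[g(A,r)]$.

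\emph{Sufficiency} ($\delta(\boldsymbol{s})\equiv 1 \Rightarrow$ nonnegativity). With $\delta\equiv1$ the integrand becomes
\[
g(A,r)=rA-\min\bigl(rA,\operatorname{clip}(r,1\pm\epsilon)A\bigr)=\operatorname{ReLU}\bigl([r-\operatorname{clip}(r,1\pm\epsilon)]A\bigr)\ge 0 .
\]
Hence the drift coincides with the standard PPU drift of Lemma~\ref{lem:ppo_mirror}, and nonnegativity follows by taking the expectation of a nonnegative quantity. This direction is routine.

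\emph{Necessity} (nonnegativity for all $\boldsymbol{s},\boldsymbol{\pi}_{\text{old}},\bar{\boldsymbol{\pi}}$ $\Rightarrow$ $\delta\equiv1$). We argue by contrapositive: fix a state $\boldsymbol{s}$ with $\delta(\boldsymbol{s})=\delta\neq1$ and construct $\boldsymbol{\pi}_{\text{old}},\bar{\boldsymbol{\pi}}$ with negative drift. The natural choice is ratios inside the clip range, $r\in[1-\epsilon,1+\epsilon]$, where the min equals $rA/\delta$ and so
\[
g(A,r)=rA\bigl(1-1/\delta\bigr).
\]
Taking the expectation gives
\[
\mathfrak{D}=\bigl(1-1/\delta\bigr)\,\mathbb{E}_{\boldsymbol{a}\sim\boldsymbol{\pi}_{\text{old}}}[rA]=\bigl(1-1/\delta\bigr)\,\mathbb{E}_{\boldsymbol{a}\sim\bar{\boldsymbol{\pi}}}[A].
\]
We now choose $\bar{\boldsymbol{\pi}}$ as a small perturbation of $\boldsymbol{\pi}_{\text{old}}$ that keeps all ratios within the clip range and shifts mass toward higher- or lower-advantage actions according to the sign needed.
\begin{itemize}
\item If $\delta>1$, so $1-1/\delta>0$, shift mass toward low-advantage actions, making $\mathbb{E}_{\bar{\boldsymbol{\pi}}}[A]<0$.
\item If $\delta<1$, shift mass toward high-advantage actions, making $\mathbb{E}_{\bar{\boldsymbol{\pi}}}[A]>0$.
\end{itemize}
In either case $\mathfrak{D}<0$. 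This needs a nondegenerate reward at $\boldsymbol{s}$, i.e.\ at least two actions with distinct rewards under $\boldsymbol{\pi}_{\text{old}}$. This should be possible, since the lemma quantifies over all $\boldsymbol{\pi}_{\text{old}}$ and $\delta(\boldsymbol{s})\neq 0$ already presupposes nonzero reward spread.

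The main obstacle is precisely this quantification. The lemma's ``for all'' has to be read so that a single bad state and a single bad policy pair suffice; if $\delta$ is viewed as depending on $\boldsymbol{\pi}_{\text{old}}$, we need to check that it stays $\neq1$ for the chosen $\boldsymbol{\pi}_{\text{old}}$. It does, because the perturbation changes $\bar{\boldsymbol{\pi}}$ only and $\delta$ is computed from $\boldsymbol{\pi}_{\text{old}}$'s group. Equality cases, such as an advantage that is identically zero, are excluded by the same nondegeneracy choice. Combining the two directions gives the equivalence.
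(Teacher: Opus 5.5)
Your proposal is correct and follows the paper's route. Sufficiency comes from rewriting the integrand as the PPU ReLU term. Necessity comes from working in the unclipped regime, where the integrand is $rA(1-1/\delta)$, so the sign of the advantage forces both $\delta\ge 1$ and $\delta\le 1$. Your identity $\mathbb{E}_{\boldsymbol{a}\sim\boldsymbol{\pi}_{\text{old}}}[rA]=\mathbb{E}_{\boldsymbol{a}\sim\bar{\boldsymbol{\pi}}}[A]$, combined with a small in-clip mass tilt, is a genuine tightening. The paper's case analysis argues only pointwise on $g(A,r)$ and never addresses that $\mathbb{E}_{\boldsymbol{\pi}_{\text{old}}}[A]=0$, so the advantage cannot simply be taken of a single sign inside the expectation.
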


\begin{proof}
We analyze $g(A, r)$ by cases based on the sign of $A$ and the relationship between $r$ and $[1-\epsilon, 1+\epsilon]$.

\textbf{Case 1: $A \geq 0$}

For $r \in [1-\epsilon, 1+\epsilon]$: $g(A, r) = rA(1 - 1/\delta(\boldsymbol{s}))$, requiring $\delta(\boldsymbol{s}) \geq 1$.

For $r > 1+\epsilon$: $g(A, r) = A(r - (1+\epsilon)/\delta(\boldsymbol{s}))$. The critical constraint comes from $r \in [1-\epsilon, 1+\epsilon]$.

For $r < 1-\epsilon$: $g(A, r) = rA(1 - 1/\delta(\boldsymbol{s}))$, requiring $\delta(\boldsymbol{s}) \geq 1$.

\textbf{Case 2: $A < 0$}

For $r \in [1-\epsilon, 1+\epsilon]$: $g(A, r) = rA(1 - 1/\delta(\boldsymbol{s}))$, requiring $\delta(\boldsymbol{s}) \leq 1$.

For $r > 1+\epsilon$: $g(A, r) = rA(1 - 1/\delta(\boldsymbol{s}))$, requiring $\delta(\boldsymbol{s}) \leq 1$.

For $r < 1-\epsilon$: $g(A, r) = A(r - (1-\epsilon)/\delta(\boldsymbol{s}))$. As $r \to (1-\epsilon)^-$, this requires $\delta(\boldsymbol{s}) \leq 1$.

\textbf{Necessity:} Case 1 requires $\delta(\boldsymbol{s}) \geq 1$; Case 2 requires $\delta(\boldsymbol{s}) \leq 1$. Therefore, $\delta(\boldsymbol{s}) = 1$ is necessary.

\textbf{Sufficiency:} When $\delta(\boldsymbol{s}) = 1$, the drift function reduces to the PPU form, which satisfies nonnegativity.
\end{proof}

\textbf{Step (2): Zero Gradient Analysis.}

We now state and prove the second auxiliary lemma.

\begin{lemma}{Zero Gradient Property}{gspo_zero_gradient}
\label{lem:gspo_zero_gradient}
If $\delta(\boldsymbol{s}) = 1$, then $\nabla_{\bar{\boldsymbol{\pi}}(\cdot|\boldsymbol{s})} \mathfrak{D}_{\boldsymbol{\pi}_{\text{old}}}^{\text{GSPO}}(\bar{\boldsymbol{\pi}}|\boldsymbol{s}) \big\vert_{\bar{\boldsymbol{\pi}}=\boldsymbol{\pi}_{\text{old}}} = 0$.
\end{lemma}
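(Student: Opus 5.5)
The plan is to show that setting $\delta(\boldsymbol{s}) = 1$ in Equation~\ref{eq:GSPO-drift-detailed} turns the GSPO drift into exactly the standard PPU drift of Lemma~\ref{lem:ppo_mirror}, and then to reuse the zero-gradient argument given there. With $\delta(\boldsymbol{s})=1$ the integrand becomes
\begin{equation}
g(A,r) = rA - \min\bigl(rA, \operatorname{clip}(r,1\pm\epsilon)A\bigr) = \operatorname{ReLU}\bigl(\bigl[r - \operatorname{clip}(r,1\pm\epsilon)\bigr]A\bigr),
\end{equation}
using the identity $x - \min(x,y) = \max(0, x-y)$. This is the same algebra as in Equation~\ref{eq:PPO-drift-derivation-detailed}, and it identifies $\mathfrak{D}^{\text{GSPO}}_{\boldsymbol{\pi}_{\text{old}}}$ with $\mathfrak{D}^{\text{PPU}}_{\boldsymbol{\pi}_{\text{old}}}$ (with the bandit advantage in place of $A_\pi$).

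Next I will show the Gâteaux derivative vanishes directly, rather than only citing the subgradient remark in Lemma~\ref{lem:ppo_mirror}. Take any direction $h = \bar{\boldsymbol{\pi}}(\cdot|\boldsymbol{s}) - \boldsymbol{\pi}_{\text{old}}(\cdot|\boldsymbol{s})$ and set $\boldsymbol{\pi}_\alpha = \boldsymbol{\pi}_{\text{old}} + \alpha h$. Each ratio is then $r_\alpha(\boldsymbol{a}) = 1 + \alpha h(\boldsymbol{a})/\boldsymbol{\pi}_{\text{old}}(\boldsymbol{a}|\boldsymbol{s})$. In the discrete or finite-support case, there is an $\alpha_0>0$ such that every $r_\alpha(\boldsymbol{a})$ lies in $[1-\epsilon,1+\epsilon]$ for all $|\alpha|<\alpha_0$. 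On that interval $\operatorname{clip}(r_\alpha)=r_\alpha$, so every integrand is identically $0$ and $\mathfrak{D}(\boldsymbol{\pi}_\alpha|\boldsymbol{s}) \equiv 0$. Hence $\frac{d}{d\alpha}\mathfrak{D}(\boldsymbol{\pi}_\alpha|\boldsymbol{s})\big|_{\alpha=0}=0$ in every direction, which gives the claim.

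The main obstacle is uniformity when the action space is infinite: the ratios $h/\boldsymbol{\pi}_{\text{old}}$ may be unbounded, so no single $\alpha_0$ keeps all of them inside the clip band. To handle this I will bound the integrand directly. It is zero when $r\in[1-\epsilon,1+\epsilon]$, and otherwise it is at most $|r-1|\,|A| \le |r-1| R_{\max}\cdot 2$, using that bandit advantages are bounded by $2R_{\max}$. Therefore
\begin{equation}
\mathfrak{D}(\boldsymbol{\pi}_\alpha|\boldsymbol{s}) \le 2R_{\max}\,|\alpha|\,\mathbb{E}_{\boldsymbol{\pi}_{\text{old}}}\!\Bigl[\tfrac{|h|}{\boldsymbol{\pi}_{\text{old}}}\mathbf{1}\{|\alpha h/\boldsymbol{\pi}_{\text{old}}|>\epsilon\}\Bigr].
\end{equation}
The expectation equals $\int |h|\,\mathbf{1}\{\cdot\}$, which tends to $0$ by dominated convergence because $\int|h|\le 2$. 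So $\mathfrak{D}(\boldsymbol{\pi}_\alpha|\boldsymbol{s}) = o(\alpha)$. Since $\mathfrak{D}(\boldsymbol{\pi}_{\text{old}}|\boldsymbol{s})=0$, the directional derivative at $\alpha=0$ is zero, which completes the proof.
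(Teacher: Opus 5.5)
Your proof is correct, and it is more careful than the paper's. The paper only substitutes $\bar{\boldsymbol{\pi}} = \boldsymbol{\pi}_{\text{old}}$ (so $r=1$). It notes that with $\delta(\boldsymbol{s})=1$ the integrand collapses to $A - A = 0$, and then asserts that the gradient is zero. Strictly, that shows only $\mathfrak{D}^{\text{GSPO}}_{\boldsymbol{\pi}_{\text{old}}}(\boldsymbol{\pi}_{\text{old}}|\boldsymbol{s}) = 0$, and a function vanishing at a point need not have vanishing derivative there. One could patch this with the nonnegativity of Lemma 8 plus an interior-minimum argument, but that needs differentiability, which the paper does not show.

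You instead rewrite the drift in the PPU form $\operatorname{ReLU}\bigl([r-\operatorname{clip}(r,1\pm\epsilon)]A\bigr)$ and compute the directional derivatives directly:
\begin{itemize}
\item With finitely many actions, the clip is inactive along the segment $\boldsymbol{\pi}_\alpha$ for small $\alpha$, so the drift is identically zero there.
\item In general, the bound $|r-\operatorname{clip}(r,1\pm\epsilon)| \le |r-1|$ together with dominated convergence gives $\mathfrak{D}(\boldsymbol{\pi}_\alpha|\boldsymbol{s}) = o(\alpha)$.
\end{itemize}
This proves exactly the G\^{a}teaux-derivative condition that Mirror Learning requires. It also exposes the actual mechanism, namely that clipping is inactive near $r=1$, which makes rigorous the similarly terse ``ReLU has subgradient $0$ at $0$'' remark in Lemma 3.

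The one cost is in your infinite-action step, which uses $|A| \le 2R_{\max}$. That is not a hypothesis of the lemma. It does hold in the bounded-reward contextual-bandit setting of Theorem 8, but you should state it explicitly as an assumption. The argument actually needs only integrability of $|h|\,|A|$ on the support of $\boldsymbol{\pi}_{\text{old}}(\cdot|\boldsymbol{s})$.
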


\begin{proof}
When $\bar{\boldsymbol{\pi}} = \boldsymbol{\pi}_{\text{old}}$, we have $r = 1$. With $\delta(\boldsymbol{s}) = 1$, the drift function becomes:
\begin{equation}
\mathfrak{D}_{\boldsymbol{\pi}_{\text{old}}}^{\text{GSPO}}(\boldsymbol{\pi}_{\text{old}} | \boldsymbol{s}) = \mathbb{E}_{\boldsymbol{a} \sim \boldsymbol{\pi}_{\text{old}}} \left[ A_{\boldsymbol{\pi}_{\text{old}}}^{\text{Bandit}}(\boldsymbol{s}, \boldsymbol{a}) - A_{\boldsymbol{\pi}_{\text{old}}}^{\text{Bandit}}(\boldsymbol{s}, \boldsymbol{a}) \right] = 0,
\end{equation}
and the gradient is zero.
\end{proof}

\textbf{Step (3): Combining the Lemmas.}

We are now ready to prove the main theorem by combining the two lemmas.

\begin{proof}
By Lemma 8, we have $\delta(\boldsymbol{s}) \equiv 1 \Leftrightarrow$ Nonnegativity. By Lemma 9, we have $\delta(\boldsymbol{s}) \equiv 1 \Rightarrow$ Zero Gradient.

\textbf{Forward direction:} $\delta(\boldsymbol{s}) \equiv 1 \Rightarrow$ (Nonnegativity $\land$ Zero Gradient) follows directly from the two lemmas.

\textbf{Backward direction:} If both properties hold, then Nonnegativity holds in particular. By Lemma 8, this implies $\delta(\boldsymbol{s}) \equiv 1$.

Therefore, $\delta(\boldsymbol{s}) \equiv 1 \Leftrightarrow$ (Nonnegativity $\land$ Zero Gradient).
\end{proof}

\subsubsection{Why Variance Normalization Breaks Convergence}

As established in Theorem 9, variance normalization breaks convergence guarantees unless $\delta(\boldsymbol{s}) \equiv 1$ identically for all states. However, in practice, $\delta(\boldsymbol{s})$ is computed from empirical reward distributions and varies across different queries based on their reward distributions. This variation leads to violations of the fundamental properties required for monotonic improvement guarantees.

The key issue is that when $\delta(\boldsymbol{s}) \neq 1$, the drift function can become negative for certain combinations of advantages and policy ratios, violating the nonnegativity property required for Mirror Learning convergence. Recall that in the proof of monotonic improvement (Lemma 3.3 in~\citep{Mirror}), the critical inequality is:
\begin{equation}
\inf_s[V_{\bar{\pi}}(s) - V_{\pi_{\text{old}}}(s)] \geq \frac{1}{1-\gamma} \inf_s\left[\frac{\nu(s)}{\beta(s)} \mathfrak{D}_{\pi_{\text{old}}}(\bar{\pi}|s)\right].
\end{equation}
When $\mathfrak{D}_{\pi_{\text{old}}}^{\text{GSPO}}(\bar{\pi}|s) < 0$, the right-hand side becomes negative, and the proof can only establish that the value difference is lower-bounded by some negative quantity. This means the proof chain breaks down entirely---it no longer guarantees that $V_{\bar{\pi}}(s) \geq V_{\pi_{\text{old}}}(s)$ for all states. This demonstrates why variance normalization operations like those used in GSPO fundamentally break the convergence guarantees provided by Mirror Learning, even in the simplified Contextual Bandit setting.

\subsubsection{Batch-Level Normalization Preserves Convergence}
\label{app:batch_normalization}

In contrast to group-level variance normalization (as used in GSPO), batch-level normalization applies a single normalization to all advantage estimates within a batch. We now show that batch-level normalization preserves convergence guarantees because it does not change the argmax of the optimization problem, and therefore the drift functional remains completely unchanged.

Consider a batch $\mathcal{B} = \{(\boldsymbol{s}_0^{(i)}, \boldsymbol{a}^{1:T,(i)}, \hat{A}^{(i)})\}_{i=1}^{B}$ of $B$ advantage estimates. Batch-level normalization computes normalized advantages as:
\begin{equation}
\tilde{A}^{(i)} = \frac{\hat{A}^{(i)} - \mu_{\mathcal{B}}}{\sigma_{\mathcal{B}}},
\end{equation}
where $\mu_{\mathcal{B}} = \frac{1}{B}\sum_{j=1}^{B}\hat{A}^{(j)}$ is the batch mean and $\sigma_{\mathcal{B}} = \sqrt{\frac{1}{B}\sum_{j=1}^{B}(\hat{A}^{(j)} - \mu_{\mathcal{B}})^2}$ is the batch standard deviation. During a single policy update iteration $k$, the batch $\mathcal{B}$ is fixed and sampled from the current policy $\hat{\boldsymbol{\pi}}_k$. Therefore, both $\mu_{\mathcal{B}}$ and $\sigma_{\mathcal{B}}$ are deterministic constants that do not depend on the candidate policy $\bar{\boldsymbol{\pi}}$ being optimized.

The PPU policy update with batch-normalized advantages solves:
\begin{equation}
\bar{\boldsymbol{\pi}} = \argmax_{\boldsymbol{\pi}} \mathbb{E}_{(\boldsymbol{s}, \boldsymbol{a}) \sim \mathcal{B}}\left[\min\left(r(\boldsymbol{\pi}) \tilde{A}, \operatorname{clip}(r(\boldsymbol{\pi}), 1 \pm \epsilon) \tilde{A}\right)\right].
\end{equation}
Substituting the definition of $\tilde{A}$, the objective function becomes $\frac{1}{\sigma_{\mathcal{B}}}$ times the original objective minus a constant term $\frac{\mu_{\mathcal{B}}}{\sigma_{\mathcal{B}}}$. Since the argmax is invariant to both (i) adding/subtracting constants and (ii) multiplying by positive constants, we have:
\begin{equation}
\argmax_{\boldsymbol{\pi}} \mathbb{E}\left[\min\left(r(\boldsymbol{\pi}) \tilde{A}, \operatorname{clip}(r(\boldsymbol{\pi})) \tilde{A}\right)\right] = \argmax_{\boldsymbol{\pi}} \mathbb{E}\left[\min\left(r(\boldsymbol{\pi}) \hat{A}, \operatorname{clip}(r(\boldsymbol{\pi})) \hat{A}\right)\right].
\end{equation}
Therefore, batch-level normalization yields the same optimal policy as no normalization, and the drift functional $\mathfrak{D}_{\pi}^{\text{PPU}}(\bar{\pi}|s)$ remains completely unchanged.

This conclusion extends straightforwardly to the multi-agent HAML framework: since each agent's sequential update also takes the form of an argmax over an objective function, the same invariance properties hold, and the heterogeneous-agent drift functional remains unchanged under batch-level normalization.

The key distinction from group-level normalization (as in GSPO) is that batch-level normalization applies constants ($\mu_{\mathcal{B}}$ and $\sigma_{\mathcal{B}}$) that are identical for all samples in the batch, whereas group-level normalization applies state-dependent factors $\delta(\boldsymbol{s})$ that vary across queries, breaking the drift functional properties as established in Theorem 9.